%% file: main.tex
\documentclass{defaltmlr}
\input{math_commands.tex}

\usepackage{url}
\usepackage{graphicx}
\usepackage{geometry}
\usepackage{booktabs}
\usepackage[T1]{fontenc}
\usepackage{fontawesome}
\usepackage{fancyhdr}
\usepackage{tocloft}
\usepackage{enumitem}
\usepackage{etoc}
\usepackage{titletoc}
\usepackage{tcolorbox}
\usepackage{listings}
\newcommand\DoToC{%
  \startcontents
  \printcontents{}{1}{\noindent \textbf{\large{Table of Contents}}\vskip3pt\vskip5pt}
  \vskip3pt\vskip5pt
}

\usepackage{color}
\usepackage[dvipsnames]{xcolor}
\definecolor{JalapenoRed}{RGB}{183,21,64}
\definecolor{Belize}{RGB}{41,128,185}
\definecolor{Amour}{RGB}{238,82,83}
\usepackage{colortbl}

\usepackage{hyperref}

\usepackage{flushend}
\usepackage{caption}
\usepackage{tabularx}
\usepackage{graphicx}
\usepackage{marvosym}

\usepackage{amsthm}
\usepackage{multirow}
\usepackage{xcolor}
\usepackage{booktabs}
\usepackage{float}
\usepackage{balance}
\usepackage[table]{xcolor}
\setlist[itemize]{noitemsep, topsep=0pt, partopsep=0pt}

\usepackage{xspace}
\usepackage{url}

\usepackage{tocloft}
\usepackage{threeparttable}

\usepackage{nicefrac}       
\usepackage{microtype}      
\usepackage{amsmath,amsfonts,bm}
\usepackage{array}
\usepackage{breqn}
\usepackage{hhline}
\usepackage{algorithm}
\usepackage{algorithmic}
\usepackage{epsfig}
\usepackage{epstopdf}
\usepackage{thmtools}
\usepackage{verbatim}
\usepackage{subcaption}
\usepackage{wrapfig}
\usepackage{lscape}
\usepackage{titletoc}
\usepackage{mathtools}
\usepackage{ulem}
\usepackage{pifont}

\usepackage{bbding}

\usepackage{diagbox}
\usepackage{colortbl}
\usepackage{makecell}
\usepackage{mdframed}

\usepackage{tikz}
\usepackage{tcolorbox}

\RequirePackage{amssymb,bm,latexsym}

\newcommand{\iid}{\textit{i}.\textit{i}.\textit{d}.\xspace}
\newcommand{\ie}{\textit{i}.\textit{e}.\xspace}

\newcommand{\eg}{\textit{e}.\textit{g}.\xspace} 
\newcommand{\wrt}{\textit{w}.\textit{r}.\textit{t}\xspace}
\newcommand{\aka}{\textit{a.k.a.}\xspace}
\newcommand{\versus}{\textit{vs.}\xspace}

\newtheorem{theorem}{Theorem}
\newtheorem{lemma}[theorem]{Lemma}

\newtheorem{corollary}[theorem]{Corollary}
\newtheorem{definition}{Definition}[section]

\newtheorem{assumption}{Assumption}

\newtheorem*{Pro*}{Problem Statement}

\newcommand\sep[4]{{#1} \SEP_{#4} {#2} \given {#3}}

\newcommand{\Prb}{\mathbb{P}}

\newcommand{\oto}{\leftrightarrow}

\newcommand\B[1]{\bm{#1}}
\newcommand\C[1]{\mathcal{#1}}

\newcommand\mathbfsc[1]{\text{\normalfont\scshape#1}}

\newcommand\pasub[2]{\mathbfsc{pa}_{#1}(#2)}

\newcommand\given{\,|\,}

\definecolor{-}{rgb}{0.25,0.41,0.88}
\definecolor{+}{rgb}{0.70,0.13,0.13}

\def\model{\texttt{TICL}\xspace}

\def\cs{\textit{causality sufficiency}\xspace}

\newcommand{\gcmark}{\textcolor[RGB]{18,220,168}{\checkmark}}
\newcommand{\rxmark}{\textcolor[RGB]{202,12,22}{\ding{55}}}
\definecolor{hiddendraw}{RGB}{91,155,213}

\def\Crashes{/}
\def\Timeout{--}

\newcommand{\bluesquare}{
    \tikz[baseline=1.1ex]{
        \node[fill, cyan, opacity=0.1, rounded corners, minimum width=2.5ex, minimum height=2.2ex, anchor=center, shape=rectangle] at (1ex,2ex) {};
    }
}

\newcommand{\greensquare}{
    \tikz[baseline=1.1ex]{
        \node[fill, green, opacity=0.1, rounded corners, minimum width=2.5ex, minimum height=2.2ex, anchor=center, shape=rectangle] at (1ex,2ex) {};
    }
}

\makeatletter
\let\orig@fnsymbol\@fnsymbol
\def\@fnsymbol#1{\ifcase#1\or\relax\else\orig@fnsymbol{#1}\fi}
\makeatother

\title{Test-Time Learning of Causal Structure \\from Interventional Data}
\vspace{4cm}

\author{
\parbox{\textwidth}{
Wei Chen\textsuperscript{$^*$1,2,3}, Rui Ding\textsuperscript{3}, Bojun Huang\textsuperscript{4}, Yang Zhang\textsuperscript{5}, Qiang Fu\textsuperscript{3}, Yuxuan Liang\textsuperscript{2},  Han Shi\textsuperscript{3}, Dongmei Zhang\textsuperscript{3}}
}

\affiliation{\textsuperscript{1}HKUST, \textsuperscript{2}HKUST(GZ), \textsuperscript{3}Microsoft Research,  \textsuperscript{4}Sony Research, \textsuperscript{5}NUS}
\contribution{$^*$Work done during an internship at Microsoft Research.}

\abstract{
Supervised causal learning has shown promise in causal discovery, yet it often struggles with generalization across diverse interventional settings, particularly when intervention targets are unknown. To address this, we propose \model (Test-time Interventional Causal Learning), a novel method that synergizes Test-Time Training with Joint Causal Inference. Specifically, we design a self-augmentation strategy to generate instance-specific training data at test time, effectively avoiding distribution shifts. Furthermore, by integrating joint causal inference, we developed a PC-inspired two-phase supervised learning scheme, which effectively leverages self-augmented training data while ensuring theoretical identifiability. Extensive experiments on bnlearn benchmarks demonstrate \model's superiority in multiple aspects of causal discovery and intervention target detection.
}

\correspondence{onedeanxxx@gmail.com, juding@microsoft.com}
\date{\sffamily Oct 01, 2024}

\begin{document}

\maketitle

\makeatletter
\let\@fnsymbol\orig@fnsymbol
\makeatother
\input{sections/1_introduction}
\input{sections/3_preliminary}

\input{sections/4_methodology}

\input{sections/5_experiments}
\input{sections/6_conclusion}

\clearpage

\definecolor{textgray}{HTML}{6E6E73}
\makeatletter
\newcommand\applefootnote[1]{%
  \begingroup
  \renewcommand\thefootnote{}%
  \renewcommand\@makefntext[1]{\noindent##1}%
  \footnote{#1}%
  \addtocounter{footnote}{-1}%
  \endgroup
}
\makeatother

\bibliography{ref}
\bibliographystyle{plainnat}
\newpage
\appendix

\input{appendix/0_appendix_list}

\end{document}

%% file: math_commands.tex
\newcommand{\intervene}{\mathrm{do}}

\def\1{\bm{1}}

\DeclareMathAlphabet{\mathsfit}{\encodingdefault}{\sfdefault}{m}{sl}
\SetMathAlphabet{\mathsfit}{bold}{\encodingdefault}{\sfdefault}{bx}{n}



%% file: sections/1_introduction.tex
\section{Introduction}

\textit{Causal discovery}, the identification of causal relations from data, underpins modern scientific progress~\citep{reichenbach1991direction}. Recently, Supervised Causal Learning (SCL) has emerged as a promising modeling paradigm for causal discovery~\citep{lorch2022amortized,ke2023neural,ke2023learning,wu2024sample}. By training on synthetic datasets spanning diverse causal structures, SCL models aim to operationalize causal discovery as a structured prediction task where empirical accuracy under realistic conditions is the central objective.

Despite these successes, existing SCL works has primarily focused on \textit{observational data}. However, experimentation remains the gold standard for causality~\citep{hill1952clinical,pearl2009causality}. \textit{Interventional data}, in particular, allows for the identification of more causal relations with fewer assumptions~\citep{hauser2012characterization}. Applying SCL under interventional settings is thus of considerable importance.

Unfortunately, the SCL paradigm faces two main challenges when interventions are involved, particularly if the specific intervention actions are unknown, a common scenario in real world third-party experiments~\citep{eaton2007belief}:

\textit{Challenge \uppercase\expandafter{\romannumeral1}: Versatility.} Different from the observational setting (which enjoys a simple and uniform problem formulation), interventional settings may vary widely, including factors such as known / unknown targets, hard / soft interventions, and single / multiple-variable interventions. This diversity complicates learning versatile models, and existing SCL methods~\citep{lorch2022amortized,ke2023learning} are typically limited to specific settings (\eg, hard + known), limiting their applicability to diverse practical scenarios.
 
\textit{Challenge \uppercase\expandafter{\romannumeral2}: Generalizability.} As a supervised learning approach, SCL methods fundamentally face the core challenge of generalization -- models trained on data following a specific distribution (\eg, one generated by a configured simulator) may perform poorly on real-world test data. The formulation diversity as mentioned above may further exacerbate the generalization issue in the interventional setting.

\begin{figure*}[t!]
  \centering
  \includegraphics[width=1.0\textwidth]{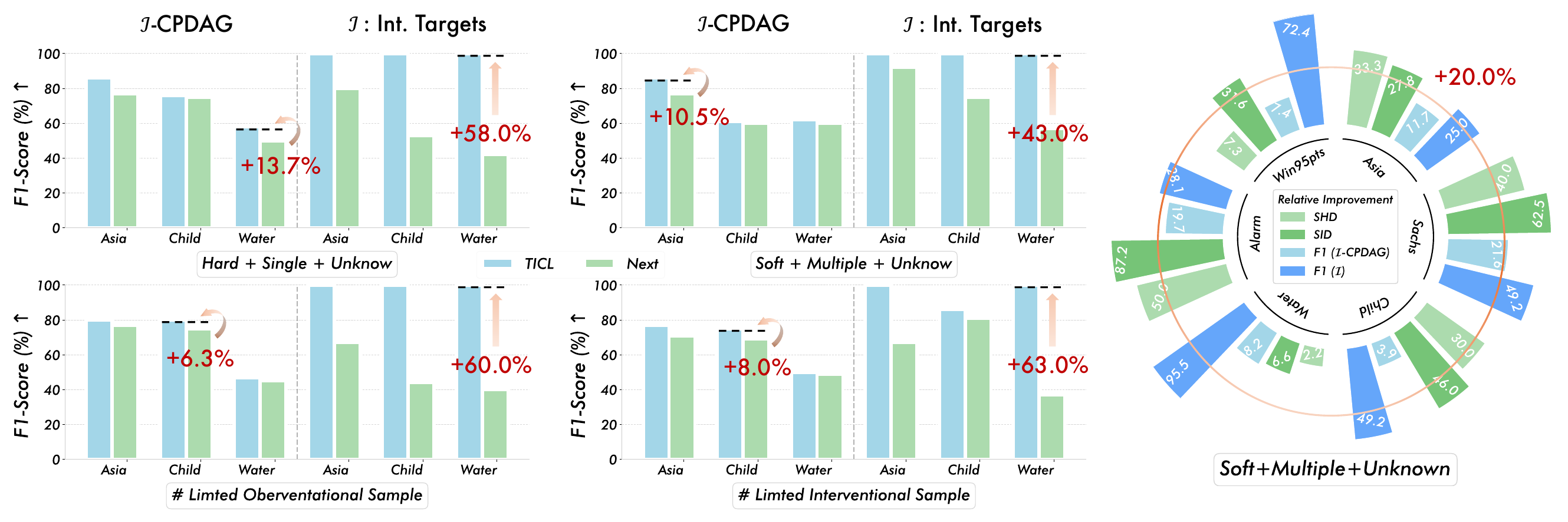}
  \caption{
  (Empirical dominance across interventional SCL tasks). \model consistently outperforms all SoTA methods on both $\mathcal{I}$-CPDAG discovery and intervention targets detection, across diverse intervention families and limited-sample regimes.
  }
  \label{fig.overall}
\end{figure*}

\begin{figure*}[t!]
  \centering
  \includegraphics[width=1.0\textwidth]{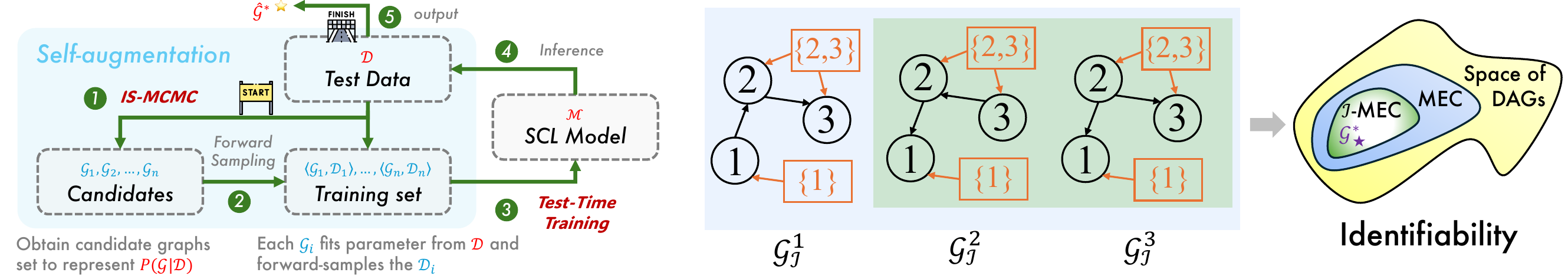}
  \caption{Left: The workflow of test-time learning of causal structure. Right: An identifiability example.}
  \label{fig.motivation}
\end{figure*}

To circumvent the generalization bottleneck, we investigate Test-Time Training (TTT) paradigm~\citep{sun2023test}, an emerging that trains instance-specific models at test/inference time in an on-the-fly manner rather than seeking a single universally generalizable model. Thus, we explore three key questions: (1) When should training data be acquired? (2) What kind of training data is effective for SCL? (3) How can such data be acquired in our setting? Specifically,

\textbf{When:} We identify and exploit the window between accessing test data and performing the actual inference. In this window of timing, we generate free training data via test data and train the model specifically for the final testing.

\textbf{What:} We observe that the \textit{posterior estimation} of causal graphs, $P(\mathcal{G}|\mathcal{D})$, provides highly effective training data. By sampling graphs $(\mathcal{G}_1, \ldots, \mathcal{G}_n)$ from the posterior and generating compatible datasets via forward sampling, we create paired instances $\langle\mathcal{G}_i, \mathcal{D}_i\rangle$ for \textit{self-augmentation} (Fig.~\ref{fig.motivation} Left).

\textbf{How:} We introduce the IS-MCMC algorithm, which constructs a Markov chain over the augmented graph structure space with \textit{intervention constraints} and performs multiple optimizations. This enables efficient sampling from the posterior $P(\mathcal{G}|\mathcal{D})$ to fuel the self-augmentation process.

To address the versatility challenge, we advocate for the adoption of Joint Causal Inference (JCI) framework~\citep{mooij2020joint}. JCI unifies diverse interventional settings by pooling datasets into an augmented representation, allowing algorithms to treat them as observational. While JCI simplifies the input formulation, adapting SCL to this framework still requires to answer two critical questions: (1) What should be the appropriate learning target? (2) How should the learning process be designed? Specifically,

\textbf{What:} We prioritize theoretical identifiability, emphasizing that models should predict identifiable causal structures, \ie, $\mathcal{I}$-CPDAG. This ensures the model only predicts relations that are structurally identifiable from the training data.

\textbf{How:} We introduce the two-phase SCL algorithm. Inspired by the PC algorithm~\citep{spirtes1991algorithm}, we focus on identifying the identifiable components of the $\mathcal{I}$-CPDAG, namely the skeleton and v-structures. This approach ensures asymptotic correctness while promoting systematic feature characterization and improved classification mechanisms.

By incorporating these ideas together, we propose \model, a novel method for interventional causal discovery, focusing on discrete data to illustrate its effectiveness. Specifically, \model establishes a TTT + JCI paradigm, employing a self-augmentation algorithm for training data acquisition and a two-phase SCL algorithm for designing training targets. 

\textbf{Evaluation Highlights}: Figure~\ref{fig.overall} summarizes our main empirical finding: \model establishes a new state-of-the-art for interventional SCL. These improvements are hold consistently across diverse benchmark causal graphs and multiple evaluation criteria. In summary, our key contributions are:

\begin{itemize}[leftmargin=*,itemsep=0.2em]
    \item We propose \model, a novel method for SCL under interventions. Our \model consistently outperforms existing state-of-the-art methods in experiments on two tasks: causal discovery and intervention target detection. 
    \item We introduce a specific test-time training technique to the SCL domain, which (self-)augments the training data by efficient sampling causal graphs from the posterior distribution via an optimized IS-MCMC process.
    \item Our systematic study highlight JCI as a promising direction for unifying interventional settings in SCL. Within this framework, our two-phase learning algorithm provides a concrete solution for what and how to learn. 
\end{itemize}

%% file: sections/3_preliminary.tex
\newcommand{\G}{\mathcal{G}}
\newcommand{\Px}{P_{\mathbf{X}}}
\newcommand{\PA}[1]{pa(X_{#1})}
\newcommand{\D}{\mathcal{D}}
\newcommand{\I}{\mathcal{I}}
\newcommand{\DI}{D_{\mathcal{I}}}

\section{Preliminaries} \label{sec.preliminaries}
\subsection{Interventional Causal Discovery}

A \textit{Causal Graphical Model} (CGM) $\mathcal{M} = <\G, P>$ over $d$ random variables $\mathbf{X}:=\{X_1, \ldots, X_d\}$ comprises: \textit{(i)} a directed acyclic graph (DAG) $\G$ with nodes corresponding to the variables $\mathbf{X}$ and edges encoding direct causal relations between them, and \textit{(ii)} a joint probability distribution $\Px$ that is \textit{Markov compatible} with $\G$, \ie,  $\Px = \prod_{i=1}^{d} P(X_{i} \vert \PA{i})$, where $\PA{i}$ are the parents of $X_{i}$. 

Given an unknown causal model $<\G,P>$, the \textit{Observational Causal Discovery} problem asks to infer about the causal graph $\G$ based on an i.i.d. sample of the joint distribution $\Px$. It is however well known that under the observational setting, in the general case, we can only identify a causal graph up to its Markov Equivalent Class (MEC), even if with the \textit{faithfulness assumption} (that $X \bot_{P} Y \vert Z \Rightarrow X \bot_{\G}Y \vert Z$) and with the Markov compatibility condition 
, where the MEC of the $\G$ is the set of graphs with the same \textit{skeleton} and \textit{v-structure}s with $\G$~\citep{verma1990equivalence}.  

The identifiability limit in the observational setting can be effectively mitigated by conditioning the observations upon interventions, \ie, actions that purposefully perturb the causal model. In general, an intervention can apply to a subset of variables $I \subset \mathbf{X}$, called the \textit{intervention targets}, which can either be a single variable or contain multiple variables. For each target variable $X_i$ in $I$, the intervention replaces the conditional distribution $P(X_{i} \vert \PA{i})$ with a new one: $P^{(I)}(X_{i} \vert \PA{i})$. The joint distribution after the intervention thus becomes $P^{(I)}_{\mathbf{X}}=\prod_{i \notin I} P(X_i|\PA{i}) \prod_{j \in I} P^{(I)}(X_j|\PA{j})$. The intervention is called a \textit{ hard} (\aka \textit{perfect, structural}) \textit{intervention}~\citep{eaton2007exact} if it eliminates the intervention targets' causal dependence on their parents entirely, \ie, if $P^{(I)}(X_{i} \vert \PA{i}) = P^{(I)}(X_{i}), \forall X_i \in I$; otherwise it is called a \textit{soft} (\aka \textit{imperfect, parametric}) \textit{intervention}~\citep{tian2001causal} as it maintains at least part of the original causal dependence. It is an \textit{unknown intervention} if the target set $I$ is unknown. 

The \textit{Interventional Causal Discovery} problem asks to infer about the causal graph $\G$ based on a collection of data samples $\mathcal{D}=\{D_0, D_1, \ldots, D_K\}$, each obtained under a different intervention~\citep{cooper1999causal,hauser2012characterization,brouillard2020differentiable}. Let $\mathcal{I}:=\{I_0, I_1, \ldots, I_K\}$ be the intervention targets for each of the interventions here. $\mathcal{I}$ is called the \textit{intervention family} of the dataset $\mathcal{D}$. It is often useful to obtain $D_0$ as a sample of the observational distribution $\Px$ without any actual intervention~\citep{ke2023neural}, and in this case we denote $I_{0} = \emptyset$. In this paper, we consider the situation that the interventions are unknown, and we need to infer both about the causal graph $\G$ and about the intervention family $\mathcal{I}$ from the given data collection $\mathcal{D}$.

\subsection{Joint Causal Inference Framework}\label{pre_jci}
Our method is based on the JCI framework~\citep{mooij2020joint}, which reduces an interventional causal discovery problem into an observational causal discovery problem over an augmented causal model. The basic idea is to treat a data sample under intervention as an observation under an imposed condition. The data-sample collection $\D$ in the intervention setting can then be seen as a single data sample under a variety of observation conditions. 
More specifically, 

\textit{Augmented Data.} Given data-sample collection $\D=\{D_0, D_1, \ldots, D_K\}$ obtained under intervention family $\I:=\{I_0, I_1, \ldots, I_K\}$, an \textit{augmented data} $\DI$ can be constructed by stacking the $K+1$ data samples in $\D$ together by rows, then appending $K$ columns, each corresponding to a newly added \textit{environment variable} (or \textit{intervention variable}) $X_{I_k}$ which takes binary value and $X_{I_k}=1$ in and only in data points (= rows) originally from $D_k$. Please refer to the data table at the top of Figure.~\ref{fig.framework} for an example.

\textit{Augmented Graph.} Accordingly, an \textit{augmented causal graph} $\mathcal{G}_{\mathcal{I}}$ can be constructed by adding nodes $X_{I_k}$, and adding edges $X_{I_k} \to X_i$ for all $X_i \in I_k$ if the intervention targets $I_k$ are known. In the $\mathcal{G}_{\mathcal{I}}$, nodes corresponding to the original variables $X_i \in \mathbf{X}$ are called \textit{system nodes}, and those corresponding to $X_{I_k}$ are called \textit{environment nodes}. 

In the JCI framework, we first convert data collection $\D$ to augmented data $\DI$, optionally incorporating edges from known interventions regime $\I$, then infer the augmented graph $\mathcal{G}_{\mathcal{I}}$, which encodes both the information of the original causal graph $\G$ over the system variables $\mathbf{X}$, and reveals unknown intervention targets as edges from environment nodes to system nodes are identified. 

Moreover, the causal inference over the augmented graph in the JCI framework may be facilitated by some \textit{a priori} constraints when prior knowledge / assumptions about the interventions are available. For example, \textit{exogeneity} assumption requires no causal edges from any system variable $X_i \in \mathbf{X}$ to environment nodes. Similarly, \textit{complete randomized context} and \textit{generic context} assumption assume no confounding between system and environment nodes, and no causation between environment nodes, respectively. For more details, please refer to the paper~\citep{mooij2020joint}.

\subsection{Identifiability with Intervention Data}\label{sec.preliminaries-3}

Two causal DAGs $\G_1$ and $\G_2$ are indistinguishable under an intervention family $\mathcal{I}$ with $I_0=\emptyset$ if and only if their interventional graphs $\G_{I_k}^{1}$ and $\G_{I_k}^{2}$ have the same skeleton and v-structures for all $I_k\in \mathcal{I}$~\citep{yang2018characterizing}. It is thus clear that with interventional data, we may identify the true causal graph up to a smaller equivalence class -- called the $\mathcal{I}$-MEC.
Fig.~\ref{fig.motivation} (Right) gives a simple example that illustrates how interventions can help with causal identifiability: The three DAGs on the left belong to the same MEC. Considering the intervention family $\mathcal{I}=\{\emptyset,\{1\},\{2,3\}\}$, $G^{1}_{\mathcal{I}}$ is not in the same $\mathcal{I}$-MEC as $G^{2}_{\mathcal{I}}$ and $G^{3}_{\mathcal{I}}$ due to the absence of the v-structure $X_2 \rightarrow X_1 \leftarrow \{1\}$. The schematic on the right further elucidates the relationships among the true DAG $\mathcal{G}^{\star}$, the $\mathcal{I}$-MEC, the MEC, and the space of all possible DAGs.

%% file: sections/4_methodology.tex
\section{Test-Time~Learning~of~Causal~Structure}

In this section, we present the details of \model method (see Figure~\ref{fig.framework}). We first formally summarize the problem and outline its solution, then delve into the two essential components of our \model: \textit{self-augmentation strategy} and \textit{two-phase supervised causal structure learning}.

\begin{figure*}[htbp!]
\begin{center}
\includegraphics[width=1.0\linewidth]{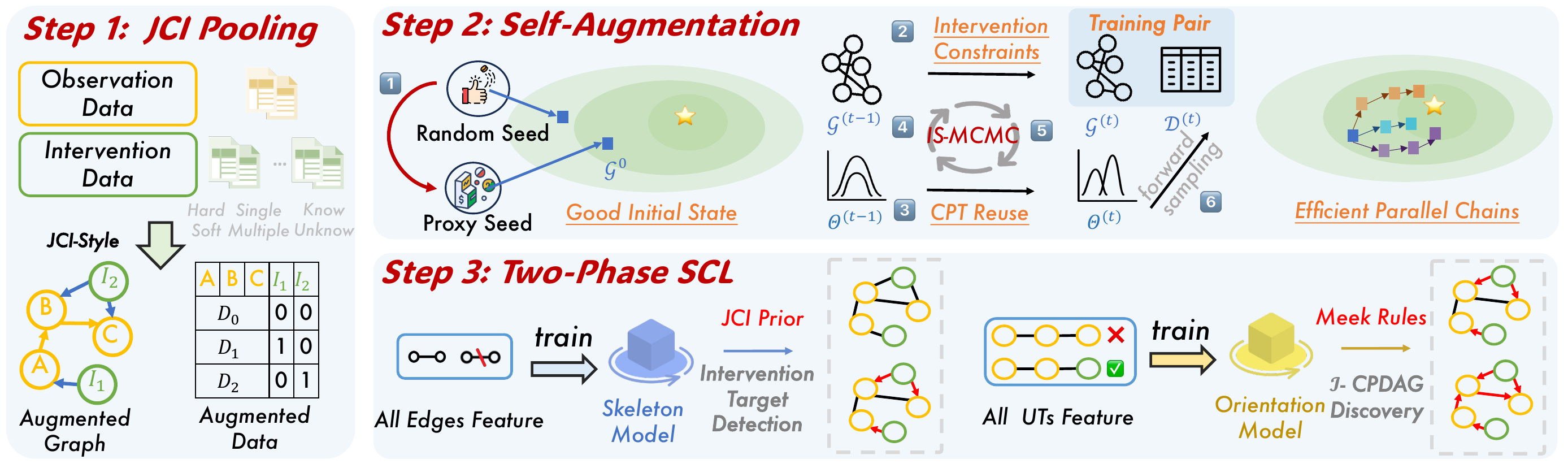}
\caption{The overall workflow of \model for test-time learning of causal structure from interventional data.}
\label{fig.framework}
\end{center}
\end{figure*}

\textbf{Problem Summary.}
Given a data collection $\D$ generated by an unknown causal model $(\G,P)$ under an intervention family $\I$ (potentially multi-variable, soft, and unknown), we adopt standard causal discovery assumptions: i) $P$ is \textit{Markovian} and \textit{faithful} \wrt the causal graph and \textit{causal sufficiency}, and ii) \textit{exogeneity}, \textit{complete-randomized-context}, and \textit{generic-context} property for the intervention family. We aim to predict all causal relations entailed by the given data $\D$ (subject to the above assumptions), which correspond to the invariant causal structures in the $\I$-MEC set of the causal graph $\G$ behind $\D$, as explained in Section \ref{sec.preliminaries-3}. Such invariant causal structures can be computationally encapsulated as a partial DAG, called the \textit{Interventional-Complete Partial Directed Acyclic Graph} ($\I$-CPDAG), in which each directed edge indicates an invariant causal relation in the $\I$-MEC set. Besides the 
$\mathcal{I}$-CPDAG Discovery task thus discussed, we also want to identify the unknown intervention targets in the intervention family $\I$ (\textit{Intervention Target Detection}).

\textbf{Solution Outline.}
\model proceeds in three steps: \textit{(1) JCI Pooling:} First, we convert the given data collection $\D$ into an augmented data $\mathcal{D}_{\mathcal{I}}$ following the JCI protocol. \textit{(2) Self-Augmentation:} Then, we use the self-augmentation strategy to construct a Markov chain over the space of augmented graph structures constrained by interventions, where each $\mathcal{G}_i$ fits the parameters in $\mathcal{D}_{\mathcal{I}}$, and forward sampling to get $\mathcal{D}_i$ as the training instance. \textit{(3) Two-Phase SCL:} Last, we extract edge and triplet features of training data to enable two-phase SCL for the skeleton and orientation models. The former leverages JCI priors (\ie, prior knowledge of environment-system variable relations) to identify unknown targets, while the latter employs Meek Rules to enhance causal discovery.

\subsection{Training Data Acquisition via Self-Augmentation}

We propose leveraging the posterior estimation of causal graphs, $P(\mathcal{G}_{\mathcal{I}}|\mathcal{D}_{\mathcal{I}})$ to generate training data. Specifically, we sample causal graphs $(\mathcal{G}_1, \ldots, \mathcal{G}_n)$ from $P(\mathcal{G}_{\mathcal{I}}|\mathcal{D}_{\mathcal{I}})$ using a tailored Markov Chain Monte Carlo (MCMC) designed for interventional data. For each $\mathcal{G}_i$, the parameters governing the conditional distributions of variables, given their parents in $\mathcal{G}_i$, are re-estimated using the dataset $\mathcal{D}$. Forward sampling is then applied with these parameters to generate a dataset $\mathcal{D}_i$ compatible with $\mathcal{G}_i$
These paired instances $\langle\mathcal{G}_i, \mathcal{D}_i\rangle$ are used for subsequent two-phase SCL.

This self-augmented data offers two key advantages: First, since the true causal graph $\mathcal{G}^*$ is unknown, the posterior estimation $(\mathcal{G}_1, \ldots, \mathcal{G}_n)$ provides a diverse range of plausible causal structures, capturing epistemic uncertainty. Second, by re-estimating parameters and forward sampling compatible datasets $\mathcal{D}_i$ from each $\mathcal{G}_i$, we generate accurately labeled instances $\langle\mathcal{G}_i, \mathcal{D}_i\rangle$, where $\mathcal{G}_i$ likely retains similarity to the augmented graph $\mathcal{G}_{\mathcal{I}}$, which entails the similarity properties of the $\mathcal{D}_i$. More importantly, this generation mechanism is free, and we can generate as much training data as we want.

\textbf{Implementation Details.} We summarize the key steps here:

\textit{\uppercase\expandafter{\romannumeral1}. Initialization}: The initial graph \( \mathcal{G}^{(0)} \) is randomly initialized with the \textit{augmented graph}’s node number.

\textit{\uppercase\expandafter{\romannumeral2}. Mutation}: In each iteration, a candidate graph \( \mathcal{G}^{cand} \) is generated from \( \mathcal{G}^{(t-1)} \) by mutating the graph structure (adding, deleting, or reversing an edge).

\textit{\uppercase\expandafter{\romannumeral3}. Parameters Re-Estimation}: For each \( \mathcal{G}^{cand} \), the conditional probability table (CPT) is re-estimated via maximum likelihood estimation (MLE), ensuring alignment with the augmented data \( \mathcal{D}_{\mathcal{I}} \). If a node's parents remain unchanged, the corresponding CPT is inherited from the previous graph.

\textit{\uppercase\expandafter{\romannumeral4}. Evaluation}: The candidate graph \( \mathcal{G}^{cand} \) is evaluated using a goodness-of-fit score, such as log-likelihood, computed based on its alignment with the observed data.

\textit{\uppercase\expandafter{\romannumeral5}. Acceptance}: The acceptance probability of \( G^{cand} \) is computed as:\begin{scriptsize}
$
\alpha(\mathcal{G}^{cand} | \mathcal{G}^{(t-1)})=\min \left(1, \frac{\text{score}(\mathcal{G}^{cand})}{\text{score}(\mathcal{G}^{(t-1)})} \right)
$
\end{scriptsize}, \( \mathcal{G}^{cand} \) is accepted as \( \mathcal{G}^{(t)} \) with probability \( \alpha \), else it retains \( \mathcal{G}^{(t-1)} \).

\textit{\uppercase\expandafter{\romannumeral6}. Forward Sampling}: Once \( \mathcal{G}^{cand} \) is accepted, the corresponding \( \mathcal{D}^{(t)} \) is generated by forward sampling from the graph, and the pair \( \langle \mathcal{G}^{(t)}, \mathcal{D}^{(t)} \rangle \) is stored for future use.

There are, however, several critical considerations to address. First, since the IS-MCMC process operates on augmented graphs rather than standard causal graph, additional constraints must be imposed to ensure validity. Furthermore, managing time complexity is crucial for efficient sampling. Accordingly, we implement the following optimizations:
\begin{itemize}[leftmargin=1.em,itemsep=0.15em]
    \item \textit{Good Initial State}: Using a proxy algorithm to generate an initial graph significantly reduces convergence time compared to starting from a purely random graph.
    \item \textit{Intervention Constraints}: The mutation process must respect system-environment (\textit{sys} -- \textit{env node}) constraints that are specific to the interventional setup. Specifically, edges from system to environment nodes (\textit{sys} $\rightarrow$ \textit{env}) are not permitted, and interactions between environment nodes are excluded. This ensures that the augmented graphs remain consistent under the JCI framework.
    \item \textit{Parameters Reuse}: Only the nodes affected by structural changes need to have their CPTs re-estimated. The rest of the graph can retain its parameters from the previous.
    \item \textit{Efficient Parallel Chains}: Running multiple parallel IS-MCMC chains allows for faster exploration of the augmented graph space, improving the sampling efficiency.
\end{itemize}
\textbf{Remark.}
By addressing these factors, we ensure that the generation of training data is both effective and efficient, thereby enhancing the task performance. 
We detail optimization strategies in Appendix~\ref{data_detail} and summarize the procedure in Algorithms~\ref{basic_ismcmc} and~\ref{ismcmc_workflow}. \textit{Theoretically}, our IS-MCMC follows a standard structured MCMC framework~\citep{madigan1995bayesian,su2016improving,kuipers2017partition} and converges to the posterior distribution $P(\mathcal{G}_{\mathcal{I}}|\mathcal{D}_{\mathcal{I}})$, thus avoiding convergence to incorrect graph structures. (See formal discussion and convergence visualization in Appendix~\ref{convergence_ismcmc}). \textit{Empirically}, we also demonstrate its task effectiveness in Section~\ref{data_study} with experimental evidence.

\subsection{Two-Phase Supervised Causal Learning}

With the generated training data, a straightforward idea is to train a model that directly predicts the causal graphs. However, we argue that this may not be the best choice, and that the model should instead predict the \textit{identifiable} causal structures, \ie, the $\mathcal{I}$-CPDAG. Recall that a directed causal edge is identifiable if and only if it appears consistently in every causal graph compatible with the given data. In other words, there does not exist stable association at all between the given data and the unidentifiable causal edges, despite their presence in the true causal graph. For this reason, we choose to focus on identifying the identifiable components in the $\mathcal{I}$-CPDAG, namely, skeletons and v-structures. 

Inspired by the PC algorithm~\citep{spirtes1991algorithm}, we implement the supervised causal learning as a two-phase process, with phase 1 focusing on identifying the skeleton, followed by orientation predictions in phase 2. This shift implies that for each phase, we must define the learning target, the feature set, and the classification mechanism.

\textbf{Implementation Details.}
We first review the PC algorithm, which consists of two main stages: Phase 1 identifies the skeleton and the separating sets $\mathcal{SS}$, determining the existence of edges. Starting from an undirected complete graph, edges are removed iteratively through conditional independence (CI) tests. Specifically, an edge $X_i - X_j$ is removed if $X_i$ is conditionally independent of $X_j$ given a subset $\mathcal{S}$ of other variables in the current $k$-order graph. Phase 2 orients the unshielded triples $\mathcal{U}$ in the skeleton based on the $\mathcal{SS}$, assigning edge directionality. Here, the triple $\langle X_a, X_c, X_b\rangle$ is oriented into a v-structure $X_a \rightarrow X_c \leftarrow X_b$ if $X_c$ is not in the separating set of $X_a$ and $X_b$.

Then, we establish a formal connection between PC and SCL. Due to limited space, we focus on phase 2 as an example. Phase 1 and more details are provided in ~\ref{two_phase_sup}.

\underline{\textit{Task}:} For all unshielded triples $\mathcal{U}$, classify whether each triple $\langle X_a,X_c,X_b\rangle$ is a v-structure.

\underline{\textit{Featurization}:} Query $\mathcal{SS}$ satisfying $X_a \perp X_b | \mathcal{SS}$, and calculate existence feature: \begin{footnotesize} $F_{\langle X_a,X_c,X_b\rangle}=\left\{\begin{matrix}1&X_c \in \mathcal{SS}\\0&X_c \notin 
\mathcal{SS}\end{matrix}\right.$
\end{footnotesize}

\underline{\textit{Classifier}:} Train a binary classifier using features: \begin{footnotesize}
${C}_{o}(F_{\langle X_a,X_c,X_b \rangle}):=\left\{\begin{matrix}v\text{-}structure&F_{\langle X_a,X_c,X_b \rangle}=0\\non\mbox{-}v\text{-}structure&F_{\langle X_a,X_c,X_b \rangle}\neq0\end{matrix}\right.$
\end{footnotesize}

In summary, detecting v-structures can be framed as a binary classification task, with the PC algorithm viewed as feature engineering combined with a static classifier. It’s clear that this approach can be extended by enriching the feature set, such as considering all possible separating sets or more conditional dependency information. Crucially, it enables us to replace heuristic searches and potentially erroneous CI tests with robust classification mechanisms trained on data.

\textbf{Remark.}
Building on this insight, we introduce two classifiers: one to detect the existence of edges between nodes, and the other to determine whether an unshielded triple is a v-structure. We detail feature engineering in Appendix~\ref{two_phase_sup} and summarize the learnable components in Algorithms~\ref{pc_adj_pipeline} and~\ref{pc_direction_pipeline}. Moreover, the interventional setting (augmented graph) offers further benefits. During inference, prior knowledge from the augmented graph allow pre-identifying certain edges (\eg, \textit{env} $\rightarrow$ \textit{sys} node), facilitating Meek rules~\citep{meek1995causal} to orient additional edges. \textit{Theoretically}, our method preserves the PC algorithm's asymptotic properties: as sample size grows, the classifiers converges to a theoretically plausible solution, \ie, correct CPDAG (or $\mathcal{I}$-CPDAG in intervention), thus ensuring identifiability (Proof in Appendix~\ref{ident_proof}). \textit{Empirically}, we also demonstrate its effectiveness and improvements in Sections~\ref{rq1_study} and \ref{rq4_study}.

%% file: sections/5_experiments.tex
\section{Experiments}

In this section, we conduct extensive experiments to investigate the following
core research questions (RQs): 

\textbf{RQ1:} How does the \model perform in causal structure identification from interventional data? \textit{\textbf{(Effectiveness)}}. 

\textbf{RQ2:} How the quality and quantity of training data from self-augmentation affect \model? \textit{\textbf{(Generalizability)}}. 

\textbf{RQ3:} How does the sampling and complete running time of \model compare to other methods? \textit{\textbf{(Efficiency)}}. 

\textbf{RQ4:} How does the \model perform compared to others across different intervention settings? \textit{\textbf{(Versatility)}}.

\textbf{Benchmarks \& Baselines.} We use 14 (semi-)real causal graph datasets from the bnlearn~\citep{scutari2009learning} repository as benchmarks. We compare \model with methods designed for intervention data, including score-based methods: GIES~\citep{hauser2012characterization}, IGSP~\citep{wang2017permutation,yang2018characterizing}, UT-IGSP~\citep{squires2020permutation}, and continuous optimization-based methods: ENCO~\citep{lippe2022efficient}, AVICI~\citep{lorch2022amortized}. We also extend observational data methods using the JCI framework to intervention settings, including constraint-based methods like PC~\citep{spirtes1991algorithm}, score-based methods like HC~\citep{tsamardinos2006max}, BLIP~\citep{scanagatta2015learning}, and gradient-based methods like GOLEM~\citep{ng2020role}. Note that some methods require known intervention targets, marked with *; DCDI~\citep{brouillard2020differentiable} and BaCaDI~\citep{hagele2023bacadi} failed due to their focus on continuous data, and results from CSIvA~\citep{ke2023learning} and SDI~\citep{ke2023neural} are omitted due to inaccessible code. More details are provided in Appendix~\ref{benchmarks} and~\ref{baselines}.

\textbf{Training Datasets \& Metrics.} For the training data, by default we use 400 synthetic causal graph instances sampled via the IS-MCMC and forward sample 10k instances from re-parameterized conditional probability tables. XGBoost~\citep{chen2016xgboost} is used as the classifier. The F1-Score evaluates intervention target detection and $\mathcal{I}\textit{-CPDAG}$ discovery. For causal assessment, we also use Structural Hamming Distance (SHD) and Structural Intervention Distance (SID)~\citep{peters2015structural}. More details in Appendix~\ref{metrics} and~\ref{expsetting}. Moreover, to enable unified and fair comparison, we implemented extensive engineering optimizations (\eg, specialized data structures, intervention mechanisms). The resulting codebase ($>$10k lines of Python) will be open-sourced to support future research.


\subsection{Causal Structure Identification Performance (RQ1)}\label{rq1_study}

\input{tables/rq1_cpdag_merge.tex}

\input{tables/rq1_intervention.tex}

Following DCDI and GIES, we conduct multiple intervention experiments for each graph, with the number of experiments being 20\% of the nodes. To enhance baseline diversity, we use single-node interventions (\eg, ENCO limited to single-node interventions). Since hard interventions are a special case of soft ones, we choose soft interventions. Finally, we forward-sample 10k samples for both observational and intervention cases to generate test data.

As shown in Tables~\ref{table:rq1_cpdag_merge_exp} and~\ref{table:rq1_intervention_exp}, \model achieves competitive results across 14 causal graph datasets. Its advantages are highlighted in two aspects: \ding{182} \textit{Diverse and Challenging Benchmarking:} Unlike methods tested only on synthetic datasets with 10-30 nodes~\citep{brouillard2020differentiable,lorch2022amortized,hagele2023bacadi,ke2023learning}, we evaluate on the bnlearn benchmark, which includes real-world-inspired causal graphs ranging from small to over 100 nodes. Most methods fail as DAGs grow super exponentially, while \model demonstrates superior scalability and consistent performance. \ding{183} \textit{Highly Competitive Results:} In intervention target detection, \model outperforms all methods, with an average F1-score improvement of 50.21\% over the second-best method. For $\mathcal{I}\textit{-CPDAG}$ discovery, \model improves the F1 score by 13.62\% over the best baseline. The exception is ENCO on Pathfinder, where prior knowledge of known intervention targets yields comparable performance. Overall, \model leads with an average F1 rank of 1.36 ± 0.37 for causal discovery and 1.00 ± 0.00 for intervention target detection. \ding{184} Beyond this, we find methods based on the JCI framework also show competitive performance, suggesting it is a promising direction for intervention causal discovery.

\begin{figure*}[t!]
\begin{center}
\includegraphics[width=1.0\linewidth]{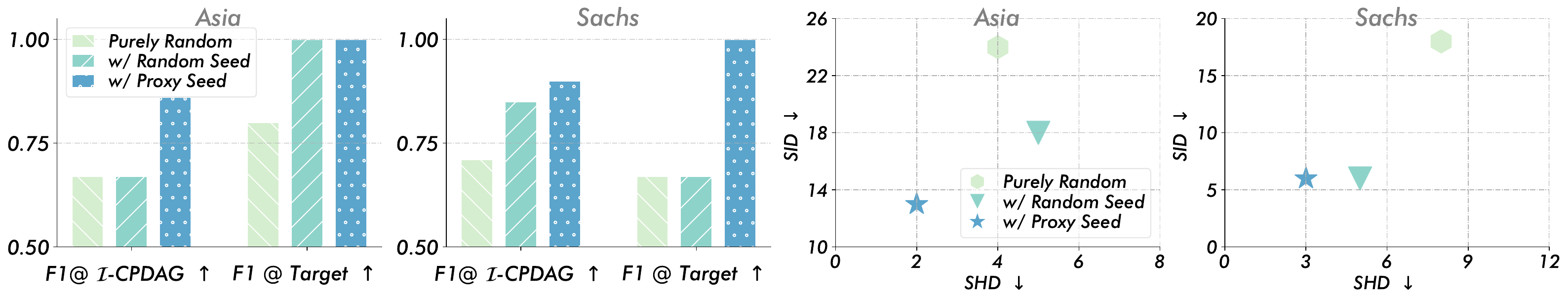}
\caption{The impact of different synthetic data strategies on the performance of causal structure learning.} 
\label{fig.mcmc_strategy}
\end{center}
\end{figure*}

\begin{figure*}[t!]
\begin{center}
\includegraphics[width=1.0\textwidth]{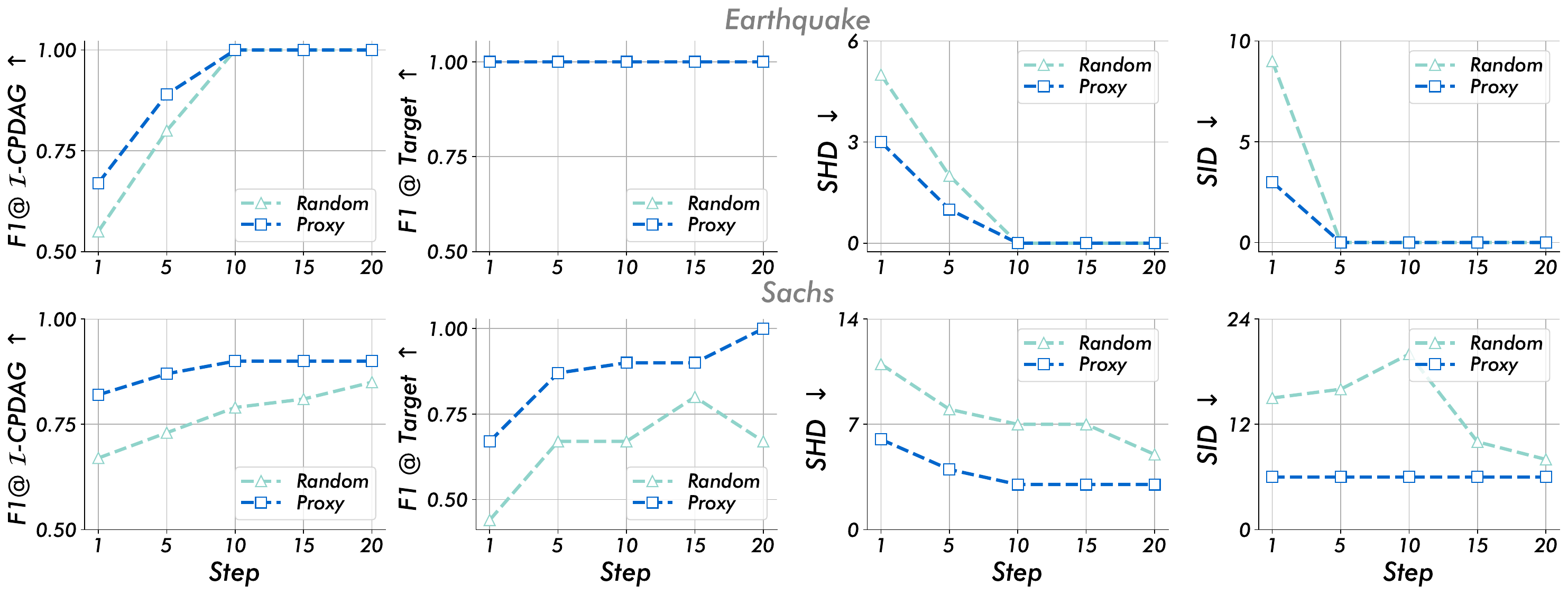}
\caption{The impact of different initialization strategies on the performance of IS-MCMC convergence.} 
\label{fig.mcmc_step}
\end{center}
\end{figure*}

\subsection{Training Data Study: Quantity and Quality (RQ2)}\label{data_study}

We maintain consistent setups and assess the impact of self-augmented data quality and quantity for SCL.

\textbf{Data Quality.} To evaluate training data quality, we consider three generation strategies: \textit{Purely Random}, generating random graphs (Erdős-Rényi, Scale-Free) with 10-50 nodes; \textit{IS-MCMC w/ Random Seed}, using random graphs as initial seeds for IS-MCMC; and \textit{IS-MCMC w/ Proxy Seed}, using proxy algorithms (\eg, JCI-BLIP) to generate seed graphs for IS-MCMC. We compare performance in Figure~\ref{fig.mcmc_strategy} and IS-MCMC convergence in Figure~\ref{fig.mcmc_step}. We observe that: \ding{182} \textit{High-quality in-domain data is crucial for supervised causal models.} The three generation strategies can be viewed as progressively approach the test data domain. As shown in Figure~\ref{fig.mcmc_strategy}, both IS-MCMC variants outperform the purely random baseline across all metrics. Among them, the proxy seed method yields the best results, highlighting the value of self-augmentation strategy for test-time training. \ding{183} \textit{Optimizing posterior-induced data generation process is key to self-augmentation.} Although both IS-MCMC strategies target the same posterior, empirical results (Figure~\ref{fig.mcmc_step}) show that proxy seeds converge faster and excel in both causal discovery and target detection. This confirms that our optimization strategy, via effective initialization and intervention constraints, ensures superior efficiency and convergence.

\begin{figure*}[htbp!]
\begin{center}
\includegraphics[width=1.0\linewidth]{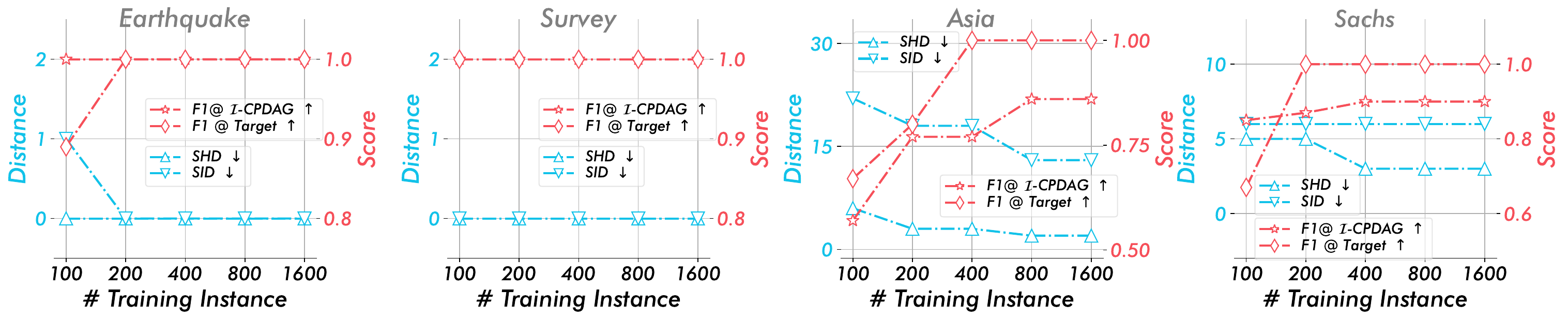}
\caption{Effect of training data instances on the performance of causal structure learning.} 
\label{fig.effect_trainsetinstance}
\end{center}
\end{figure*}

\begin{figure*}[t!]
\begin{center}
\includegraphics[width=1.0\linewidth]{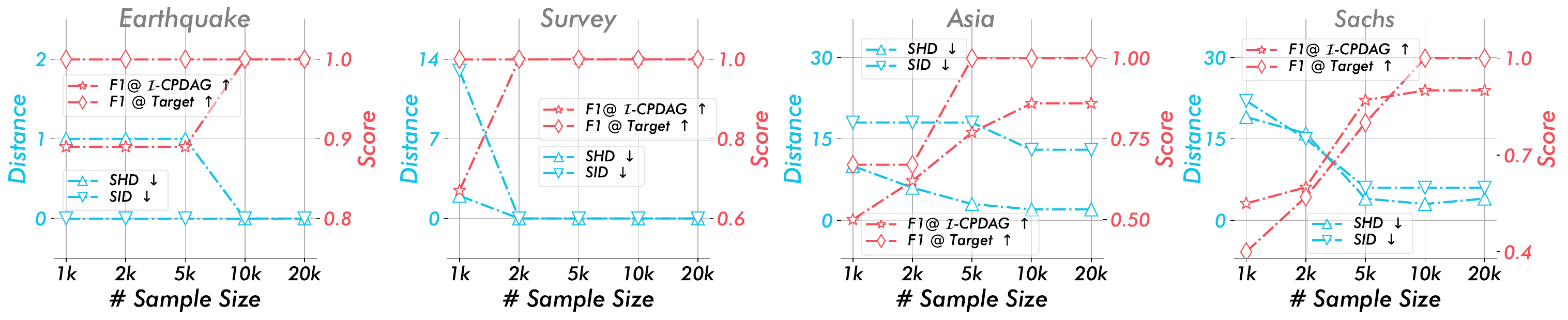}
\caption{Effect of training sample size on the performance of causal structure learning.} 
\label{fig.effect_trainsamplesize}
\end{center}
\end{figure*}

\textbf{Data Quantity.} We evaluate \model's scaling property with training data size: instance count and sample size per instance. Figure~\ref{fig.effect_trainsetinstance} shows the impact of instance counts (100-1600), while Figure~\ref{fig.effect_trainsamplesize} plots performance with sample sizes (1k–20k) under default intervention settings. We observe that: \ding{182} \textit{Our \model benefits from more training instances.} As shown in Figure~\ref{fig.effect_trainsetinstance}, performance improves with more instances across all datasets. Smaller graphs require fewer instances, while larger ones gain more due to increased complexity. \ding{183} \textit{\model benefits from larger sample sizes.} As shown in Figure~\ref{fig.effect_trainsamplesize}, increasing sample size stabilizes independence tests, improving various metrics across graphs. These properties further highlight the promise, as our self-augmentation enables the generation of abundant free data.

\subsection{Sampling and Running Efficiency Study (RQ3)}

\begin{wrapfigure}{r}{9cm}
\begin{center}
\includegraphics[width=1.0\linewidth]{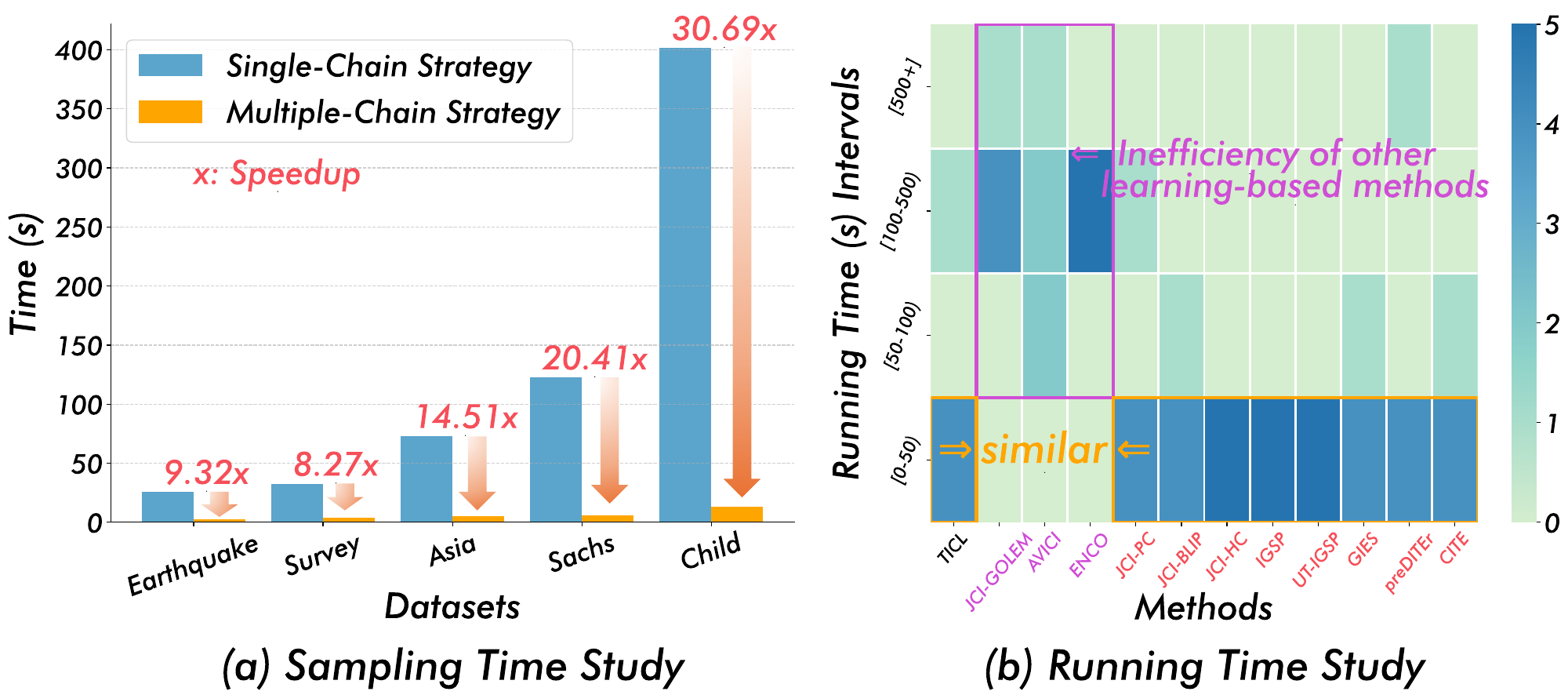}
\caption{Efficiency study of our method.} 
\label{fig.efficiency_study}
\end{center}
\vspace{-4mm}
\end{wrapfigure} 
We maintain the basic setup and assess the efficiency of our method from two perspectives.

\textbf{Sampling Time.} Since our initialization ensures rapid convergence, we evaluate the multi-chain MCMC strategy. As shown in Figure~\ref{fig.efficiency_study} (a), multi-chain MCMC shows minimal time increase as graphs scale, offering significant acceleration. This stems from IS-MCMC's parallel adaptability, confirming the strategy's efficiency.

\textbf{Running Time.} We compare the wall-clock times (including proxy time) of all baseline methods. As shown in Figure~\ref{fig.efficiency_study} (b): \ding{182} \textit{Other learning-based methods are inefficient}, typically requiring 100-500 seconds due to slow gradient optimization, while non-SCL methods finish within 50 seconds. \ding{183} \textit{\model matches non-SCL efficiency.} While execution time involves proxy algorithm, this is optional; in high-risk causal discovery, time cost becomes secondary.


\subsection{Different Intervention Settings Study (RQ4)}\label{rq4_study}

We further conducted experiments and analyses on \textbf{RQ4} regarding intervention types, intervention ratios, and test sample sizes. Due to the complex diversity of different situations, this has been rarely considered comprehensively before. However, as shown in Figure~\ref{fig.overall}, \model always maintains its superiority under various intervention combinations and a limited number of samples. Due to page limitation, for more results and analysis, please refer to Appendix~\ref{add_exp}.

%% file: tables/rq1_cpdag_merge.tex
\begin{table*}[t!]
\begin{center}
\caption{Performance comparison of $\mathcal{I}\textit{-CPDAG}$~(SHD $\downarrow$ / SID $\downarrow$ / F1 $\uparrow$).\protect\bluesquare{}: Best, \protect\greensquare{}: Next. \textcolor{gray}{\Crashes}: Crashes, \textcolor{gray}{\Timeout}: Timeout.}
\label{table:rq1_cpdag_merge_exp}
\centering
\renewcommand\arraystretch{1.8}
\setlength{\tabcolsep}{0.8mm}{}
\resizebox{1.0\linewidth}{!}{
\begin{sc}
\begin{tabular}{lcccccccccccccccccccccccccccccc}

\toprule[1.2pt]

\multicolumn{1}{c}{\multirow{2}{*}{{Methods}}} & \multicolumn{3}{c}{\textbf{\model}} & \multicolumn{3}{c}{{JCI-PC}} & \multicolumn{3}{c}{{JCI-BLIP}} & \multicolumn{3}{c}{{JCI-HC}} & \multicolumn{3}{c}{{JCI-GOLEM}} & \multicolumn{3}{c}{{AVICI$^{*}$}} & \multicolumn{3}{c}{{ENCO$^{*}$}} & \multicolumn{3}{c}{{IGSP$^{*}$}} & \multicolumn{3}{c}{{UT-IGSP}} & \multicolumn{3}{c}{{GIES}} \\


\cmidrule(lr){2-4} \cmidrule(lr){5-7} \cmidrule(lr){8-10} \cmidrule(lr){11-13} \cmidrule(lr){14-16} \cmidrule(lr){17-19} \cmidrule(lr){20-22} \cmidrule(lr){23-25} \cmidrule(lr){26-28} \cmidrule(lr){29-31}

& SHD & SID & F1 & SHD & SID & F1 & SHD & SID & F1 & SHD & SID & F1 & SHD & SID & F1 & SHD & SID & F1 & SHD & SID & F1 & SHD & SID & F1 & SHD & SID & F1  & SHD & SID & F1\\

\hline

{Earthquake} & \cellcolor{cyan!8}{{0}} & \cellcolor{cyan!8}{{0}} & \cellcolor{cyan!8}{{1.00}} & \cellcolor{cyan!8}{{0}} & \cellcolor{cyan!8}{{0}} & \cellcolor{cyan!8}{{1.00}} & \cellcolor{cyan!8}{{0}} & \cellcolor{cyan!8}{{0}} & \cellcolor{cyan!8}{{1.00}} & 5 & 14 & 0.40 & 6 & 19 & 0.00 & \cellcolor{green!5}{{3}} & \cellcolor{green!5}{{6}} & 0.40 & \cellcolor{green!5}{{3}} & 12 & 0.67 & \cellcolor{cyan!8}{{0}} & \cellcolor{cyan!8}{{0}} & \cellcolor{cyan!8}{{1.00}} & \cellcolor{cyan!8}{{0}} & \cellcolor{cyan!8}{{0}} & \cellcolor{cyan!8}{{1.00}} & 1 & \cellcolor{cyan!8}{{0}} & \cellcolor{green!5}{{0.89}}\\

{Survey} & \cellcolor{cyan!8}{{0}} & \cellcolor{cyan!8}{{0}} & \cellcolor{cyan!8}{{1.00}} & 5 & 18 & 0.40 & \cellcolor{green!5}{{1}} & \cellcolor{green!5}{{4}} & \cellcolor{green!5}{{0.91}} & \cellcolor{cyan!8}{{0}} & \cellcolor{cyan!8}{{0}} & \cellcolor{cyan!8}{{1.00}} & 7 & 23 & 0.40 & 8 & 18 & 0.22 & 5 & 21 & 0.29 & \cellcolor{green!5}{{1}} & \cellcolor{green!5}{{4}} & \cellcolor{green!5}{{0.91}} & \cellcolor{green!5}{{1}} & \cellcolor{green!5}{{4}} & \cellcolor{green!5}{{0.91}} & \cellcolor{green!5}{{1}} & \cellcolor{green!5}{{4}} & \cellcolor{green!5}{{0.91}} \\  

{Asia} & \cellcolor{cyan!8}{{2}} & \cellcolor{cyan!8}{{13}} & \cellcolor{cyan!8}{{0.86}} & 4 & 19 & 0.75 & \cellcolor{green!5}{{3}} & \cellcolor{green!5}{{18}} & \cellcolor{green!5}{{0.77}} & 6 & 28 & 0.46 & 15 & 36 & 0.00 & 7 & 28 & 0.31 & 6 & \cellcolor{cyan!8}{{13}} & 0.74 & 4 & 17 & 0.62 & 5 & 24 & 0.73 & 4 & 14 & \cellcolor{green!5}{{0.77}} \\  

{Sachs} & \cellcolor{cyan!8}{{3}} & \cellcolor{cyan!8}{{6}} & \cellcolor{cyan!8}{{0.90}} & 19 & 56 & 0.52 & 7 & 41 & 0.67 & \cellcolor{green!5}{{5}} & 29 & \cellcolor{green!5}{{0.74}} & 18 & 61 & 0.11 & 37 & 46 & 0.22 & 41 & 42 & 0.26 & 10 & \cellcolor{green!5}{{16}} & 0.70 & 13 & 21 & 0.61 & 15 & 38 & 0.48\\  

{Child} & \cellcolor{cyan!8}{{7}} & \cellcolor{cyan!8}{{67}} & \cellcolor{cyan!8}{{0.81}} & 40 & 299 & 0.62 & \cellcolor{green!5}{{10}} & 188 & \cellcolor{green!5}{{0.78}} & 16 & 211 & 0.65 & 41 & 330 & 0.21 & \textcolor{gray}{\Crashes} & \textcolor{gray}{\Crashes} & \textcolor{gray}{\Crashes} & 110 & 251 & 0.06 & 29 & \cellcolor{green!5}{{124}} & 0.63 & 33 & 214 & 0.45 & 46 & 197 & 0.29\\  

{Insurance} & \cellcolor{cyan!8}{{17}} & \cellcolor{green!5}{{295}} & \cellcolor{cyan!8}{{0.78}} & 66 & 537 & 0.53 & \cellcolor{green!5}{{24}} & 360 & \cellcolor{green!5}{{0.68}} & 32 & 342 & 0.60 & 63 & 687 & 0.23 & \textcolor{gray}{\Crashes} & \textcolor{gray}{\Crashes} & \textcolor{gray}{\Crashes} & 172 & 505 & 0.14 & 80 & 455 & 0.34 & 82 & 442 & 0.31 & 87 & \cellcolor{cyan!8}{{261}} & 0.52 \\

{Water} & \cellcolor{cyan!8}{{45}} & \cellcolor{cyan!8}{{470}} & \cellcolor{cyan!8}{{0.53}} & \textcolor{gray}{\Timeout} & \textcolor{gray}{\Timeout} & \textcolor{gray}{\Timeout} & \cellcolor{green!5}{{46}} & 538 & \cellcolor{green!5}{{0.49}} & 48 & \cellcolor{green!5}{{503}} & 0.46 & 81 & 564 & 0.22 & \textcolor{gray}{\Crashes} & \textcolor{gray}{\Crashes} & \textcolor{gray}{\Crashes} & 58 & 527 & 0.46 & \textcolor{gray}{\Crashes} & \textcolor{gray}{\Crashes} & \textcolor{gray}{\Crashes} & \textcolor{gray}{\Crashes} & \textcolor{gray}{\Crashes} & \textcolor{gray}{\Crashes} & \textcolor{gray}{\Crashes} & \textcolor{gray}{\Crashes} & \textcolor{gray}{\Crashes} \\  

{Mildew} & \cellcolor{cyan!8}{{29}} & \cellcolor{green!5}{{239}} & \cellcolor{cyan!8}{{0.76}} & 58 & 766 & \cellcolor{cyan!8}{{0.76}} & \cellcolor{green!5}{{35}} & 523 & \cellcolor{green!5}{{0.51}} & 41 & 500 & \cellcolor{green!5}{{0.51}} & \textcolor{gray}{\Timeout} & \textcolor{gray}{\Timeout} & \textcolor{gray}{\Timeout} & \textcolor{gray}{\Crashes} & \textcolor{gray}{\Crashes} & \textcolor{gray}{\Crashes} & 262 & 820 & 0.10 & 130 & 482 & 0.22 & 123 & 430 & 0.25 & 116 & \cellcolor{cyan!8}{{81}} & 0.41\\  

{Alarm} & \cellcolor{cyan!8}{{9}} & \cellcolor{cyan!8}{{96}} & \cellcolor{cyan!8}{{0.91}} & 67 & 481 & \cellcolor{green!5}{{0.76}} & \cellcolor{green!5}{{18}} & \cellcolor{green!5}{{152}} & 0.75 & 38 & 416 & 0.61 & \textcolor{gray}{\Timeout} & \textcolor{gray}{\Timeout} & \textcolor{gray}{\Timeout} & \textcolor{gray}{\Crashes} & \textcolor{gray}{\Crashes} & \textcolor{gray}{\Crashes} & 236 & 612 & 0.11 & 65 & 247 & 0.43 & 70 & 274 & 0.37 & 88 & 219 & 0.39\\  

{Barley} & \cellcolor{green!5}{{55}} & \cellcolor{green!5}{{948}} & \cellcolor{green!5}{{0.63}} & 127 & 1789 & 0.57 & \cellcolor{cyan!8}{{39}} & \cellcolor{cyan!8}{{919}} & \cellcolor{cyan!8}{{0.70}} & 78 & 1328 & 0.39 & \textcolor{gray}{\Timeout} & \textcolor{gray}{\Timeout} & \textcolor{gray}{\Timeout} & \textcolor{gray}{\Crashes} & \textcolor{gray}{\Crashes} & \textcolor{gray}{\Crashes} & 98 & 1059 & 0.55 & \textcolor{gray}{\Crashes} & \textcolor{gray}{\Crashes} & \textcolor{gray}{\Crashes} & \textcolor{gray}{\Crashes} & \textcolor{gray}{\Crashes} & \textcolor{gray}{\Crashes} & \textcolor{gray}{\Timeout} & \textcolor{gray}{\Timeout}  & \textcolor{gray}{\Timeout}\\  

{Hailfinder} & \cellcolor{cyan!8}{{42}} & \cellcolor{cyan!8}{{509}} & \cellcolor{green!5}{{0.66}} & 157 & 1152 & 0.33 & \cellcolor{green!5}{{43}} & \cellcolor{green!5}{{558}} & \cellcolor{cyan!8}{{0.70}} & 81 & 935 & 0.38 & \textcolor{gray}{\Timeout} & \textcolor{gray}{\Timeout} & \textcolor{gray}{\Timeout} & \textcolor{gray}{\Crashes} & \textcolor{gray}{\Crashes} & \textcolor{gray}{\Crashes} & 98 & 730 & 0.61 & 232 & 870 & 0.14 & 235 & 910 & 0.14 & 66 & 973  & 0.00\\  

{Hepar2} & \cellcolor{cyan!8}{{42}} & 1216 & \cellcolor{green!5}{{0.79}} & \textcolor{gray}{\Timeout} & \textcolor{gray}{\Timeout} & \textcolor{gray}{\Timeout} & 50 & 1228 & 0.72 & \cellcolor{green!5}{{43}} & \cellcolor{cyan!8}{{855}} & \cellcolor{cyan!8}{{0.81}} & \textcolor{gray}{\Timeout} & \textcolor{gray}{\Timeout} & \textcolor{gray}{\Timeout} & \textcolor{gray}{\Crashes} & \textcolor{gray}{\Crashes} & \textcolor{gray}{\Crashes} & 75 & 1693 & 0.57 & 70 & 1663 & 0.62 & 67 & 1606 & 0.64 & 81 & \cellcolor{green!5}{{1085}} & 0.67\\  

{Win95pts} & \cellcolor{cyan!8}{{101}} & \cellcolor{cyan!8}{{611}} & \cellcolor{cyan!8}{{0.58}} & \textcolor{gray}{\Timeout} & \textcolor{gray}{\Timeout} & \textcolor{gray}{\Timeout} & \cellcolor{green!5}{{109}} & 900 & 0.53 & \textcolor{gray}{\Timeout} & \textcolor{gray}{\Timeout} & \textcolor{gray}{\Timeout} & \textcolor{gray}{\Timeout} & \textcolor{gray}{\Timeout} & \textcolor{gray}{\Timeout} & \textcolor{gray}{\Crashes} & \textcolor{gray}{\Crashes} & \textcolor{gray}{\Crashes} & 135 & \cellcolor{green!5}{{893}} & \cellcolor{green!5}{{0.54}} & \textcolor{gray}{\Crashes} & \textcolor{gray}{\Crashes} & \textcolor{gray}{\Crashes} & \textcolor{gray}{\Crashes} & \textcolor{gray}{\Crashes} & \textcolor{gray}{\Crashes} & 112 & 992  & 0.00\\

{Pathfinder} & \cellcolor{green!5}{{187}} & 9473 & 0.27 & \textcolor{gray}{\Timeout} & \textcolor{gray}{\Timeout} & \textcolor{gray}{\Timeout} & 207 & \textcolor{gray}{\Crashes} & \cellcolor{green!5}{{0.30}} & \textcolor{gray}{\Timeout} & \textcolor{gray}{\Timeout} & \textcolor{gray}{\Timeout} & \textcolor{gray}{\Timeout} & \textcolor{gray}{\Timeout} & \textcolor{gray}{\Timeout} & \textcolor{gray}{\Crashes} & \textcolor{gray}{\Crashes}  & \textcolor{gray}{\Crashes} & \cellcolor{cyan!8}{{156}} & \cellcolor{cyan!8}{{5528}} & \cellcolor{cyan!8}{{0.47}} & 1345 & \cellcolor{green!5}{{7261}} & 0.10 & 1348 & 9226 & 0.08 & \textcolor{gray}{\Timeout} & \textcolor{gray}{\Timeout}  & \textcolor{gray}{\Timeout} \\

\hline




{\textcolor{magenta}{Rank}}~(SHD) & \multicolumn{3}{c}{\cellcolor{cyan!8}{{1.14 $\pm$ 0.12}}} & \multicolumn{3}{c}{5.36 $\pm$ 3.23} & \multicolumn{3}{c}{\cellcolor{green!5}{{2.14 $\pm$ 0.41}}} & \multicolumn{3}{c}{3.86 $\pm$ 4.41} & \multicolumn{3}{c}{7.43 $\pm$ 3.67} & \multicolumn{3}{c}{8.07 $\pm$ 2.78} & \multicolumn{3}{c}{6.36 $\pm$ 5.80} & \multicolumn{3}{c}{4.64 $\pm$ 2.66} & \multicolumn{3}{c}{5.21 $\pm$ 2.74} & \multicolumn{3}{c}{5.50 $\pm$ 3.11} \\

{\textcolor{magenta}{Rank}}~(SID) & \multicolumn{3}{c}{\cellcolor{cyan!8}{{1.57 $\pm$ 0.82}}} & \multicolumn{3}{c}{6.43 $\pm$ 3.82} & \multicolumn{3}{c}{3.57 $\pm$ 2.53} & \multicolumn{3}{c}{4.64 $\pm$ 5.37} & \multicolumn{3}{c}{8.14 $\pm$ 3.69} & \multicolumn{3}{c}{7.64 $\pm$ 2.66} & \multicolumn{3}{c}{5.29 $\pm$ 7.92} & \multicolumn{3}{c}{3.93 $\pm$ 2.78} & \multicolumn{3}{c}{4.57 $\pm$ 2.67} & \multicolumn{3}{c}{\cellcolor{green!5}{{3.43 $\pm$ 2.96}}} \\

{\textcolor{magenta}{Rank}}~(F1) & \multicolumn{3}{c}{\cellcolor{cyan!8}{{1.36 $\pm$ 0.37}}} & \multicolumn{3}{c}{4.43 $\pm$ 4.24} & \multicolumn{3}{c}{\cellcolor{green!5}{{2.29 $\pm$ 0.78}}} & \multicolumn{3}{c}{3.93 $\pm$ 4.49} & \multicolumn{3}{c}{7.71 $\pm$ 3.35} & \multicolumn{3}{c}{8.00 $\pm$ 3.14} & \multicolumn{3}{c}{5.93 $\pm$ 7.49} & \multicolumn{3}{c}{4.86 $\pm$ 2.84} & \multicolumn{3}{c}{5.21 $\pm$ 2.45} & \multicolumn{3}{c}{5.36 $\pm$ 2.52} \\

\bottomrule[1.2pt]

\end{tabular}
\end{sc}
}
\end{center}
\end{table*}

%% file: tables/rq1_intervention.tex
\begin{table*}[t!]
\begin{center}

\caption{Performance comparison of \textit{Intervention Targets Detection}~(F1 $\uparrow$).\protect\bluesquare{}: Best, \protect\greensquare{}: Next. \textcolor{gray}{\Crashes}: Crashes, \textcolor{gray}{\Timeout}: Timeout.}

\label{table:rq1_intervention_exp}
\centering
\renewcommand\arraystretch{1.8}
\setlength{\tabcolsep}{1.8mm}{}
\resizebox{1.0\linewidth}{!}{
\begin{sc}
\begin{tabular}{lccccccccccccccc}

\toprule[1.2pt]

\multicolumn{1}{c}{\multirow{1}{*}{\textbf{Datasets}}} & {Earthquake} & {Survey} & {Asia} & {Sachs} & {Child} & {Insurance} & {Water} & {Mildew} & {Alarm} & {Barley} & {Hailfinder} & {Hepar2} & {Win95pts} & {Pathfinder} & \multicolumn{1}{c}{{\textcolor{magenta}{Rank}} (F1)}\\

\cmidrule{1-16}

UT-IGSP & 0.50 & \cellcolor{cyan!8}{{1.00}} & 0.44 & 0.33 & 0.35 & 0.17 & \textcolor{gray}{\Crashes} & 0.22 & 0.15 & \textcolor{gray}{\Crashes} & 0.27 & 0.23 & \textcolor{gray}{\Crashes} & 0.04 & 5.14$\pm$3.27 \\ 

CITE & \cellcolor{cyan!8}{{1.00}} & \cellcolor{cyan!8}{{1.00}} & 0.67 & 0.36 & \cellcolor{green!5}{{0.67}} & 0.32 & \textcolor{gray}{\Crashes} & \cellcolor{green!5}{{0.38}} & 0.56 & \textcolor{gray}{\Crashes} & \cellcolor{green!5}{{0.67}} & 0.61 & \textcolor{gray}{\Crashes} & \cellcolor{green!5}{{0.60}} & 3.07$\pm$2.21 \\

PreDITEr & \cellcolor{cyan!8}{{1.00}} & \cellcolor{cyan!8}{{1.00}} & 0.67 & 0.50 & \textcolor{gray}{\Timeout} & \textcolor{gray}{\Timeout} & \textcolor{gray}{\Crashes} & \textcolor{gray}{\Timeout} & \textcolor{gray}{\Timeout} & \textcolor{gray}{\Crashes} & \textcolor{gray}{\Timeout} & \textcolor{gray}{\Timeout} & \textcolor{gray}{\Crashes} & \textcolor{gray}{\Crashes} & 5.21$\pm$5.17 \\

\cmidrule{1-16}

JCI-GOLEM & \cellcolor{green!5}{{0.67}} & 0.40 & 0.40 & 0.22 & 0.18 & 0.11 & 0.23 & \textcolor{gray}{\Timeout} & \textcolor{gray}{\Timeout} & \textcolor{gray}{\Timeout} & \textcolor{gray}{\Timeout} & \textcolor{gray}{\Timeout} & \textcolor{gray}{\Timeout} & \textcolor{gray}{\Timeout} & 6.29$\pm$2.20 \\

JCI-HC & \cellcolor{cyan!8}{{1.00}} & \cellcolor{green!5}{{0.50}} & 0.67 & 0.57 & 0.40 & 0.29 & 0.33 & 0.34 & 0.29 & 0.26 & 0.23 & 0.23 & \textcolor{gray}{\Timeout} & \textcolor{gray}{\Timeout} & 4.14$\pm$1.98 \\

JCI-BLIP & \cellcolor{cyan!8}{{1.00}} & \cellcolor{cyan!8}{{1.00}} & \cellcolor{green!5}{{0.80}} & \cellcolor{green!5}{{0.67}} & 0.57 & 0.45 & \cellcolor{green!5}{{0.44}} & 0.35 & 0.38 & 0.34 & 0.34 & \cellcolor{green!5}{{0.82}} & \cellcolor{green!5}{{0.29}} & 0.23 & \cellcolor{green!5}{{2.43$\pm$0.67}}\\

JCI-PC & \cellcolor{cyan!8}{{1.00}} & \cellcolor{cyan!8}{{1.00}} & \cellcolor{green!5}{{0.80}} & 0.57 & 0.28 & \cellcolor{green!5}{{0.50}} & \textcolor{gray}{\Timeout} & 0.26 & \cellcolor{green!5}{{0.64}} & \cellcolor{green!5}{{0.70}} & 0.25 & \textcolor{gray}{\Timeout} & \textcolor{gray}{\Timeout} & \textcolor{gray}{\Timeout} & 3.43$\pm$3.10\\

\cmidrule{1-16}

\textbf{\model} & \cellcolor{cyan!8}{{1.00}} & \cellcolor{cyan!8}{{1.00}} & \cellcolor{cyan!8}{{1.00}} & \cellcolor{cyan!8}{{1.00}} & \cellcolor{cyan!8}{{1.00}} & \cellcolor{cyan!8}{{0.83}} & \cellcolor{cyan!8}{{0.86}} & \cellcolor{cyan!8}{{0.58}} & \cellcolor{cyan!8}{{0.82}} & \cellcolor{cyan!8}{{0.83}} & \cellcolor{cyan!8}{{0.76}} & \cellcolor{cyan!8}{{0.90}} & \cellcolor{cyan!8}{{0.50}} & \cellcolor{cyan!8}{{0.74}} & \cellcolor{cyan!8}{{1.00$\pm$0.00}}\\

\bottomrule[1.2pt]

\end{tabular}
\end{sc}
}
\end{center}
\end{table*}

%% file: sections/6_conclusion.tex
\section{Conclusion and Future work}

In this paper, we present \model for causal structural learning from interventional data, particularly when faced with generalization challenges in the real-world. We introduced a specific test-time training technique and demonstrated the prospects of JCI for SCL. In the future, we  further explore the performance of \model in continuous setting and under fewer assumptions to pursue a causal foundation model.

%% file: appendix/0_appendix_list.tex
\section*{Appendix}
\appendix
\counterwithin{figure}{section}
\counterwithin{table}{section}
\fancypagestyle{appendixfooter}{
  \fancyhf{} 
  \renewcommand{\headrulewidth}{0pt}
  \renewcommand{\footrulewidth}{0pt}
  \fancyfoot[L]{\hyperlink{appendix-start}{{\textit{Go to Appendix Index}}}}
  \fancyfoot[C]{\thepage}
}

\hypertarget{appendix-start}{}
\pagestyle{appendixfooter}


\vspace{1.5em}

\hrule height .8pt
\DoToC
\hrule height .8pt

\vspace{1.5em}

\clearpage

\input{appendix/1_theoretical}

\input{appendix/2_methods}

\input{appendix/3_convergence_theory}

\input{appendix/4_identifiability_theory}

\input{appendix/5_expsetting}

\input{appendix/6_expresult}

\input{appendix/7_relatedwork}
\input{appendix/8_morediscussion}

\input{appendix/9_impact}

%% file: appendix/1_theoretical.tex
\section{Background Knowledge and Related Work}

\subsection{Background Knowledge}

In this section, we give basic concepts related to causal graphs and joint causal inference.

\subsubsection{Causal Graph-related Concept}

\begin{definition}[Directed Acyclic Graph]
    A directed acyclic graph (DAG) is a directed graph $\mathcal{G}$ that has no cycles, \ie no directed paths starting and ending at the same vertex.
\end{definition}

\begin{definition}[Skeleton]
    A undirected graph $\mathcal{K}=(V_{\mathcal{K}},E_{\mathcal{K}})$ represents the skeleton of a causal DAG $\mathcal{G}=(V_{\mathcal{G}}, E_{\mathcal{G}})$~~if~~$(X \rightarrow Y) \in E_{\mathcal{G}} \cup (Y \rightarrow X)  \in E_{\mathcal{G}} \iff (X-Y) \in E_{\mathcal{K}}$.
\end{definition}

\begin{definition}[UT and V-structures]
A triple of variables $\langle X,T,Y \rangle$ in a skeleton is an unshielded triple, or short for UT, if $X$ and $Y$ are adjacent to $T$ but are not adjacent to each other. $\langle X,T,Y \rangle$ can be further oriented to become a v-structure $X \rightarrow T \leftarrow Y$, in which $T$ is called the collider.
\end{definition}

\begin{definition}[$PC$]
Denote the set of parents and children of $X$ in a skeleton as $PC_X$, in other words, $PC_X$ are the neighbors of $X$ in the skeleton. For convenience, if we discuss $PC_X$ in the context of a UT $\langle X,T,Y \rangle$, we intentionally mean the set of parents and children of $X$ but exclude $T$. Similarly, $PC_T$ excludes $X,Y$.
\end{definition}

\begin{definition}[Vicinity]
We define the vicinity of a UT $\langle X,T,Y \rangle$ as $ V_{ \langle X,T,Y \rangle }:= \{ X,T,Y \}  \cup PC_{X} \cup PC_{Y} \cup PC_{T} $. Vicinity is a generalized version of PC, \ie, the neighbors of $\{ X,T,Y \}$ in the skeleton.
\end{definition}

\begin{definition}[Sepsets]
Sepsets $\mathcal{S}$ can be define: $\left\{\mathcal{S}: X\bot Y \vert S, S \subset PC_{X} \cup T,~\text{or}~S \subset PC_{Y} \cup T \right\}$. Under faithfulness assumption, sepsets $\mathcal{S}$ is an ensemble where each item is a subset of variables within the vicinity that d-separates $X$ and $Y$.
\end{definition}


\begin{definition}[Causal Graph]
    A causal graph $\mathcal{G}$ is a graphical description of a system in terms of cause-effect relationships, \ie the causal mechanism. Specifically, for each edge $(X, Y) \in \mathcal{E}$, $X$ is the direct cause of $Y$, and $Y$ is the direct effect of $X$, satisfying the causal edge assumption, i.e., the value assigned to each variable $X$ is completely determined by the function $\mathcal{F}$ given its parent. Formally, this can be expressed as: $X_i \coloneqq f(\textit{Pa}(X_i)),~\forall X_{i} \in \mathcal{V}$.
\end{definition}

As natural consequence of such definitions, we can define models that entail both the structural representation and the set of functions that regulate the underlying causal mechanism.

\begin{definition}[Structural Causal Model]
    A structural causal model (SCM) is defined by the tuple $\mathcal{M} = (\mathcal{V}, \mathcal{U}, \mathcal{F}, P)$, where:
    \begin{itemize}
        \item $\mathcal{V}$ is a set of endogenous variables, i.e. observable variables,
        \item $\mathcal{U}$ is a set of exogenous variables, i.e. unobservable variables, where $\mathcal{U} \cup \mathcal{V} = \emptyset$
        \item $\mathcal{F}$ is a set of functions, where each function $f_i \in F$ is defined as $fi \coloneqq (\mathcal{V} \cup \mathcal{U})^{p} \to \mathcal{V}$, with $p$ the ariety of $f_i$, so that $f_i$ determines completely the value of $V_i$,
        \item $P$ is a joint probability distribution over the exogenous variables $P(\mathcal{U}) = \prod_{i} P(\mathcal{U}_i)$.
    \end{itemize}
\end{definition}

\begin{assumption}[Causally Sufficiency]
    The set of variables V is said to be causally sufficient if and only if every cause of any subset of V is contained in V itself.
\end{assumption}

Note that, in our setup, due to the \cs, exogenous variables are ignored.

\subsubsection{JCI Assumption-related Concept}

\setcounter{assumption}{-1}
\begin{assumption}["Joint SCM"]\label{ass:simple_scm}
The data-generating mechanism is described by a simple SCM $\C{M}$ of the form:
\begin{equation}\label{eq:SCM_JCI_ass}
  \C{M}:
  \begin{cases}
    C_k = f_k(\B{X}_{\pasub{\C{H}}{k} \cap \C{I}}, \B{C}_{\pasub{\C{H}}{k} \cap \C{K}}, \B{E}_{\pasub{\C{H}}{k} \cap \C{J}}), & \qquad k \in \C{K}, \\
    X_i = f_i(\B{X}_{\pasub{\C{H}}{i} \cap \C{I}}, \B{C}_{\pasub{\C{H}}{i} \cap \C{K}}, \B{E}_{\pasub{\C{H}}{i} \cap \C{J}}), & \qquad i \in \C{I},\\
    \Prb(\B{E}) = \prod_{j\in\C{J}} \Prb(E_j), &
  \end{cases}
\end{equation}
that jointly models the system and the context.
Its graph $\C{G}(\C{M})$ has nodes
$\C{I} \cup \C{K}$ (corresponding to system variables $\{X_i\}_{i\in\C{I}}$ and context variables $\{C_k\}_{k\in\C{K}}$).
\end{assumption}
It will always make this assumption in order to facilitate the formulation of JCI, 
the following three assumptions that we discuss are optional,
and their applicability has to be decided based on a case-by-case basis.
Typically, when a modeler decides to distinguish a \emph{system} from its \emph{context}, the modeler possesses
background knowledge that expresses that the context is \emph{exogenous} to the system:
\begin{assumption}("Exogeneity")\label{ass:uncaused}
  No system variable causes any context variable, i.e.,
$$\forall k \in \C{K}, \forall i\in\C{I}: \quad i \to k \notin \C{G}(\C{M}).$$
\end{assumption}

\begin{assumption}("Complete randomized context")\label{ass:unconfounded}
No context variable is confounded with a system variable, i.e.,
$$\forall k\in\C{K}, \forall i\in\C{I}: \quad i \oto k \notin \C{G}(\C{M}).$$
\end{assumption}

JCI Assumption 1 is often easily justifiable, but the applicability of JCI
Assumption 2 may be less obvious in practice. Then, Assumption 3 is further stated, which
can be useful whenever both JCI Assumptions 1 and 2 have been made as well.

\begin{assumption}("Generic context model")\label{ass:dependences}
The context graph\footnote{Remember that $\C{G}(\C{M})_{\C{K}}$ denotes the subgraph
on the context variables $\C{K}$ induced by the causal graph $\C{G}(\C{M})$.}
$\C{G}(\C{M})_{\C{K}}$ is of the following special form: 
$$\forall k \ne k' \in \C{K}: \quad k \oto k' \in \C{G}(\C{M}) \quad \land \quad k \to k' \notin \C{G}(\C{M}).$$
\end{assumption}

The following key result essentially states that when one is only interested in modeling the causal relations involving the system variables (under JCI Assumptions 1 and 2), one does not need to care about the causal relations between the context variables, as long as one correctly models the context distribution.

\newcommand\cmdTheoremReplaceContext{%
Assume that JCI Assumptions \ref{ass:simple_scm}, \ref{ass:uncaused} and \ref{ass:unconfounded} hold for SCM $\C{M}$:
\begin{equation*}
  \C{M}:
  \begin{cases}
    C_k = f_k(\B{C}_{\pasub{\C{H}}{k} \cap \C{K}}, \B{E}_{\pasub{\C{H}}{k} \cap \C{J}}), & \qquad k \in \C{K}, \\
    X_i = f_i(\B{X}_{\pasub{\C{H}}{i} \cap \C{I}}, \B{C}_{\pasub{\C{H}}{i} \cap \C{K}}, \B{E}_{\pasub{\C{H}}{i} \cap \C{J}}), & \qquad i \in \C{I},\\
    \Prb(\B{E}) = \prod_{j\in\C{J}} \Prb(E_j), &
  \end{cases}
\end{equation*}
For any other SCM $\tilde{\C{M}}$ satisfying JCI Assumptions \ref{ass:simple_scm}, \ref{ass:uncaused} and \ref{ass:unconfounded}
that is the same as $\C{M}$ except that it models the context differently, i.e., of the form
\begin{equation*}
  \tilde{\C{M}}:
  \begin{cases}
    C_k = \tilde{f}_k(\B{C}_{\pasub{\tilde{\C{H}}}{k} \cap \C{K}}, \B{E}_{\pasub{\tilde{\C{H}}}{k} \cap \tilde{\C{J}}}), & \qquad k \in \C{K}, \\
    X_i = f_i(\B{X}_{\pasub{\C{H}}{i} \cap \C{I}}, \B{C}_{\pasub{\C{H}}{i} \cap \C{K}}, \B{E}_{\pasub{\C{H}}{i} \cap \C{J}}), & \qquad i \in \C{I},\\
    \Prb(\B{E}) = \prod_{j\in\tilde{\C{J}}} \Prb(E_j), &
  \end{cases}
\end{equation*}
  with $\C{J} \subseteq \tilde{\C{J}}$ and $\pasub{\C{H}}{i} = \pasub{\tilde{\C{H}}}{i}$ for all $i \in \C{I}$, 
  we have that
  \begin{enumerate}[label=(\roman*)]
    \item the conditional system graphs coincide: $\C{G}(\C{M})_{\intervene(\C{K})} = \C{G}(\tilde{\C{M}})_{\intervene(\C{K})}$;
    \item if $\tilde{\C{M}}$ and $\C{M}$ induce the same context distribution, i.e., $\Prb_{\C{M}}(\B{C}) = \Prb_{\tilde{\C{M}}}(\B{C})$, then for any perfect intervention on the system variables $\intervene(I,\B{\xi}_I)$ with $I \subseteq \C{I}$ (including the non-intervention $I = \emptyset$), $\tilde{\C{M}}_{\intervene(I,\B{\xi}_I)}$ is observationally equivalent to $\C{M}_{\intervene(I,\B{\xi}_I)}$.
    \item if the context graphs
      $\C{G}(\tilde{\C{M}})_{\C{K}}$ and $\C{G}(\C{M})_{\C{K}}$ induce the same separations, then also $\C{G}(\tilde{\C{M}})$ and $\C{G}(\C{M})$ induce the same separations (where ``separations'' can refer to either $d$-separations or $\sigma$-separations).
  \end{enumerate}}
\begin{theorem}\label{thm:replace_context}
\cmdTheoremReplaceContext
\end{theorem}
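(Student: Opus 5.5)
The plan is to treat the three claims separately. Each rests on one structural fact: under JCI Assumptions \ref{ass:uncaused} and \ref{ass:unconfounded}, the system part of $\C{M}$ and $\tilde{\C{M}}$ is literally the same collection of equations $\{f_i\}_{i\in\C{I}}$, with the same parent sets $\pasub{\C{H}}{i}=\pasub{\tilde{\C{H}}}{i}$. Moreover, the exogenous variables feeding the system are independent of everything feeding the context. Exogeneity rules out system $\to$ context edges. Complete randomization rules out a shared exogenous $E_j$ between a context and a system variable. Since $\C{J}\subseteq\tilde{\C{J}}$ and $\Prb(\B{E})$ factorizes, the extra latents in $\tilde{\C{J}}\setminus\C{J}$ can only enter context equations.

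\emph{Part (i).} I will recall that the conditional system graph $\C{G}(\C{M})_{\intervene(\C{K})}$ is obtained by removing all edges into context nodes and all bidirected edges involving context nodes. What remains is determined only by the parent sets of the system variables, i.e.\ by $\pasub{\C{H}}{i}\cap\C{I}$, $\pasub{\C{H}}{i}\cap\C{K}$, and by which latents are shared among system variables. These coincide for both models by hypothesis, so (i) is a direct comparison of edge sets.

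\emph{Part (ii).} I will write the joint law of either model, after a perfect intervention $\intervene(I,\B{\xi}_I)$ on system variables, as
\[
\Prb(\B{C},\B{X}) \;=\; \Prb(\B{C})\,\Prb(\B{X}\mid \B{C}),
\]
where $\Prb(\B{X}\mid\B{C})$ is the law of the solution of the (intervened) system equations with $\B{C}$ held fixed and $\B{E}_{\C{J}_{sys}}$ drawn from its marginal. The key step is to show that, conditional on $\B{C}=\B{c}$, the system exogenous variables still have their unconditional distribution. This holds because $\B{C}$ is a function of context-only latents. Those latents are independent of the system latents by Assumption \ref{ass:unconfounded}, together with exogeneity and the factorization of $\Prb(\B{E})$. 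The conditional factor therefore depends only on the shared $f_i$ and the shared system latents, and is identical in $\C{M}$ and $\tilde{\C{M}}$. The context marginals agree by assumption, so the joints agree. The perfect intervention only replaces some $f_i$ by constants, so it preserves this argument, and simplicity of the SCM guarantees unique solvability.

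\emph{Part (iii).} This is the main obstacle, since it is purely graphical and must handle both $d$- and $\sigma$-separation. I would try first to argue via paths. Any path in $\C{G}(\C{M})$ decomposes into maximal segments inside the context subgraph and segments through system nodes. By exogeneity, every edge between the two parts is of the form $k\to i$; complete randomization leaves no bidirected edges between them. Hence a system node is never a parent of a context node, and an open path can enter the context part only through nodes $k$ that point into the system. I will show that for sets $A,B,Z$, an open path in $\C{G}(\C{M})$ can be transformed into an open path in $\C{G}(\tilde{\C{M}})$. The transformation replaces each context segment between two boundary context nodes, given $Z\cap\C{K}$ and the relevant ancestral information, by a corresponding open context path in $\tilde{\C{M}}$, which exists by the assumed equality of separations on the context graphs. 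The delicate point is colliders: their openness depends on ancestors in $Z$, which may lie in the system part. I plan to handle this by noting that ancestors of context nodes are context nodes only, and by enlarging the conditioning set on the context side to $(Z\cap\C{K})$ together with the context ancestors relevant through system descendants. This should reduce the question to a separation statement purely within the context graph, where the hypothesis applies. Symmetry between $\C{M}$ and $\tilde{\C{M}}$ then gives the equivalence. For $\sigma$-separation the same argument should go through with strongly connected components in place of single nodes, because no cycle can cross between the context and system parts.
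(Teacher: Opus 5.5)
The paper quotes this result from Mooij et al.\ (2020) without a proof, so your proposal has to stand on its own. Parts (i) and (ii) are fine. In (i), the conditional system graph is determined by the parent sets of the system nodes. In (ii), $\B C$ is a function of context-only exogenous variables, which are independent of the system ones. So the law of $(\B C,\B X)$ is the image of $\Prb(\B C)\otimes\Prb(\B E_{\mathrm{sys}})$ under $(\B c,\B e)\mapsto(\B c,g(\B c,\B e))$, where $g$ is the solution map of the (intervened) system equations, common to both models. Part (iii) has a genuine gap, at exactly the step you call delicate.

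\textbf{Why the enlargement fails.} Let $W$ be the set of context nodes with an edge into a system ancestor of $Z\cap\C I$. This set is the same in both models. For $d$-separation, a context collider is open iff it lies in $\mathrm{an}\bigl((Z\cap\C K)\cup W\bigr)$, computed inside the context graph. The hypothesis controls separations of the context graphs, not their ancestral relations, and these can differ between graphs with equal separations (compare $k_1\to k_2\leftarrow k_3$ with $k_1\leftrightarrow k_2\leftrightarrow k_3$). So an open context segment must first be turned into a genuine separation statement. Conditioning on $(Z\cap\C K)\cup W$, or on context ancestors of $Z$, does not do this. Nodes of $W$ are not in $Z$, so they may sit on the segment as non-colliders, and conditioning on them blocks it. For example, take context edges $k_1\to c\leftarrow d\to w\to k_2$ and $c\to w$, plus $w\to i$, $k_1\to i_1$, $k_2\to i_2$, with $Z=\{i\}$. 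The path $i_1\leftarrow k_1\to c\leftarrow d\to w\to k_2\to i_2$ is open, yet $k_1\perp k_2\mid w$ in the context graph. The correct statement here is $d$-connection given $\emptyset$, via $k_1\to c\to w\to k_2$.

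\textbf{The missing lemma.} Suppose $u,v$ are joined in the context graph by a path whose non-colliders avoid $Z\cap\C K$ and whose colliders lie in $\mathrm{an}\bigl((Z\cap\C K)\cup W\bigr)$. Then $u,v$ are $d$-connected given $(Z\cap\C K)\cup T$ for some $T\subseteq W\setminus\{u,v\}$; the converse is immediate. One proof treats one $w\in W$ at a time. If $w$ is a non-collider on the path, then on each side of $w$ take the offending collider (one in $\mathrm{an}(w)\setminus\mathrm{an}(Z)$) farthest from $w$, and splice in a directed path from it down to $w$. That directed path avoids $Z$ because the collider is not in $\mathrm{an}(Z)$. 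The result is a walk open given $Z\cap\C K$ or given $(Z\cap\C K)\cup\{w\}$.

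\textbf{Finishing the argument.} Only with this lemma can you invoke the hypothesis in $\C G(\tilde{\C M})_{\C K}$. The colliders of the replacement segment then lie in $\mathrm{an}_{\C G(\tilde{\C M})}(Z)$ because $T\subseteq W$. You also need the walk-to-path lemma, since replacement segments for different context stretches may share nodes. For $\sigma$-separation, ``components in place of nodes'' is not enough. Two context graphs with the same $\sigma$-separations can have different strongly connected components, and the blocking rule for non-colliders depends on them. So the lemma must be redone under the $\sigma$-blocking rule. The boundary nodes are harmless, because their edge into the system leaves their component, so they cannot lie in $Z$ on an open path.
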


The following corollary of Theorem~\ref{thm:replace_context} states that JCI Assumption~\ref{ass:dependences} can be made without loss of generality for the purposes of constraint-based causal discovery if the context distribution contains no conditional independences:
\newcommand\cmdCorollaryJCIAssC{%
Assume that JCI Assumptions \ref{ass:simple_scm}, \ref{ass:uncaused} and \ref{ass:unconfounded} hold for SCM $\C{M}$.
Then there exists an SCM $\tilde{\C{M}}$ that satisfies JCI Assumptions \ref{ass:simple_scm}, \ref{ass:uncaused} and \ref{ass:unconfounded} and \ref{ass:dependences}, such that
  \begin{enumerate}[label=(\roman*)]
    \item the conditional system graphs coincide: $\C{G}(\C{M})_{\intervene(\C{K})} = \C{G}(\tilde{\C{M}})_{\intervene(\C{K})}$;
    \item for any perfect intervention on the system variables $\intervene(I,\B{\xi}_I)$ with $I \subseteq \C{I}$ (including the non-intervention $I = \emptyset$), $\tilde{\C{M}}_{\intervene(I,\B{\xi}_I)}$ is observationally equivalent to $\C{M}_{\intervene(I,\B{\xi}_I)}$;
    \item if the context distribution $\Prb_{\C{M}}(\B{C})$ contains no conditional or marginal independences, then the same $\sigma$-separations hold in $\C{G}(\tilde{\C{M}})$ as in $\C{G}(\C{M})$; if in addition, the Directed Global Markov Property holds for $\C{M}$, then also the same $d$-separations hold in $\C{G}(\tilde{\C{M}})$ as in $\C{G}(\C{M})$.\label{coro:JCI_ass_3_dependences}
  \end{enumerate}
}
\begin{corollary}\label{coro:JCI_ass_3}
\cmdCorollaryJCIAssC
\end{corollary}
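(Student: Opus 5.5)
We would derive Corollary~\ref{coro:JCI_ass_3} from Theorem~\ref{thm:replace_context} by explicitly constructing a context model $\tilde{\C{M}}$ that satisfies JCI Assumption~\ref{ass:dependences}. Keep all system equations $f_i$, $i\in\C{I}$, and their parent sets unchanged. Replace the context part by a single new latent exogenous variable $E_0$ with distribution $\Prb(E_0)=\Prb_{\C{M}}(\B{C})$, taking values in the joint context space. Set $C_k=\tilde f_k(E_0):=\pi_k(E_0)$, the $k$-th coordinate projection, and take $\tilde{\C{J}}=\C{J}\cup\{0\}$. In $\tilde{\C{M}}$ every pair $k\neq k'$ then shares the latent parent $E_0$, so $k\oto k'$ holds, and there are no directed context--context edges. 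This is exactly Assumption~\ref{ass:dependences}.

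Next we check Assumptions~\ref{ass:simple_scm}--\ref{ass:unconfounded} for $\tilde{\C{M}}$. Exogeneity holds because context equations depend only on $E_0$. Complete randomization holds because $E_0$ is not a parent of any system variable and the system exogenous $\B{E}_{\C{J}}$ are unchanged and independent of $E_0$. Simplicity and unique solvability are inherited from $\C{M}$, since the context block is solved explicitly. With $\pasub{\C{H}}{i}=\pasub{\tilde{\C{H}}}{i}$ and $\C{J}\subseteq\tilde{\C{J}}$, Theorem~\ref{thm:replace_context}(i) gives item (i). By construction $\Prb_{\tilde{\C{M}}}(\B{C})=\Prb(\pi(E_0))=\Prb_{\C{M}}(\B{C})$, so Theorem~\ref{thm:replace_context}(ii) gives item (ii).

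For item (iii) we use Theorem~\ref{thm:replace_context}(iii). It therefore suffices to show that $\C{G}(\tilde{\C{M}})_{\C{K}}$ and $\C{G}(\C{M})_{\C{K}}$ induce the same separations. In $\C{G}(\tilde{\C{M}})_{\C{K}}$ every two context nodes are bidirected-adjacent, so no separation holds. The task is thus to show that $\C{G}(\C{M})_{\C{K}}$ also has no $\sigma$-separations when $\Prb_{\C{M}}(\B{C})$ has no conditional or marginal independences. We would argue by contraposition. A $\sigma$-separation $A\perp B\mid S$ in the context subgraph of $\C{M}$ lifts to the full graph $\C{G}(\C{M})$: by exogeneity, any path between context nodes that leaves $\C{K}$ passes through system nodes only as colliders, or can be shortened to stay within $\C{K}$. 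The separation then implies, via the generalized directed global Markov property (which holds for simple SCMs with $\sigma$-separation), the conditional independence $C_A\perp C_B\mid C_S$ in $\Prb_{\C{M}}(\B{C})$, contradicting the hypothesis. For $d$-separations the same argument applies once the directed global Markov property for $\C{M}$ is assumed.

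We expect the main obstacle to be the lifting step in item (iii). It requires the Markov property to hold for the marginal context model. The fact that context nodes cannot have system parents, under Assumption~\ref{ass:uncaused}, makes $\C{K}$ an ancestral set, and that is what we would exploit. Marginalizing onto an ancestral set preserves both the Markov property and the induced separations. This reduces the lifting to the context SCM alone, after which the remaining steps are routine.
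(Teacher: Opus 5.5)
The paper gives no proof of this corollary; it is quoted from Mooij et al.~(2020). Your argument is essentially the original one there, and it is correct:
\begin{itemize}
\item a single new latent exogenous variable with law $\mathbb{P}_{\mathcal{M}}(\mathbf{C})$ and coordinate-projection context mechanisms;
\item items (i)--(ii) read off from Theorem~\ref{thm:replace_context};
\item item (iii) obtained by using the (generalized) directed global Markov property to show that $\mathcal{G}(\mathcal{M})_{\mathcal{K}}$ has no separations, just like the complete bidirected graph $\mathcal{G}(\tilde{\mathcal{M}})_{\mathcal{K}}$.
\end{itemize}

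One justification in the lifting step needs repair. A path leaving $\mathcal{K}$ need not meet system nodes only as colliders: in $k\to i_1\to i_2\leftarrow k'$, the node $i_1$ is a non-collider. ``Shortening'' a path is also not a legitimate step for establishing blocking.

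The right argument has two parts:
\begin{itemize}
\item Every edge between $\mathcal{K}$ and $\mathcal{I}$ has its arrowhead at the system end. So each excursion into $\mathcal{I}$ contains at least one collider in $\mathcal{I}$. By exogeneity that collider is not an ancestor of $S\subseteq\mathcal{K}$, so the path is blocked.
\item A path that stays inside $\mathcal{K}$ is blocked in $\mathcal{G}(\mathcal{M})$ exactly when it is blocked in $\mathcal{G}(\mathcal{M})_{\mathcal{K}}$. This holds because $\mathcal{K}$ is ancestral, so ancestors and strongly connected components of context nodes are the same in both graphs.
\end{itemize}

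Alternatively, as your last paragraph suggests, marginalize out $\mathcal{I}$. This gives a simple SCM whose graph is $\mathcal{G}(\mathcal{M})_{\mathcal{K}}$, and you can apply the Markov property to it directly. Note that the graphical lifting itself needs no Markov property; the Markov property enters only afterwards, to turn a separation into a conditional independence in $\mathbb{P}_{\mathcal{M}}(\mathbf{C})$.
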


\subsection{Related Work}

In this section, we present some of the most relevant work to the key points of this paper.

\subsubsection{Interventional Causal Discovery}

Recovering the underlying causal structure from observational and interventional data is a fundamental research problem~\citep{spirtes2000causation,pearl2009causality}. When only observational data are available, constraint-based methods identify a directed acyclic graph (DAG) consistent with conditional independence constraints. Notable examples include the PC algorithm~\citep{spirtes1991algorithm} and its variants, such as Conservative-PC~\citep{ramsey2012adjacency} and PC-stable~\citep{colombo2014order}. Score-based methods, including GES~\citep{chickering2002learning} and hill-climbing~\citep{koller2009probabilistic}, search for the optimal DAG under a predefined scoring function combined with constraints. Gradient-based methods further extend score-based approaches by transforming discrete searches into continuous equality constraints, such as NOTEARS~\citep{zheng2018dags}, GAE~\citep{ng2019graph}, and GraN-DAG~\citep{yu2019dag}. \textit{However, without specific assumptions, the identifiability of structures solely derived from observational data is theoretically limited.} Some works have extended causal discovery algorithms to intervention settings~\citep{hauser2012characterization,wang2017permutation,yang2018characterizing,squires2020permutation,zhang2024identifiability,von2024nonparametric}. For example, GIES~\citep{hauser2012characterization} pioneered the case with known hard interventions, while IGSP~\citep{wang2017permutation,yang2018characterizing} introduced a greedy sparse ordering method to handle known general interventions. Building on this, UT-IGSP~\citep{squires2020permutation} partially addresses the scenario of unknown target interventions. However, given the diversity of intervention data types and experimental strategies, these methods still fail to universally address various intervention scenarios, limiting effective causal identification. \textit{In contrast, our approach offers a unified solution for causal discovery from intervention data by leveraging a joint causal inference framework combined with supervised causal learning methods, achieving practical empirical performance.}

\subsubsection{Supervised Causal Learning}
Supervised Causal Learning (SCL) is an emerging paradigm of causal discovery, and has demonstrated strong empirical performance. The strength of SCL lies in its ability to learn complex classification mechanisms, contrasting with traditional rule-based logics for detecting causal relations. Early SCL work focused on pairwise causal discovery, such as RCC~\citep{lopez2015randomized} and MRCL~\citep{hill2019causal}. 
Further efforts have shifted toward multivariate causal discovery, with models like DAG-EQ~\citep{li2020supervised} and SLdisco~\citep{petersen2022causal}, which apply to linear causal models. 
ML4C~\citep{dai2023ml4c} and ML4S~\citep{ma2022ml4s} focus on v-structure detection and skeleton learning, respectively.
For interventional data, methods such as CSIvA~\citep{ke2023learning} and AVICI~\citep{lorch2022amortized} extended SCL to handle known, hard interventions. However, these models are limited in generalization. \textit{In contrast, our approach adopts a test-time training paradigm, which introduces a process that acquires training data during test time. This allows the model to 'overfit' to the specific biases of the test data, addressing the generalization issue and significantly improving performance in real-world settings.}

\subsubsection{Test-Time Training}
A body of work~\citep{ttt, wang2020tent, wang2022continual, liu2021ttt++,hardt2024test,sun2025learning,dalal2025one,chen2025stttc,tandon2025end} has explored test-time training paradigm to address the challenge of distribution shift in test data. A common approach is to identify an auxiliary task that aids the model in better adapting to the test data. For instance, TTT~\citep{ttt} jointly trains a model for rotation prediction and image classification. TTT++~\citep{liu2021ttt++} extends this by employing a contrastive learning approach as an auxiliary task for adaptation. 
In the context of causal discovery, test-time training exhibits distinctive characteristics. Unlike prior works, causal discovery benefits from the ability to 
\textit{generate test-time training data} in a self-supervised manner. 
We refer to this approach as \textit{self-augmentation}.
This self-augmentation allows test-time training data to capture nuances and biases inherent to the test data, thereby making it well-suited for supservised causal learning. Currently, there is limited work in this area, with the most relevant being ML4S~\citep{ma2022ml4s}, which proposes a heuristic for generating vicinal graphs at test time for skeleton learning. \textit{To the best of our knowledge, we are the first to present a systematic approach to test-time adaptation in SCL, and we extend it to a broader context of causal discovery in general interventional settings.}

\subsubsection{Joint Causal Inference} 

Joint Causal Inference (JCI)~\citep{mooij2020joint} presents a joint causal inference framework aiming to integrate multiple observed outcomes collected during different experiments (\ie, contexts). In this framework, the observed variable set is divided into two disjoint sets: system variables and context variables. After that, S-FCI~\citep{li2023causal} builds upon JCI by introducing a new constraint-based algorithm, enabling learning from observational and intervention data across multiple domains. \citep{mascaro2023bayesian} also provides a graphical representation theory of I-MEC under general interventions and designs compatible priors for Bayesian inference to ensure score equivalence of indistinguishable structures. Although JCI provides a novel framework for addressing intervention problems and simplifies the unification of different intervention settings for supervised causal learning, it does not directly address the critical questions of what the appropriate learning objectives are when applying supervised learning, and how the learning process should be designed. \textit{To this end, we highlight the overlooked connection between identifiability in ICD and SCL. Through the modified PC-like SCL algorithm, a two-phase learning method is developed for predicting identifiable causal structures ($\mathcal{I}$-CPDAG) while ensuring theoretical identifiability.}

%% file: appendix/2_methods.tex
\section{Inplementation Details}

\subsection{Training Data Acquisition via Self-Augmentation}\label{data_detail}

\subsubsection{Parameter Re-Estimation for CPT}\label{param_est}

The joint distribution represented by a randomly generated conditional probability table (CPT) may significantly differ from the true joint distribution corresponding to the augmented data, thus undermining the forward-sampling process. Therefore, we propose approximating the joint posterior not only on the structure of the causal graph but also on the parameters of its conditional probability distributions. This approach ensures that the distribution of the augmented graph maintains in-domain "similarity" with the true underlying distribution.

Specifically, we first use maximum likelihood estimation~\citep{myung2003tutorial} to estimate the conditional probability tables (CPTs) of the initial seed graph. Secondly, during the IS-MCMC iteration process, we propose that the CPTs of the current step's graph are determined by the CPTs from the previous step. For each node in the current step's graph, if its parent nodes remain the same as those in the corresponding node from the previous step's graph, its CPT is directly inherited from the previous step. Otherwise, nodes may lose previous parent nodes due to edge deletion or gain new parent nodes due to edge addition as a result of graph structure transformation operations.

For edge deletions that result in the loss of previous parent nodes, we can adjust the CPT by \textit{marginalization}. For instance, in step $t-1$, if node $X$ has parent nodes $Y$ and $Z$, the corresponding CPT encodes the distribution $P(X \vert Y,Z)$. In step $t$, if the edge $Y \rightarrow X$ is deleted, the new conditional distribution should encode $P(X \vert Z)$. This can be naturally achieved by marginalizing out the node $Y$ using the law of total probability, \ie, $P(X \vert Z)=\sum_{y}P(X \vert Z, Y=y)$.

For edge additions that result in new parent nodes, the situation becomes more complicated. For instance, in step $t$, if a new edge $U \rightarrow X$ is added, introducing a new parent node $U$, the corresponding conditional probability distribution should encode $P(X \vert Y,Z,U)$. Considering we primarily deal with discrete data, and the \textit{Dirichlet-multinomial} distribution is a natural choice for modeling categorical distributions, we use the Dirichlet distribution to sample different CPTs. Specifically, for each conditional probability distribution $P(X \vert Y=y,Z=z,U=u)$, we estimate its parameters $\alpha_i$ by maximizing the log-likelihood function of the data, which is given by:
$$\begin{aligned}F(\alpha)=\log p(D|\alpha)&=\quad\log\prod_{i}p(\mathbf{p}_{i}|\alpha)=\quad\log\prod_{i}\frac{\Gamma\left(\sum_{k}\alpha_{k}\right)}{\prod_{k}\Gamma(\alpha_{k})}\prod_{k}p_{ik}^{\alpha_{k}-1}\\&=\quad N\left(\log\Gamma\left(\sum_{k}\alpha_{k}\right)-\sum_{k}\log\Gamma\left(\alpha_{k}\right)+\sum_{k}(\alpha_{k}-1)\log\hat{p}_{k}\right)\end{aligned},$$

where $\log\hat{p}_k=\frac1N\sum_i\log p_{ik}$ represents the observed sufficient statistics, $\Gamma(x)$ denotes the Gamma function and is defined to be: $\int_0^\infty t^{x-1}e^{-t}dt$. Since this function does not have a closed-form solution, we employ a fixed-point iteration technique~\citep{minka2000estimating} to estimate the parameters. The core idea is to find an initial guess for $\alpha$ and iteratively refine it to converge to the maximum likelihood estimate. It seek a function $F(\cdot)$ that provides a lower bound for the log-likelihood function:

$$F(\alpha)\geq N\left(\left(\sum_k\alpha_k\right)\Psi\left(\sum_k\alpha_k^{old}\right)-\sum_k\log\Gamma(\alpha_k)+\sum_k\alpha_k\log\hat{p}_k+C\right),$$

where $C$ is a constant dependent on $\alpha$, ensuring that the function is optimized at $\alpha$. By setting the gradient to zero, a new iterative value for $\alpha$ is derived:

$$\alpha_k^{i}=\Psi^{-1}\left(\Psi\left(\sum_k\alpha_k^{i-1}\right)+\log\hat{p}_k\right),$$

where inverse digamma function $\Psi^{-1}$ can be efficiently solved using the Newton-Raphson method~\citep{ypma1995historical}. Subsequently, for each possible value $x$ of the variable $X$, we sample from $\textit{Dirichlet}(\beta \alpha_{i})$ to obtain $P(X \vert Y=y,Z=z,U=u)$, thereby forming our target distribution $P(X \vert Y,Z,U)$. Here, $\beta$ is a hyperparameter that adjusts the variance, set to 0.25.

\subsubsection{Interventional Structural-Markov Chain Monte Carlo}\label{ismcmc_detail}

\input{tables/ismcmc_algorithm}

\input{tables/ismcmc_workflow.tex}

We first present the standard version of the IS-MCMC algorithm, as outlined in Algorithm~\ref{basic_ismcmc}. The first step involves initializing the seed graph, which is typically sampled from the prior distribution of the augmented graph. The second step estimates the parameters of the conditional probability tables using maximum likelihood estimation based on the initial graph structure and the augmented data. The main loop of the algorithm consists of three parts: (1) generating a proposed (or candidate) sample graph from the proposal distribution with intervention constraint; (2) calculating the acceptance probability using the acceptance function $\alpha(\cdot)$, based on the proposal distribution and the posterior probability; (3) accepting the candidate sample with probability $\alpha$ (the acceptance probability), or rejecting the candidate sample with probability $1 - \alpha$.

However, as the number of nodes increases, the computational efficiency of the standard IS-MCMC algorithm becomes untenable. Therefore, for the IS-MCMC algorithm applied to large graphs, we enhance it from two aspects. For the initial random seed graph in the first step, we recommend using proxy algorithms such as JCI+BLIP, which can provide a good starting point that is closer to the target graph structure, thereby accelerating the convergence rate. For the iterative MCMC process in the second step, we suggest parallelizing the computation by simultaneously executing multiple chains. Each chain performs consecutive edge perturbations, followed by sequential accept-reject sampling, to conduct approximate computations. The optimized IS-MCMC workflow is shown in Algorithm~\ref{ismcmc_workflow}. If not specified otherwise, we use the optimized IS-MCMC method as the default setting.

\begin{figure*}[htbp!]
\begin{center}
\includegraphics[width=0.9\linewidth]{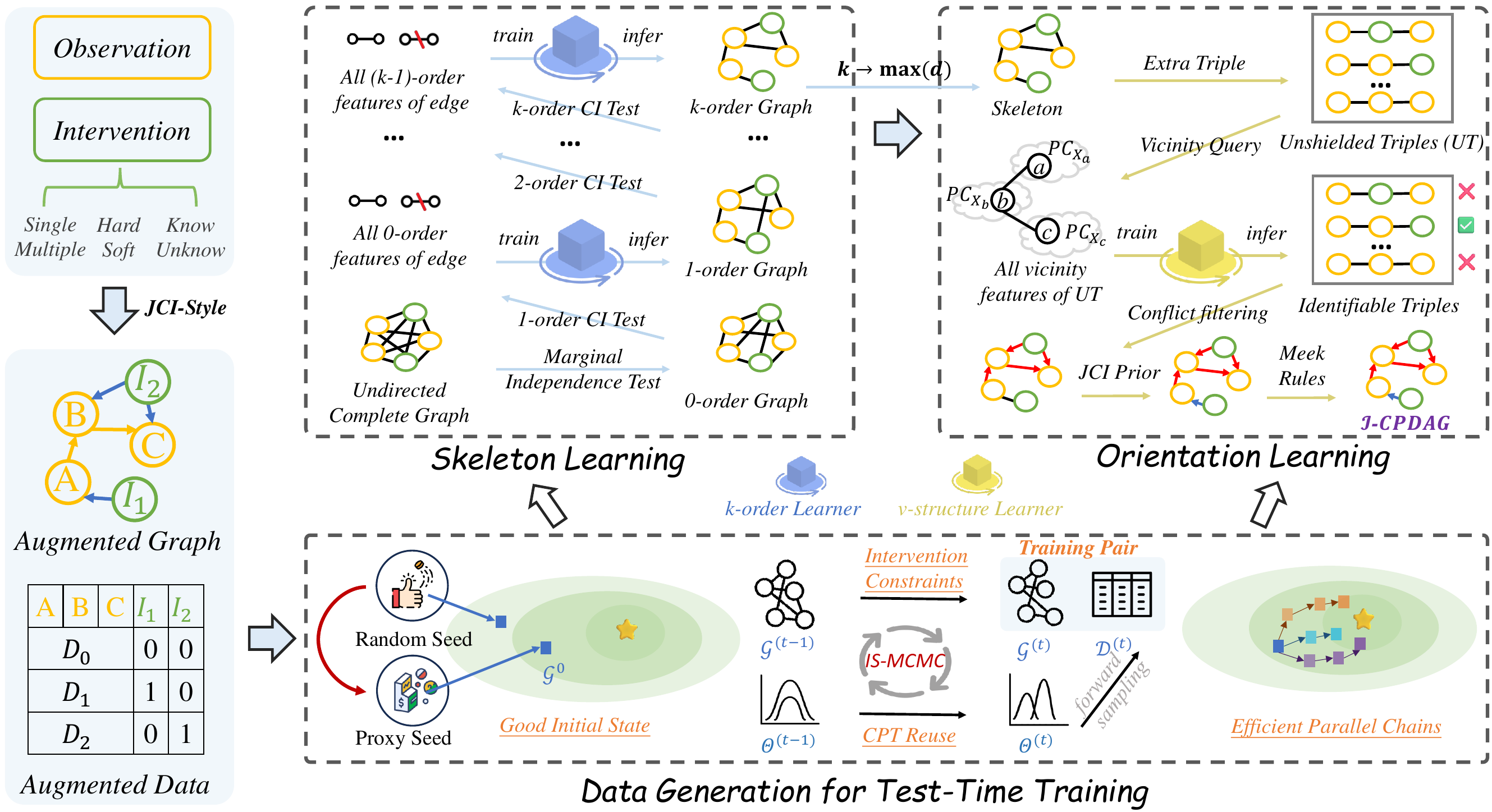}
\caption{The detailed framework of \model.}
\label{fig.framework_detail}
\vspace{-2mm}
\end{center}
\end{figure*}

\subsection{Two-phase Supervised Causal Learning}\label{two_phase_sup}

The two-phase supervised learning approach of \model is inspired by the PC algorithm~\citep{spirtes1991algorithm}. We first describe the PC algorithm in detail and then reinterpret it from a machine learning perspective, providing a formalization. We also provide detailed feature engineering and techniques. 
As shown in Figure~\ref{fig.framework_detail}, we further present a more detailed framework diagram, which includes the specific process of two-phase learning.

\subsubsection{Revisiting the PC Algorithm}\label{modify_pc}

The PC algorithm consists of three main phases, as shown in Algorithm~\ref{pc_pipeline}. Phase 1 identifies the skeleton and the separating sets, determining the existence of edges. Phase 2 orients the unshielded triples in the skeleton based on the separating sets, establishing edge directionality. Finally, phase 3 further refines the edge directionality using heuristic Meek Rules.

\input{tables/pc_overall}

In phase 1, we start with a complete undirected graph $\mathcal{C}$. This graph is then sparsified through iterative conditional tests information $\mathbf{CI}$, where an edge $X_i - X_j$ is removed if $X_i$ is conditionally independent of $X_j$ given some subset $\mathcal{S}$ of the remaining variables of the current $k$-order graph. These conditional independence queries proceed in a cascading manner, making the algorithm computationally efficient for high-dimensional sparse graphs since we only need to query conditional independencies up to order $d-1$, where $d$ is the maximal in degree of the underlying DAG. We summarize this process in Algorithm~\ref{pc_adj_pipeline}.

\input{tables/pc_skeleton}

In phase 2, it aims to identify V-structures. Specifically, it considers all unshielded triples $\mathcal{U}$ in the skeleton $\mathcal{K}$ and orients an unshielded triple $(X_a, X_c, X_b)$ into a V-structure if and only if $X_c$ is not in the separating set of $X_a$ and $X_b$. We also summarize this process in Algorithm~\ref{pc_direction_pipeline}.

\input{tables/pc_orientation}

In phase 3, heuristic meek rules~\citep{meek1995causal} are further applied iteratively to orient as many of the remaining undirected edges as possible. It contains the following three rules:

\begin{itemize}[leftmargin=3em,itemsep=0.5em]
    \item \textbf{R1}: If $X_a \rightarrow X_b - X_c$ exists, change $X_b - X_c$ to $X_b \leftarrow X_c$ (to avoid creating a new V-structure).
    \item \textbf{R2}: If $X_a \rightarrow X_b \rightarrow X_c$ exists, change $X_a - X_c$ to $X_a \rightarrow X_c$ (otherwise a directed cycle is created).
    \item \textbf{R3}: If there are $X_a - X_{c1} \rightarrow X_{b}$, $X_a - X_{c2} \rightarrow X_{b}$, and $X_{c1}, X_{c2}$ are not adjacent, change $X_a - X_b$ to $X_a \rightarrow X_b$ (otherwise a new v-structure or a directed cycle is created).
\end{itemize}

\subsubsection{Connecting PC and Supervised Machine Learning}

In addition to phase 3, phase 1 and 2 can be considered as classification tasks regarding the determination of the presence of edges and the orientation of unshielded triples based on extracted conditionally independent features. We continue below with a detailed explanation.

$\blacksquare$~From a machine learning perspective, we formalize \textcolor{blue!40}{phase 1} as follows:

\textbf{Task:} For the current $k$-oreder graph, classify whether there is an edge between vertices $X_i$ and $X_j$.

\textbf{Featurization:} Query a subset $\mathcal{S}$ of the current remaining variables of $k$-oreder graph, and calculate the conditional dependence between $X_i$ and $X_j$ given $\mathcal{S}$: $$F^{(k)}_{(X_i,X_j)}=\min_{\mathcal{S} \subseteq \mathbf{X} \setminus\{X_{i}, X_{j}\}}\{X_{i}\sim X_{j}|\mathcal{S}\}$$

\textbf{Classifier:} Train a binary classifier for $(k+1)$-order graph using current $k$-order features:
$${C}^{(k+1)}_{skeleton}(F^{(k)}_{(X_i,X_j)}):=\left\{\begin{matrix}adjacent&F^{(k)}_{(X_i,X_j)}\neq0\\non-adjacent&F^{(k)}_{(X_i,X_j)}=0\end{matrix}\right.$$

$\blacksquare$~From a machine learning perspective, we formalize \textcolor{purple!40}{phase 2} as follows:

\textbf{Task:} For all unshielded triples $\mathcal{U}$, classify whether each triple $<X_a,X_c,X_b>$ is a v-structure.

\textbf{Featurization:} Query all separating sets $\mathcal{SS}$ satisfying $X_a \perp X_b | \mathcal{SS}$, and calculate the existence Boolean feature:
$$F_{<X_a,X_c,X_b>}=\left\{\begin{matrix}1&X_c \in 
 \mathcal{SS}\\0&X_c \notin 
 \mathcal{SS}\end{matrix}\right.$$

\textbf{Classifier:} Train a binary classifier using features:
$${C}_{orientation}(F_{<X_a,X_c,X_b>}):=\left\{\begin{matrix}v\text{-}structure&F_{<X_a,X_c,X_b>}=0\\non-v\text{-}structure&F_{<X_a,X_c,X_b>}\neq0\end{matrix}\right.$$

\vspace{2mm}
\begin{theorem} \textbf{(Spirtes and Glymour~\citep{spirtes2000causation})} Let the distribution of $\mathbf{X}$ be faithful to a DAG $\mathcal{G}$, and assume that we are given perfect conditional independence information about all pairs of variables $(X_i, X_j)$ in $\mathbf{X}$ given subsets $\mathcal{S} \subseteq \mathbf{X} \setminus\{X_{i}, X_{j}\}$. The output of PC algorithm is the CPDAG that represents $\mathcal{G}$.
\end{theorem}

However, in practical applications, conditional independence tests relying on data estimates can encounter issues like limited sample size, data quality issues, and intricate dependency structures~\citep{gunther2022conditional}. Thus, it is advisable to employ more systematic featurization procedures to enhance classification performance in a more effective and robust manner.

\subsubsection{Skeleton and Orientation Feature Engineering}\label{feature_eng}

In the realm of skeleton learning, drawing upon the experience of~\citep{cheng2001learning,ding2020reliable,xiang2013lasso,ma2022ml4s}, we have extracted primary features of the two categories: quantitative $k$-order conditional dependencies and local structural information. For orientation learning, again relying on the experience of~\citep{vijaymeena2016survey,zanga2022survey,dai2023ml4c}, we have extracted main features of the two categories: quantitative unshielded triplet conditional dependencies and local structural information. The intuition behind it is detailed below.

\begin{figure*}[htbp!]
\begin{center}
\includegraphics[width=0.6\linewidth]{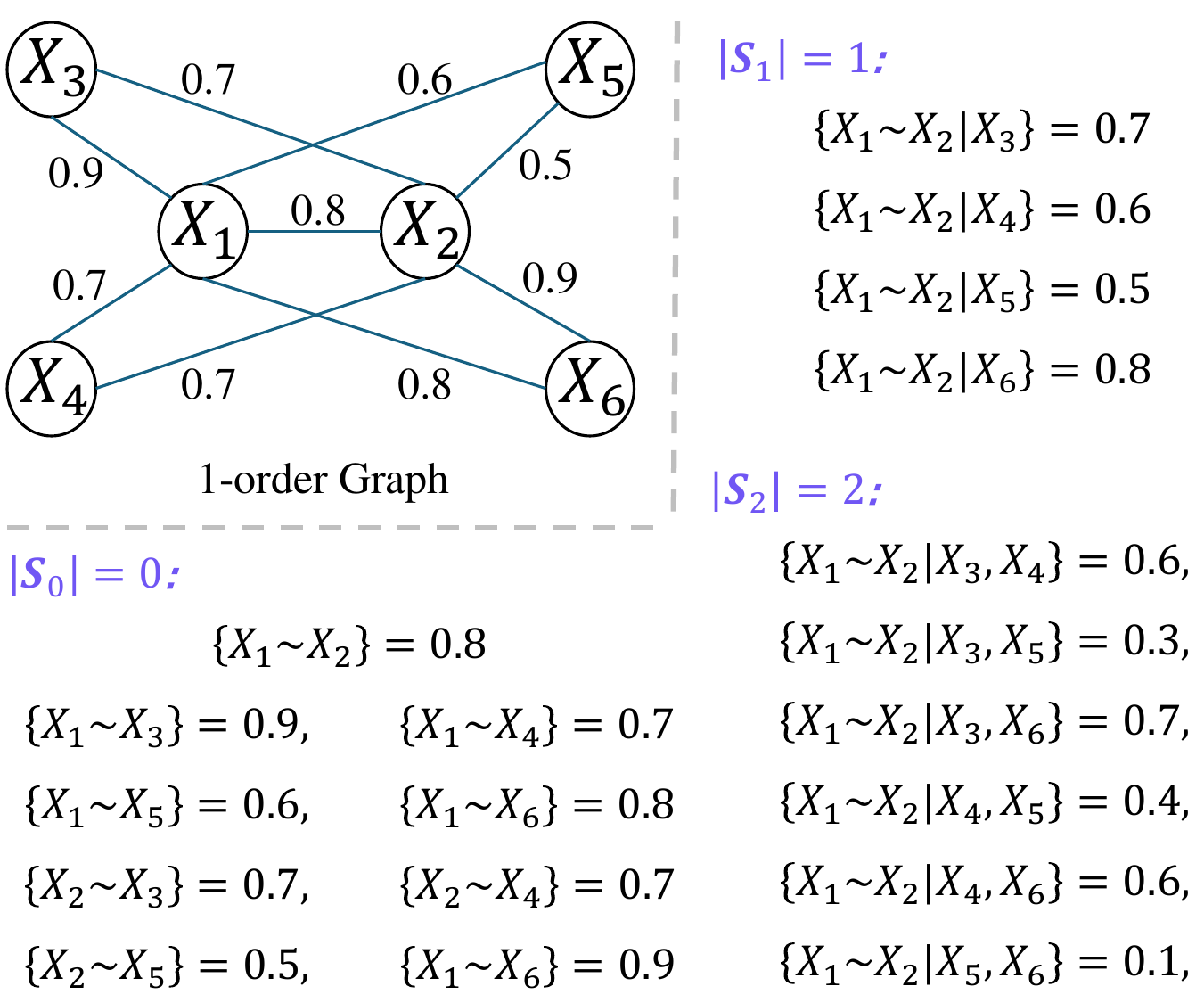}
\caption{Feature extraction example of edge $X_1-X_2$ in 1-order skeleton graph.} 
\label{fig.skeleton_example}
\end{center}
\end{figure*}

In skeleton inference, we first look for useful features of the type of conditional test information. Intuitively, for the numerical vectors resulting from the conditional tests performed on the current node pairs, which indicate the conditional dependency between nodes, we consider them as primary features. Additionally, when conducting higher-order conditional tests, if the dependency relationships decrease significantly, it suggests that they may be blocked by new nodes, so this residual conditional dependency can be seen as another complementary feature. Considering the structural aspect, conditional independence may be influenced by the information of the local graph structure in which the current node pairs are located. Generally, the sparser the local graph is, the less likely it is to be disconnected by conditional independence tests. Therefore, we select three main features for measurement, including the relative competitiveness of adjacent edges between node pairs, the degrees of node pairs, and the overlapping density of adjacent edges of node pairs. In Figure~\ref{fig.skeleton_example}, we provide an example of a 1-order skeleton graph, based on which we further summarize the types of feature variables and calculation methods in Table~\ref{tab:skeleton_feature_detail}.

\input{tables/skeleton_feature_detail}

In the unshielded triplet orientation, we also seek useful features of the condition test information type. Unlike relying solely on specific triplet condition test information, we extend it to the neighborhood. Specifically, in addition to testing variables $X_a$ and $X_b$ themselves, we also extend the set variables of their parents and children, ${PC}_{X_a}$ and ${PC}_{X_b}$, as testing variable domains. For the condition variables, in addition to the empty set $\emptyset$, $\mathbf{SS}$, $X_c$, we further extend the set variables of the parents and children of $X_c$ to the condition domain, including their element-wise union $\mathbf{SS}\vee{X_c}$, $\mathbf{SS}\vee{PC}_{X_c}$. Therefore, by selecting a pair of variables from the variable domain and a condition from the condition domain, we obtain a total of $4 \times 6 = 24$ neighborhood conditional dependency features. Furthermore, considering structural information, we overlap numerical and scale numerical measures of the size and overlap features of the sets of the variable domain and the condition domain to reflect the potential sparsity of the local graph structure. As shown in Figure~\ref{fig.orientation_example}, we provide an example of directional feature extraction, based on which we further summarize the types of feature variables and calculation methods in Table~\ref{tab:orientation_feature_detail}.

\begin{figure*}[t!]
\begin{center}
\includegraphics[width=0.8\linewidth]{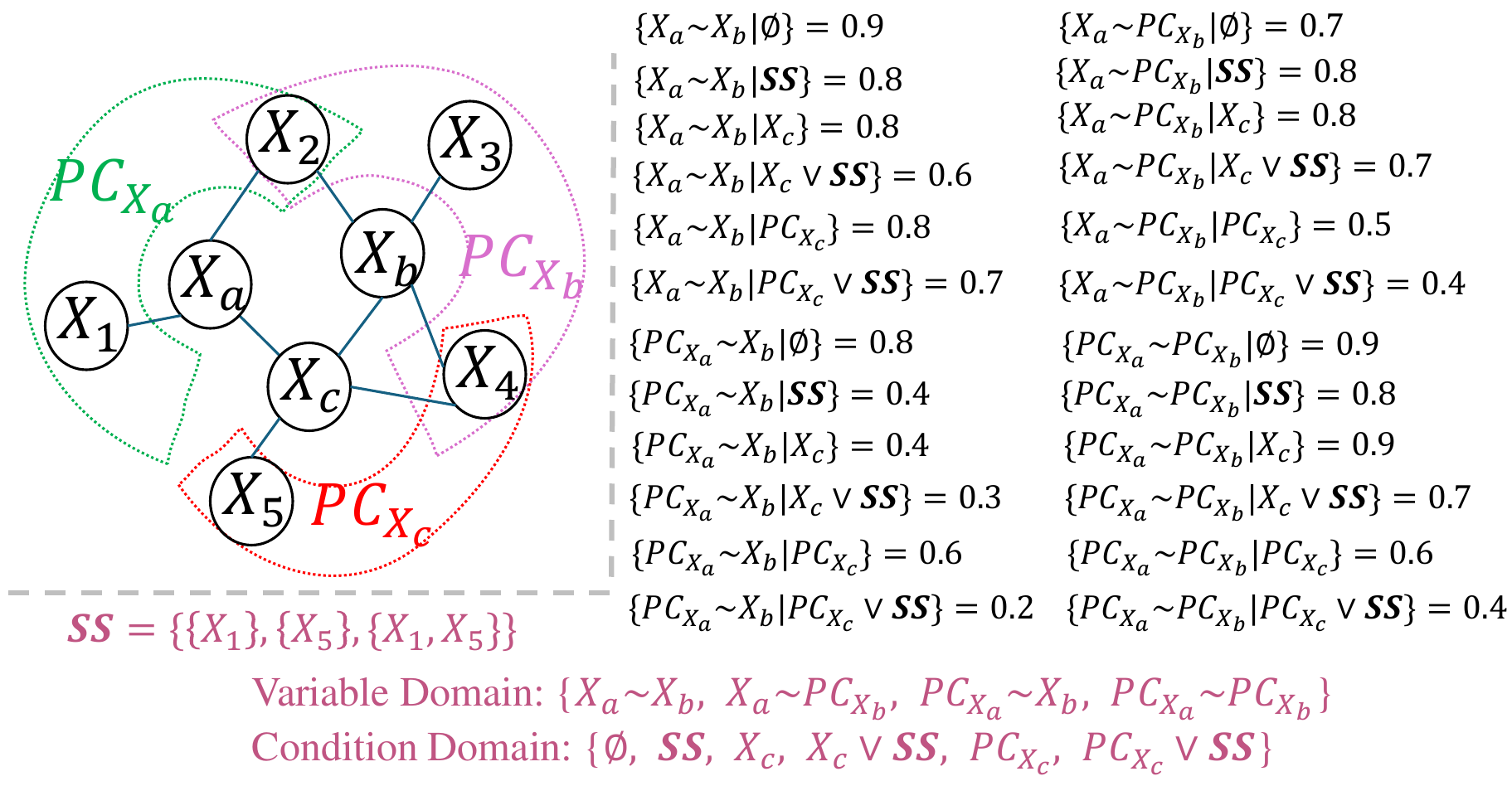}
\caption{Orientation feature extraction example of unshielded triple $X_a-X_c-X_b$.} 
\label{fig.orientation_example}
\end{center}
\end{figure*}

\input{tables/orientation_feature_detail}

It is noted that the conditional dependence features mentioned above may be dynamically changing, which does not conform to the fixed-size features required by most traditional machine learning models. However, this can be easily addressed through classical kernel mean embedding techniques~\citep{smola2007hilbert} to obtain fixed-length embedding features:
$$\frac{1}{|F|}\sum_{z\in F}\bigl(\cos\bigl(<w_{j},z>+b_{j}\bigr)\bigr)_{j=1}^{m}\in\mathbb{R}^{m}.$$
Here, $m=15$, which signifies that each extended feature $F$ now possesses a fresh embedding dimension of 15. Furthermore, we incorporate an extra set of five statistics comprising maximum, minimum, mean, standard deviation, and set size. Consequently, we merge all these derived features to construct the input feature for the model.



Additionally, various methods exist for assessing conditional dependence, such as determining the p-value through testing for conditional independence or using conditional mutual information~\citep{cover1999elements}. When dealing with categorical variables, the G2 test is a suitable option~\citep{agresti2012categorical}. In our approach, we employ an approximate version of the G2 statistic and rely on p-values to evaluate conditional dependence.

It is worth noting that the p-value can become insignificant due to the double-precision bit limit in computers. To overcome this issue, transforming p-values can be employed to mitigate such limitations and offer a means to evaluate conditional dependency. To begin with, we define the complementary error function as:

$$g(z)=1-\frac{2}{\sqrt{\pi}} \int_{0}^{z} e^{-t^{2}} \mathrm{~d} t,$$

and utilize the quantity $z$ as the inverse of $g$:

$$z=g^{-1}(x).$$

By applying the non-linear transformation of $g^{-1}$  to a given p-value $x$, we derive a re-scaled quantity that enhances the assessment of conditional dependency. In essence, $z$ can be interpreted as a measure akin to z-scores in a standard normal distribution; for example, if the p-value is 0.01, then $z=3$, as a value of 3-sigma signifies that the probability of data falling within a 3-sigma range in a normal distribution is 0.99.

%% file: tables/ismcmc_algorithm.tex
\begin{algorithm}
    \caption{Standard \texttt{IS-MCMC} Algorithm Workflow}
    \fontsize{9}{8}\selectfont
    \renewcommand{\algorithmicrequire}{ \textbf{Input:}}
    \renewcommand{\algorithmicensure}{ \textbf{Output:}}
    \begin{algorithmic}[1]
        \REQUIRE ~~\\
            Chain Length: $T$,~~~~
            Expected Training Sample Size: $N$\\
        \ENSURE ~~\\
            Synthetic training data set: $\{(\mathcal{G}^{(1)}, \mathcal{D}^{(1)}), (\mathcal{G}^{(2)}, \mathcal{D}^{(2)}),\ldots\}, (\mathcal{G}^{(N)}, \mathcal{D}^{(N)})$
        \renewcommand{\algorithmicrequire}{ \textbf{Pipeline:}}
        \REQUIRE
    \end{algorithmic}
    \begin{enumerate}
        \item Initialize Seed Graph $\mathcal{G}^{(0)} \sim \mathcal{G}_{\mathcal{I}}$
        \item Estimating CPT parameters by maximum likelihood ${\Theta}^{(0)}=\text{MLE}(\mathcal{G}^{(0)}, \mathcal{D}_{\mathcal{I}})$
        \item \textbf{for} iteration $t=1,2,\ldots$ \textbf{do}:
            \begin{enumerate}
                \item Propose: $\mathcal{G}^{cand} \sim q(\mathcal{G}^{(t)} \mid \mathcal{G}^{(t-1)})$ with Intervention Constraint.
                \item Parameter estimate for current graph: $\Theta^{cand} = \text{Estimation}(\mathcal{G}^{cand}, \mathcal{D}_{\mathcal{I}})$
                \item Calculate Acceptance Probability:$\alpha(\mathcal{G}^{cand} \mid \mathcal{G}^{(t-1)})=\min\left\{1,\frac{q(\mathcal{G}^{(t-1)} \mid \mathcal{G}^{cand})P(\mathcal{G}^{cand} \mid \mathcal{D}_{\mathcal{I}})}{q(\mathcal{G}^{cand} \mid \mathcal{G}^{(t-1)})P(\mathcal{G}^{(t-1)} \mid \mathcal{D}_{\mathcal{I}})}\right\}$
                \item Randomly Sample: $u \sim \text{Uniform}[0, 1]$
                \item Accept the proposal: $\mathcal{G}^{(t)} \leftarrow \mathcal{G}^{cand}$, \textbf{if} $u < \alpha$ \textbf{then} Reject the proposal: $\mathcal{G}^{(t)} \leftarrow \mathcal{G}^{t-1}$, $\Theta^{(t)} \leftarrow \Theta^{(t-1)}$
                \item Forward Sampling data: $\mathcal{D}^{(t)} \leftarrow \text{Sampling}(\mathcal{G}^{(t)}, \Theta^{(t)})$
            \end{enumerate}
    \end{enumerate}
    \label{basic_ismcmc}
\end{algorithm}

%% file: tables/ismcmc_workflow.tex
\begin{algorithm}[htbp!]

    \caption{Optimized \texttt{IS-MCMC} Algorithm Workflow}
    \fontsize{8}{9}\selectfont
    \ding{182}~~ Initialize Seed Graph with Proxy Algorithm:  $\mathcal{G}^{(0)} \leftarrow \text{proxy}(\mathcal{D}_{\mathcal{I}})$~~~~\textcolor{red}{\textbf{\textit{$\vartriangleright$ Good Initial State}}}
    \vspace{0.2em}
    
    ~~~~~~Estimating CPT parameters by maximum likelihood: ${\Theta}^{(0)}=\text{MLE}(\mathcal{G}^{(0)}, \mathcal{D}_{\mathcal{I}})$
    \vspace{0.5em}
    
    \textbf{Parallel} $i \in 1,2,\ldots, N$ \textbf{do}:~~~~\textcolor{red}{\textbf{\textit{$\vartriangleright$ Efficient Parallel Chains}}}
    \vspace{0.2em}
    
    ~~~~\textbf{for} iteration $t = 1,2,\ldots, L$ \textbf{do}:
        \begin{enumerate}[leftmargin=3.5em]
            \item[\ding{183}] Propose: $\mathcal{G}^{cand} \sim q(\mathcal{G}^{(t)} \mid \mathcal{G}^{(t-1)})$  with Intervention Constraint.
            ~~~~\textcolor{red}{\textbf{\textit{$\vartriangleright$ Intervention Constraints}}}
            \item[\ding{184}] Re-parameterize according to the previous graph (~\ref{param_est}): $\Theta^{cand} = \text{Estimation}(\mathcal{G}^{cand}, \Theta^{(t-1)})$
            ~~~~\textcolor{red}{\textbf{\textit{$\vartriangleright$ Parameters Reuse}}}
            \item[\ding{185}] Calculate Acceptance Probability:$\alpha(\mathcal{G}^{cand} \mid \mathcal{G}^{(t-1)})=\min\left\{1,\frac{q(\mathcal{G}^{(t-1)} \mid \mathcal{G}^{cand})P(\mathcal{G}^{cand} \mid \mathcal{D}_{\mathcal{I}})}{q(\mathcal{G}^{cand} \mid \mathcal{G}^{(t-1)})P(\mathcal{G}^{(t-1)} \mid \mathcal{D}_{\mathcal{I}})}\right\}$
            \item[\ding{186}] Randomly Sample: $u \sim \text{Uniform}[0, 1]$
            \item[] Accept proposal: $\mathcal{G}^{(t)} \leftarrow \mathcal{G}^{cand}$, $\Theta^{(t)} \leftarrow \Theta^{cand}$ \textbf{if} $u < \alpha$  \textbf{else} Reject proposal: $\mathcal{G}^{(t)} \leftarrow \mathcal{G}^{t-1}$, $\Theta^{(t)} \leftarrow \Theta^{(t-1)}$
        \end{enumerate}

    ~~~~~\ding{187}~~Forward Sampling data: $\mathcal{D}^{(t)} \leftarrow \text{Sampling}(\mathcal{G}^{(t)}, \Theta^{(t)})$
    \label{ismcmc_workflow}
\end{algorithm}
\vspace{-2mm}

%% file: tables/pc_overall.tex
\begin{algorithm}[htbp!]
    \caption{Pipeline of \texttt{PC} Algorithm~\citep{spirtes1991algorithm}}
    \fontsize{10}{14}\selectfont
    \renewcommand{\algorithmicrequire}{ \textbf{Require:}}
    \renewcommand{\algorithmicensure}{ \textbf{Output:}}
    \begin{algorithmic}[1]
        \REQUIRE ~~\\
            Conditional Independence Information $\mathbf{CI}$ among Variables.\\
            Undirected Complete Graph $\mathcal{C}$ Obtained by Variables.
        \ENSURE ~~\\
            \textcolor{gray}{Separation Sets: $\mathcal{SS}$ (Temporary Product)}\\
            \textcolor{brown}{Skeleton: $\mathcal{K}$ (Intermediate Product)}\\
            \textcolor{gray}{Unshielded Triple Sets of Skeleton: $\mathcal{U}$ (Temporary Product)}\\
            \textcolor{brown}{Partially Directed Acyclic Graph, \ie, PDAG: $\mathcal{P}$ (Intermediate Product)}\\
            Completed Partially Directed Acyclic Graph, \ie, CPDAG: $\mathcal{G}$
        \renewcommand{\algorithmicrequire}{ \textbf{Pipeline:}}
        \REQUIRE
    \end{algorithmic}
    \begin{enumerate}
        \item Adjacency Determination: Find the skeleton $\mathcal{K}$ and separation sets $\mathcal{SS}$ based on undirected complete graph $\mathcal{C}$ and conditional independence information $\mathbf{CI}$ using Algorithm~\ref{pc_adj_pipeline}.
        \item Orientation Determination: Orient unshielded triples $\mathcal{U}$ in the skeleton $\mathcal{K}$ to derive partially directed acyclic graph $\mathcal{P}$ based on the separation sets $\mathcal{SS}$ using Algorithm~\ref{pc_direction_pipeline}.
        \item In $\mathcal{P}$ orient as many of the remaining undirected edges as possible by repeated application of rules \textbf{R1}-\textbf{R3} to derive completed partially directed acyclic graph $\mathcal{G}$.
    \end{enumerate}
    \label{pc_pipeline}
\end{algorithm}

%% file: tables/pc_skeleton.tex
\begin{algorithm}[htbp!]
    \caption{Pipeline of Adjacency Determination / Phrase 1 of the \texttt{PC} Algorithm~\citep{kalisch2007estimating}}
    \fontsize{10}{14}\selectfont
    
    \renewcommand{\algorithmicrequire}{ \textbf{Require:}}
    \renewcommand{\algorithmicensure}{ \textbf{Output:}}
    
    \begin{algorithmic}[1]
        \REQUIRE ~~\\
            Undirected Complete Graph $\mathcal{C}$ Induced by Variables.\\
            Sequential Order $k$.
        \ENSURE ~~\\
            Skeleton: $\mathcal{K}$\\
            Separation Sets: $\mathcal{SS}$
        \renewcommand{\algorithmicrequire}{ \textbf{Pipeline:}}
        \REQUIRE
    \end{algorithmic}

    \begin{algorithmic}[]
    \STATE $k=1$
    \STATE \textbf{repeat}
    \end{algorithmic}
        
    \begin{mdframed}[backgroundcolor=blue!5]
    \begin{algorithmic}[]
        \FOR{each adjacent pair $(X_i, X_j)$ in $\mathcal{C}$}
            \FOR{each subset $\mathcal{S} \subseteq adj(\mathcal{C},X_i)\backslash\{X_j\}$ 
            or $adj(\mathcal{C},X_j)\backslash\{X_i\}$ with $|\mathcal{S}|=k$}
                \IF{$X_i$ and $X_j$ are conditionally independent given $\mathcal{S}$}
                    \STATE Delete edge $X_i-X_j$ from $\mathcal{C}$
                    \STATE Let $\mathcal{SS}_{(X_i,X_j)}=\mathcal{SS}_{(X_j,X_i)}=\mathcal{S}$
                \ENDIF
            \ENDFOR
        \ENDFOR~~~~~~~~~~~~~~~~~~~~~~~~~~~~~~~~~~~~~~~~~~~~~~~~~~~~~~~~~~~$\vartriangleright$ \textcolor{blue!40}{This process can be supervised for learning}
    \end{algorithmic}
    \end{mdframed}
        
    \begin{algorithmic}[]
    \STATE ~~~$k=k+1$
    \STATE \textbf{unitl} all adjacent pairs $(X_i, X_j)$ in $\mathcal{C}$ satisfy $|adj(\mathcal{C},X_i)\backslash\{X_j\}| < k$
    \end{algorithmic}
    
    \label{pc_adj_pipeline}
\end{algorithm}

%% file: tables/pc_orientation.tex
    

\begin{algorithm}
    \caption{Pipeline of Orientation Determination / Phrase 2 of the \texttt{PC} Algorithm~\citep{kalisch2007estimating}}
    \fontsize{10}{14}\selectfont
    
    \renewcommand{\algorithmicrequire}{ \textbf{Require:}}
    \renewcommand{\algorithmicensure}{ \textbf{Output:}}
    \begin{algorithmic}[1]
        \REQUIRE ~~\\
            Skeleton: $\mathcal{K}$\\
            Separation Sets: $\mathcal{SS}$\\
            Unshielded Triple Sets of Skeleton: $\mathcal{U}$ \\
        \ENSURE ~~\\
            Partially Directed Acyclic Graph, \ie, PDAG: $\mathcal{P}$
        \renewcommand{\algorithmicrequire}{ \textbf{Pipeline:}}
        \REQUIRE
    \end{algorithmic}
    
    \begin{mdframed}[backgroundcolor=purple!5]
        \begin{algorithmic}[] 
            \FOR{each non-adjacent pair $(X_a, X_b)$ with common neighbour $X_c$ in $\mathcal{U}$}
                \IF{$X_c \notin \mathcal{SS}_{(X_a, X_b)}$}
                    \STATE Replace $X_a - X_c - X_b$ in $\mathcal{K}$ by $X_a \rightarrow X_c \leftarrow X_b$
                \ENDIF
            \ENDFOR~~~~~~~~~~~~~~~~~~~~~~~~~~~~~~~~~~~~~~~~~~~~~~~~~~~~~~~~~~~$\vartriangleright$ \textcolor{purple!40}{This process can be supervised for learning}
        \end{algorithmic}
    \end{mdframed}
    
    \begin{algorithmic}[]
        \STATE $\mathcal{P}=\mathcal{S}$
    \end{algorithmic}
    \label{pc_direction_pipeline}
\end{algorithm}

%% file: tables/skeleton_feature_detail.tex
\begin{table*}[!t]
    \renewcommand\arraystretch{4.0}
    \setlength{\tabcolsep}{1.2mm}{}
    \caption{The features extracted for skeletal learning. As shown in Figure~\ref{fig.skeleton_example}, $k=2$, is used to illustrate the design intuition and computational examples.}
    \centering
    \scalebox{0.7}{
    \begin{sc}
    \begin{tabular}{ccccc}
    \toprule
        \rowcolor{blue!5}\multicolumn{5}{c}{\multirow{1}{*}{\makecell{\textcolor{blue!40}{\large Skeleton Feature}}}}\\
        
        \hline
        
        Type & Name & Calculate & Example Raw Results & Dimension\\
        
        \hline

        \multicolumn{1}{c}{\multirow{2}{*}{\makecell{CI Test\\Information}}} & \makecell{$k$-order\\Conditional\\Dependence} & \makecell{$\begin{Bmatrix}X_i\sim X_j\big|\mathcal{S}_2\end{Bmatrix},~~|\mathcal{S}_2|=2$} & [0.6, 0.3, 0.7, 0.4, 0.6, 0.1] & Not Fixed $\rightarrow$ 15 \\
        
        \cline{2-5}

        & \makecell{Residual\\Conditional \\Dependence} & \makecell{$\begin{Bmatrix}X_i\sim X_j\big|\mathcal{S}_1\end{Bmatrix}-\min_{X_q}\left\{X_i\sim X_j\big|\mathcal{S}_2\cup\{X_q\}\right\}$,\\$|\mathcal{S}_1|=1,~|\mathcal{S}_2|=2$} & [0.4, 0.2, 0.4 ,0.7] & Not Fixed $\rightarrow$ 15 \\
        
        \midrule

        \multicolumn{1}{c}{\multirow{3}{*}{\makecell{Structural\\Information}}} & \makecell{Competitiveness} & \makecell{$\frac{\left|\left\{X_q|X_q\in nbd(X_i),{C}_{k-1}(X_i,X_j)>{C}_{k-1}(X_i,X_q)\right\}\right|}{\left|nbd_{\mathcal{G}_{k-1}}(X_i)\right|-1},$\\$\frac{\left|\left\{X_q|X_q\in nbd(X_j),{C}_{k-1}(X_j,X_i)>{C}_{k-1}(X_j,X_q)\right\}\right|}{\left|nbd_{\mathcal{G}_{k-1}}(X_j)\right|-1},$\\${C}_{k-1}(X_i,X_j)$} & [0.5, 0.75, 0.8] & 3 \\
        
        \cline{2-5}

        & \makecell{Degree} & \makecell{$deg(X_i), deg(X_j)$} & [4, 4] & 2 \\
        
        \cline{2-5}

        & \makecell{Density} & \makecell{$\frac{\left|nbd(X_i)\cap nbd(X_j)\right|}{\min{(|nbd(X_i)|,|nbd(X_j)|)}}$} & [1] & 1 \\
        
        \bottomrule
        
    \end{tabular}
    \end{sc}
    }
    \label{tab:skeleton_feature_detail}
\end{table*}

%% file: tables/orientation_feature_detail.tex
\begin{table*}[!htbp]
    \renewcommand\arraystretch{4.0}
    \setlength{\tabcolsep}{1.2mm}
    \caption{The features extracted for orientation learning, as shown in Figure~\ref{fig.orientation_example}, are used to illustrate the design intuition and computational examples.}
    \centering
    \scalebox{0.7}{
    \begin{tabular}{ccccc}
    \toprule

    \rowcolor{purple!5}\multicolumn{5}{c}{\multirow{1}{*}{\makecell{\textcolor{purple!40}{\large Orientation Feature}}}}\\

    \hline

    Type & Name & Calculate & Example Raw Results & Dimension\\

    \hline

    \makecell{CI Test\\Information} 
    & \makecell{Vicinity\\Conditional \\Dependence} 
    & \makecell{
    $\{\text{Variable} \mid \text{Condition}\},$\\
    $\text{Variable: } \{X_a\!\sim\!X_b, X_a\!\sim\!PC_{X_b},$\\
    $PC_{X_a}\!\sim\!X_b, PC_{X_a}\!\sim\!PC_{X_b}\}$\\
    $\text{Condition: } \{\emptyset, SS, X_c, X_c \vee SS,$\\
    $PC_{X_c}, PC_{X_c} \vee SS\}$
    }
    & \makecell[c]{
    [0.9, 0.8, 0.8, 0.6, 0.8, 0.7,\\
    ~~0.7, 0.8, 0.8, 0.7, 0.5, 0.4,\\
    ~~0.8, 0.4, 0.4, 0.3, 0.6, 0.2,\\
    ~~0.9, 0.8, 0.9, 0.7, 0.6, 0.4]
    } 
    & 24 \\

    \midrule

    \multirow{2}{*}{\makecell{Structural\\Information}} 
    & \makecell{Overlap} 
    & \makecell{
    $\frac{|set_1 \cap set_2|}{\min(|set_1|,|set_2|)},$\\
    $\text{where } set \in \{PC_{X_a}, PC_{X_b}, PC_{X_c}, \mathbf{SS}\},$\\
    $\text{and } set_1 = X_c,\; set_2 = \mathbf{SS}$
    }
    & [0.5, 0, 0.5, 0.5, 0, 0.5, 1] 
    & 7 \\

    \cline{2-5}

    & \makecell{Scaling} 
    & \makecell{
    $\#PC_{X_a}, \#PC_{X_b}, \#PC_{X_c},$\\
    $\#\mathbf{SS},$\\
    $\frac{1}{\#\mathbf{SS}}\sum_{SS_i \in \mathbf{SS}} \#SS_i$
    }
    & $[2, 3, 2, 3, \frac{4}{3}]$ 
    & 5 \\

    \bottomrule
    \end{tabular}
    }
    \label{tab:orientation_feature_detail}
\end{table*}

%% file: appendix/3_convergence_theory.tex
\section{Convergence of Inteventional Structural-MCMC}\label{convergence_ismcmc}

In this section, we first confirm the convergence of the posterior distribution in the IS-MCMC algorithm based on the Structure MCMC framework. Furthermore, we provide a detailed discussion of the key factors considered in optimizing the IS-MCMC process. Besides, we also provide visualization experiments on convergence analysis.

\begin{figure*}[htbp!]
\vspace{-2mm}
\begin{center}
\includegraphics[width=1.0\linewidth]{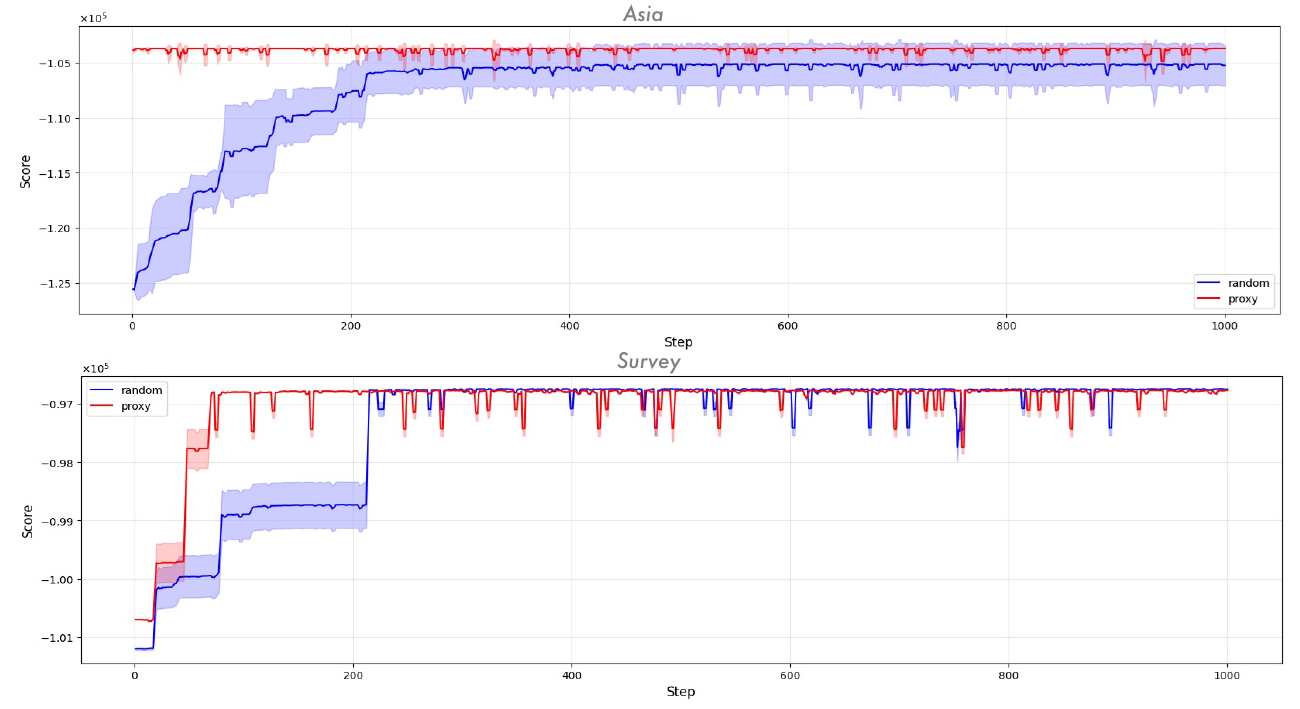}
\caption{Convergence Visualization Analysis on different datasets.} 
\label{fig.score_convergence}
\end{center}
\end{figure*}

\subsection{Guarantee of Posterior Distribution Convergence}

Our IS-MCMC algorithm follows a standardized Structure MCMC process~\citep{madigan1995bayesian,su2016improving,kuipers2017partition} in the intervention-augmented graph space. Please refer to Algorithm~\ref{ismcmc_workflow}, which corresponds to Steps \ding{182} - \ding{187} in the main text as a general introduction to the MCMC process.

By employing the Metropolis-Hastings sampler within the standardized Structure MCMC framework, the Markov chain is guaranteed to have a stationary distribution equal to the posterior distribution $P(\mathcal{G} \mid \mathcal{D})$~\citep{su2016improving}.

It is important to note that, under the premise of this theoretical guarantee, the novelty of our work lies in proposing the use of the posterior distribution as the target for acquiring training samples and optimizing feasible solutions in intervention scenarios.

\subsection{Optimization Considerations for IS-MCMC}

However, two critical factors need to be addressed. First, managing the time complexity of this process is crucial for maintaining efficient inference. Second, ensuring the effectiveness of the IS-MCMC process on the augmented graph.

\begin{itemize}[leftmargin=1.5em,itemsep=0.2em]
    \item \textbf{From the TTT-driven perspective on efficiency:} Since we leverage the "test-time" phase, the convergence speed of MCMC is particularly important. We observed that a good initial state enables the Markov chain to converge faster, as evidenced by Figure~\ref{fig.mcmc_step} in the experimental section of our paper. Additionally, we modified the algorithm to use parallel multi-chain sampling, allowing for more efficient exploration of the graph space, as shown in Figure~\ref{fig.efficiency_study} of our experimental results.
    \item \textbf{From the JCI-driven perspective on adaptability:} The MCMC process operates on augmented graphs rather than standard causal graphs. Thus, additional constraints are required to ensure the validity of the augmented graph. To address this, we introduced intervention constraints to ensure consistency of the augmented graph within the JCI framework.
\end{itemize}

In summary, our IS-MCMC algorithm effectively addresses multiple challenges under the TTT + JCI framework and demonstrates that using posterior estimates as training data is highly beneficial for test-time SCL.

\subsection{Convergence Visualization Analysis of IS-MCMC}

As shown in Figure~\ref{fig.score_convergence}, we plotted the goodness-of-fit scores using random seed and proxy seed strategies on the Asia and Survey datasets, respectively. By treating the initial state as low-quality, the IS-MCMC strategy gradually improves the score until it converges to a stationary state. The proxy seed accelerates this process, so in practice, our method can significantly benefit from this process, especially in test-time training.

%% file: appendix/4_identifiability_theory.tex
\section{Identifiability Theory of Two-Phase SCL}\label{ident_proof}

\subsection{Intuitive Analysis}

\subsubsection{Observation}

We use the observations from phase 2 as an illustration. As mentioned in Section~\ref{two_phase_sup}, we formalize the v-structure orientation of PC as a specific classification problem, that is, determining whether the non-shielded triple $\langle X_a,X_c,X_b \rangle$ forms a v-structure by classifier $C_{PC}$.

The logic of PC’s orientation is clearly asymptotically correct (\ie, the \textbf{CI} test becomes fully accurate when the number of records goes to infinity). However, in practice, when the number of records is finite, its empirical performance is not satisfactory. Thus, further enhancements, such as \textit{Majority-PC} (MPC)~\citep{colombo2014order}, have been developed to address this limitation.

MPC is a sample-based enhancement of PC’s orientation, achieving better performance with finite samples. Instead of identifying only one separating set $\mathcal{SS}$, MPC finds all possible separating sets $\mathcal{SS}$ and counts how many of them contain $X_c$. The logic can be recast as follows:

\textbf{Featurization:} Finds all separating sets $\mathcal{SS}$ of $X_a, X_b$, and defines a real-valued feature: 

$$F_{<X_a,X_c,X_b>}=\frac{|\{S_i|X_c\in S_i \in \mathcal{SS}\}|}{|\mathcal{SS}|}$$

\textbf{Classifier:} Train a binary classifier using features:
$${C}_{MPC}(F_{<X_a,X_c,X_b>}):=\left\{\begin{matrix}v\text{-}structure&F_{<X_a,X_c,X_b>}\leq0.5\\non-v\text{-}structure&F_{<X_a,X_c,X_b>}\textgreater0.5\end{matrix}\right.$$

Theoretically, both PC and MPC, as "hand-crafted" classifiers, are asymptotically correct. However, in practice, MPC exhibits greater complexity in its classification mechanism compared to PC, resulting in improved empirical performance. Nonetheless, from a machine learning perspective, both PC and MPC remain "simple" in terms of their feature representation and classification strategies.

\subsubsection{Motivation}

The primary motivation for modifying PC into a PC-like SCL method is to leverage \textit{both theoretical identifiability and enhanced empirical performance}:

\begin{itemize}[leftmargin=*,itemsep=0.5em]
    \item \textbf{Theoretical Guarantees.} Our PC-like SCL approach \textit{retains} the asymptotic properties of the original PC algorithm. Specifically, the method detects the correct CPDAG (or $\mathcal{I}$-CPDAG in the interventional setting) when the sample size approaches infinity. This ensures theoretical identifiability.
    \item \textbf{Empirical Performance.} In finite-sample scenarios, where PC's reliance on conditional independence (CI) tests can lead to errors, the SCL approach outperforms traditional methods. By combining feature-rich representations with a robust classification mechanism, SCL achieves superior empirical results, as demonstrated in our paper, by comparing \model with PC-JCI or other non-SCL methods. Similar empirical evidences are also presented in prior works~\citep{li2020supervised,ma2022ml4s,dai2023ml4c}.
\end{itemize}

\subsection{Advantage}

Building on these observations and motivation analysis, our goal is to design a PC-like SCL algorithm that maintains theoretical asymptotic correctness while encouraging a more systematic feature representation and enabling the learning of more sophisticated classification mechanisms. This approach aims to make our method more "robust" than PC, achieving superior empirical performance. Specifically, Our PC-Like SCL method generalizes these ideas:

\begin{itemize}[leftmargin=*,itemsep=0.5em]
    \item \textbf{Featurization:} SCL extracts a richer set of features that capture conditional dependencies and structural patterns around $\langle X_a,X_c,X_b \rangle$.
    \item \textbf{Classifier:} Instead of relying on heuristic rules like $C_{PC}$ or $C_{MPC}$ , we train a classifier on synthetic data. The model learns complex interactions among features, avoiding the limitations of error-prone conditional independence (CI) tests.
\end{itemize}

In the asymptotic regime, the learned classifier converges to an equivalent, theoretically correct solution like $C_{PC}$ or $C_{MPC}$. However, in practical settings, SCL’s broader feature set and data-driven optimization enable significantly better empirical performance. For example, Table~\ref{table:rq1_cpdag_merge_exp} and~\ref{table:rq1_intervention_exp} compare JCI+SCL with JCI-PC and other non-SCL methods, clearly showing SCL’s superiority.

\subsection{Theoretical Analysis}

\subsubsection{Restating the PC-like SCL Process}

The PC-like SCL approach proposed in our paper consists of two phases:

\begin{itemize}[leftmargin=*,itemsep=0.5em]
    \item \textbf{Phase 1: Skeleton Learning.} A binary classifier $C_1$ takes the feature set (as detailed in Table~\ref{tab:skeleton_feature_detail} and Figure~\ref{fig.skeleton_example}) as input and predicts the existence of edges between all pairs of nodes.
    \item \textbf{Phase 2: Orientation Learning.} A binary classifier $C_2$ takes the feature set (as detailed in Table~\ref{tab:orientation_feature_detail} and Figure~\ref{fig.orientation_example}) as input and predicts the v-structure for each unshielded triple (UT), using the skeleton learned from Phase 1.
\end{itemize}

Finally, Meek's rules are applied to orient as many causal directions as possible.

We aim to prove that both $C_1$ and $C_2$ are asymptotically correct (\ie, they output results equivalent to those of the PC algorithm when the sample size approaches infinity). For simplicity, we demonstrate the proof for $C_2$, as the proof for $C_1$ follows the same principle.

\subsubsection{Asymptotic Correctness of the PC-like SCL Approach}

\begin{definition}[Overlap Coefficient]
    $\mathrm{OLP}\left(\mathbf{A},\mathbf{B}\right):=|\mathbf{A} \cap \mathbf{B}|/\operatorname*{min}\left(|\mathbf{A}|,|\mathbf{B}|\right)$, where $\mathbf{A}$ and $\mathbf{B}$ are two sets of variables.
\end{definition}

\begin{definition}[Discriminative Predicate]
    A discriminative predicate is a binary predicate function defined over the domain of $C_2$'s feature vector. It can be regarded as a special classifier with a predefined mechanism.
\end{definition}

\begin{definition}[Strong Predicate]\label{strong_p}
    A strong discriminative predicate satisfies the following two criteria when applied to a UT's feature vector: (a) It evaluates to \textbf{'true'} if the UT forms a v-structure. (b) It evaluates to \textbf{'false'} if the UT does not form a v-structure.
\end{definition}

A strong predicate is \textbf{\textit{asymptotically correct}} because its output aligns with that of the PC algorithm when the sample size approaches infinity. The key statement here is that there exists at least one strong predicate, meaning we can construct a static classifier using the proposed feature set to achieve asymptotic correctness.

\vspace{2mm}
\begin{lemma}[Existence of a Strong Predicate]\label{exis_sp}
    For a canonical dataset with infinite samples, the predicate $\mathrm{OLP}\left(X_c,\mathcal{S}\right)=0$ is a strong discriminative predicate. It corresponds precisely to the v-structure detection logic of the PC algorithm.
\end{lemma}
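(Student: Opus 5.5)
The plan is to reduce the lemma to the standard PC orientation argument under faithfulness. In the infinite-sample (``canonical'') regime, conditional independence tests are exact. By the Spirtes--Glymour theorem quoted above, Phase 1 (or $C_1$, under the same principle) recovers the true skeleton of the augmented graph $\mathcal{G}_{\mathcal{I}}$. The orientation features of Table~\ref{tab:orientation_feature_detail} are therefore computed on the correct skeleton, with the correct neighbour sets $PC_{X_a},PC_{X_b},PC_{X_c}$. In particular, every unshielded triple $\langle X_a,X_c,X_b\rangle$ is a true unshielded triple of $\mathcal{G}_{\mathcal{I}}$.

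Fix such a UT and let $\mathcal{S}$ denote the collection $\mathbf{SS}$ of separating sets of $X_a,X_b$ found within the vicinity. This collection is nonempty: $X_a,X_b$ are non-adjacent, so by the Markov property some subset of $PC_{X_a}\cup\{X_c\}$ or $PC_{X_b}\cup\{X_c\}$ d-separates them. Take the parents of whichever of $X_a,X_b$ is not an ancestor of the other; these parents lie in the relevant $PC$ set, plus possibly $X_c$. Note that $\mathrm{OLP}(X_c,\mathcal{S})=0$ exactly when $X_c$ belongs to none of the sets $S_i\in\mathcal{S}$; the feature is computed over the union of these sets. This overlap value is one of the structural ``Overlap'' entries of the feature vector, so the predicate really is a function of $C_2$'s features.

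I would then prove the two directions of Definition~\ref{strong_p}.

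\emph{Criterion (a).} Suppose $X_a\to X_c\leftarrow X_b$ is a v-structure. Any set $S$ containing the collider $X_c$ opens the path $X_a\to X_c\leftarrow X_b$, so $X_a$ and $X_b$ are d-connected given $S$. By exactness of the tests, no such $S$ is a separating set. Hence $X_c\notin S_i$ for all $i$, and $\mathrm{OLP}=0$.

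\emph{Criterion (b).} Suppose the UT is not a v-structure. Then $X_c$ is a non-collider on $X_a - X_c - X_b$, so every set $S$ with $X_c\notin S$ leaves that path open, since $X_c$ is a chain or fork node and is unblocked. By faithfulness, $X_a\not\perp X_b\mid S$. Thus every separating set contains $X_c$. Since $\mathcal{S}$ is nonempty, the overlap is positive, and the predicate is false.

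Together these show that the predicate coincides with the PC rule ``orient iff $X_c\notin\mathcal{SS}$''. In fact they show more: it is independent of which separating set PC happens to find, which is the consistency property also underlying MPC.

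The main obstacle is making the nonemptiness of $\mathcal{S}$ and the restriction to vicinity-based sets rigorous in the augmented JCI graph. Environment nodes, and the context-context dependence imposed by the generic-context assumption, could in principle spoil the existence of a separator within $PC\cup\{X_c\}$. I would handle this by invoking Corollary~\ref{coro:JCI_ass_3}: it lets us treat $\mathcal{G}_{\mathcal{I}}$ as a DAG whose d-separations match the data, under the Directed Global Markov Property and faithfulness. The standard parent-set separator argument then applies verbatim.
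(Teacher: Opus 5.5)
Your proposal is correct and follows the paper's proof: both reduce the lemma to two facts under faithfulness with exact CI information. If the triple is a v-structure, $X_c$ lies in no separating set of $(X_a,X_b)$; otherwise it lies in every one, so $\mathrm{OLP}(X_c,\mathcal{S})=0$ coincides with PC's orientation rule. The paper simply cites PC's asymptotic correctness for these facts, whereas you derive them from d-separation and also establish that the separating-set collection is nonempty (which the paper leaves implicit). Your closing JCI paragraph is not needed for the lemma as stated, and the corollary you invoke gives a bidirected clique on the context nodes rather than a DAG.
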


\begin{proof}

According to the PC algorithm's asymptotic correctness theorem:

For a canonical dataset with infinite samples (assuming faithfulness and perfect conditional independence information):

\begin{itemize}[leftmargin=3em,itemsep=0.5em]
    \item If an unshielded triple $\langle X_a,X_c,X_b \rangle$ forms a v-structure, then $X_c$ does \textbf{not} belong to any separation set of $(X_a, X_b)$.
    \item If $\langle X_a,X_c,X_b \rangle$ does not form a v-structure, then $X_c$ belongs to \textbf{all} separation sets of $(X_a, X_b)$.
\end{itemize}

In this setting (infinite samples), there is no ambiguity: $X_c$ is either in all separation sets or in none of them.

Now consider the predicate $\mathrm{OLP}\left(X_c,\mathcal{S}\right)=0$, which evaluates to \textbf{true} if and only if $X_c \notin \mathcal{S}$ for all $\mathcal{S} \in \mathcal{SS}$.

\begin{itemize}[leftmargin=3em,itemsep=0.5em]
    \item This implies that $\langle X_a,X_c,X_b \rangle$ is a v-structure.
    \item Therefore, $\mathrm{OLP}\left(X_c,\mathcal{S}\right)=0$ is a strong predicate because it perfectly discriminates v-structures from non-v-structures.
\end{itemize}

\end{proof}

\vspace{2mm}
\begin{theorem}[Asymptotic Correctness of $C_2$] By employing a learning model with the universal approximation property, the classifier $C_2$ is asymptotically correct when classifying a canonical dataset with infinite samples.
\end{theorem}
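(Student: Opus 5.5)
The plan is to reduce the theorem to Lemma~\ref{exis_sp} via a realizability argument. The proposed feature set for $C_2$ (Table~\ref{tab:orientation_feature_detail}) contains the overlap feature $\mathrm{OLP}(X_c,\mathcal{S})$, so the strong predicate $\mathrm{OLP}(X_c,\mathcal{S})=0$ is a fixed, measurable function of $C_2$'s input vector. Three steps follow: (i) show that the target labeling rule lies in, or can be approximated by, the hypothesis class; (ii) show that the training procedure recovers it; (iii) show that the recovered classifier agrees with the v-structure labels on a canonical infinite-sample test instance.

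For step (i), I will use Lemma~\ref{exis_sp}. On canonical infinite-sample data, with faithfulness and perfect CI information, every UT's feature vector satisfies $\mathrm{OLP}(X_c,\mathcal{S})\in\{0\}\cup[\delta,1]$ for some $\delta>0$. This holds because $X_c$ lies in either all separating sets or none of them. The Bayes-optimal labeling on this feature distribution is therefore the indicator $h^*(F)=\mathbf{1}[\mathrm{OLP}(X_c,\mathcal{S})=0]$, and its risk is zero. The indicator is discontinuous. However, the feature values never fall in the gap $(0,\delta)$, so I can replace it by a continuous surrogate that equals $1$ at $0$ and $0$ on $[\delta,1]$. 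By the universal approximation property, a model can then approximate this surrogate uniformly to within $1/2$ on the compact feature domain, and thresholding at $1/2$ reproduces $h^*$ exactly on the support. For tree ensembles such as XGBoost, one split on that coordinate suffices.

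For step (ii), I will argue that the self-augmented training pairs $\langle\mathcal{G}_i,\mathcal{D}_i\rangle$, which are forward-sampled from $\mathcal{G}_i$ with infinite samples, are themselves canonical. Lemma~\ref{exis_sp} therefore applies to their labels as well, so training labels and test labels obey the same deterministic rule $h^*$. Empirical risk minimization over a class that contains a zero-risk hypothesis, in the limit of infinite training data, yields a classifier whose risk under the training feature distribution is zero.

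Step (iii) is to transfer this to the test instance, and I expect it to be the main obstacle. Zero risk on the training distribution does not immediately constrain the classifier at test feature points that lie outside the training support. I would try first to restrict the statement to the realizable case, taking the minimizer to be $h^*$ itself or the minimal-complexity element consistent with the data. An alternative is to argue that the support of test features, $\{0\}\cup[\delta,1]$ on the OLP coordinate, is covered by the training features, and then use the fact that $h^*$ depends only on that coordinate. Under either route, $C_2$ coincides with PC's v-structure rule. Combined with a correct skeleton from $C_1$ (handled by the same argument) and the soundness of Meek's rules, this gives asymptotic correctness.
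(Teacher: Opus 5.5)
Your step (i) already contains everything in the paper's proof. The paper only argues that Lemma~\ref{exis_sp} supplies a zero-loss strong predicate expressible in $C_2$'s feature space, and that a universal approximator can represent it. Your surrogate construction, which exploits the gap between $0$ and the positive values of $\mathrm{OLP}$, is if anything more careful than the paper's version.

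The gaps lie in the stronger claim you then pursue. In (ii), infinite forward samples give perfect CI information about the forward-sampling distribution $Q_i$. But Lemma~\ref{exis_sp} also needs $Q_i$ to be \emph{faithful} to $\mathcal{G}_i$, and self-augmentation does not ensure this: MLE re-estimation gives any edge of $\mathcal{G}_i$ that the data do not support a CPT that ignores that parent. For example, suppose $\mathcal{G}_i$ contains a v-structure $X_a\to X_c\leftarrow X_b$ whose edge $X_b\to X_c$ is unsupported. Then sets containing $X_c$ can separate $X_a$ and $X_b$ under $Q_i$ (e.g.\ $\{X_c\}$ when $X_b$ has no other neighbours). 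Featurized on $\mathcal{G}_i$'s skeleton, the label read off $\mathcal{G}_i$ (v-structure) then contradicts $h^*$. So ``training and test labels obey the same rule $h^*$'' is unjustified, and with it the zero-risk premise of your ERM step falls.

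You leave step (iii) open, and neither of your routes closes it. Zero training risk constrains the learned classifier only on the training support in the \emph{full} feature space. That $h^*$ depends on the $\mathrm{OLP}$ coordinate alone is a property of the target, not of the fitted model, which may split on whatever coordinates happen to separate the training sample. Covering the test values of one coordinate therefore says nothing about unseen combinations of the others. Taking the minimizer to be $h^*$ simply assumes the conclusion. The paper's remark that $C_2$ ``can converge to any one of these strong predicates'' makes the same silent assumption.

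One way to close (iii) is to use self-augmentation itself. Suppose that, in the infinite-sample limit, the sampled $\mathcal{G}_i$ are Markov equivalent to the true augmented graph and carry the MLE conditionals. Then $Q_i$ equals the test distribution, so every test UT's feature vector occurs in training with its correct label, and zero training risk transfers to the test instance. This would also repair (ii), since $Q_i$ is then faithful to $\mathcal{G}_i$. It does require posterior concentration on the equivalence class and pure MLE re-estimation, and the paper establishes neither; its Dirichlet re-sampling of CPTs for newly added parents breaks the latter. Without such an argument, what you and the paper actually prove is the realizability statement of step (i).
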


\begin{proof}
From Lemma~\ref{exis_sp}, there exists a strong discriminative predicate $P$ that achieves zero loss on a canonical dataset with infinite samples.

\begin{itemize}
    \item Given $P$ as the ground truth for v-structure detection, a machine learning model with universal approximation capability can approximate $P$ to arbitrary precision, achieving performance no worse than $P$.
    \item Additionally, there may exist multiple strong predicates that satisfy the criteria of Definition~\ref{strong_p}. Therefore, $C_2$ can converge to any one of these strong predicates in the asymptotic regime.
\end{itemize}

\end{proof}

\textbf{Summary:} The theoretical guarantees demonstrate that the PC-like SCL approach, by leveraging the universal approximation capabilities of learning-based classifiers, retains the asymptotic correctness of the original PC algorithm while enabling superior empirical performance on finite data.

%% file: appendix/5_expsetting.tex
\section{Experimental Details}\label{exp_set}

\subsection{Benchmark}\label{benchmarks}
 
Our experiments are carried out on 14 different causal graph datasets inspired by real-world applications from bayesian network repository~\footnote{https://www.bnlearn.com/bnrepository/}. All discrete networks undergo thorough quality checks and necessary repairs, ensuring that the sum of all conditional probability distributions is 1, there are no single-level virtual nodes, and there are no dependencies. The statistics of these
bayesian networks are shown in Table~\ref{tab:dataset}. We also provide visualizations of some causal graphs, as shown in Figure~\ref{fig:graph_vis}.

\input{tables/datasets}

\begin{figure}[htbp!]
  \centering
  \subfloat[Earthquake\label{fig:fc}]{
    \includegraphics[width=0.2\linewidth]{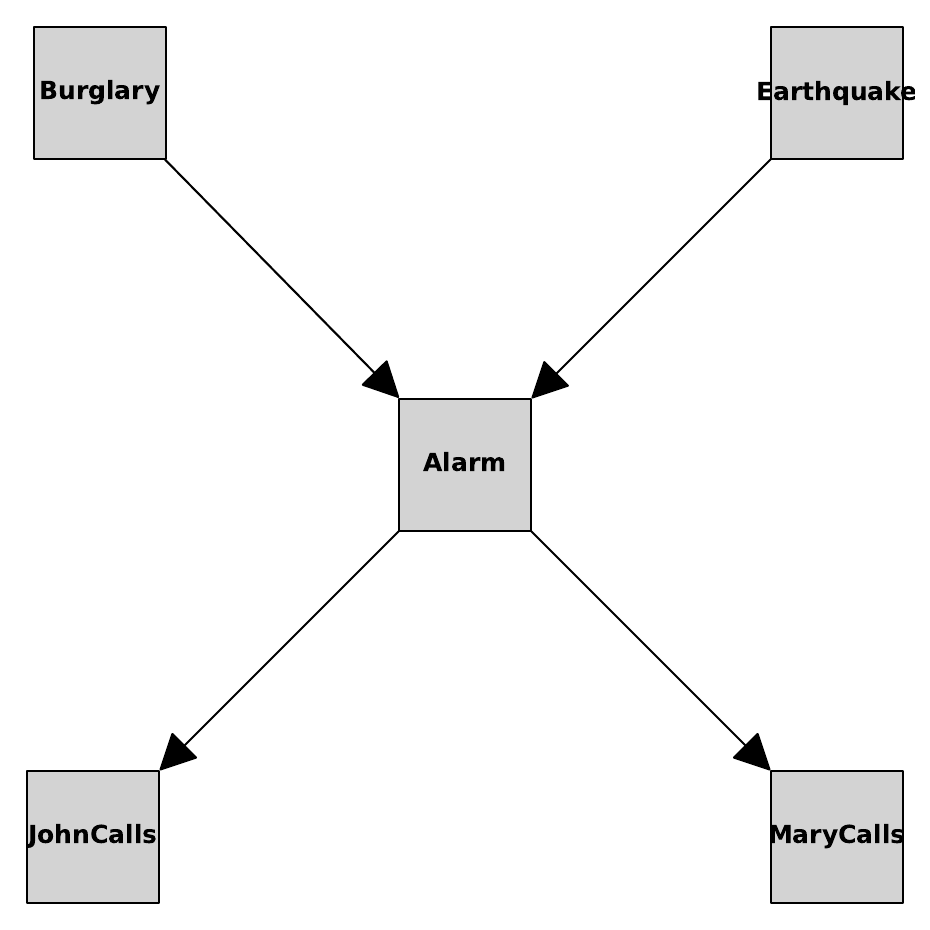}
  }
  \subfloat[Survey\label{fig:ae}]
  {
    \includegraphics[width=0.2\linewidth]{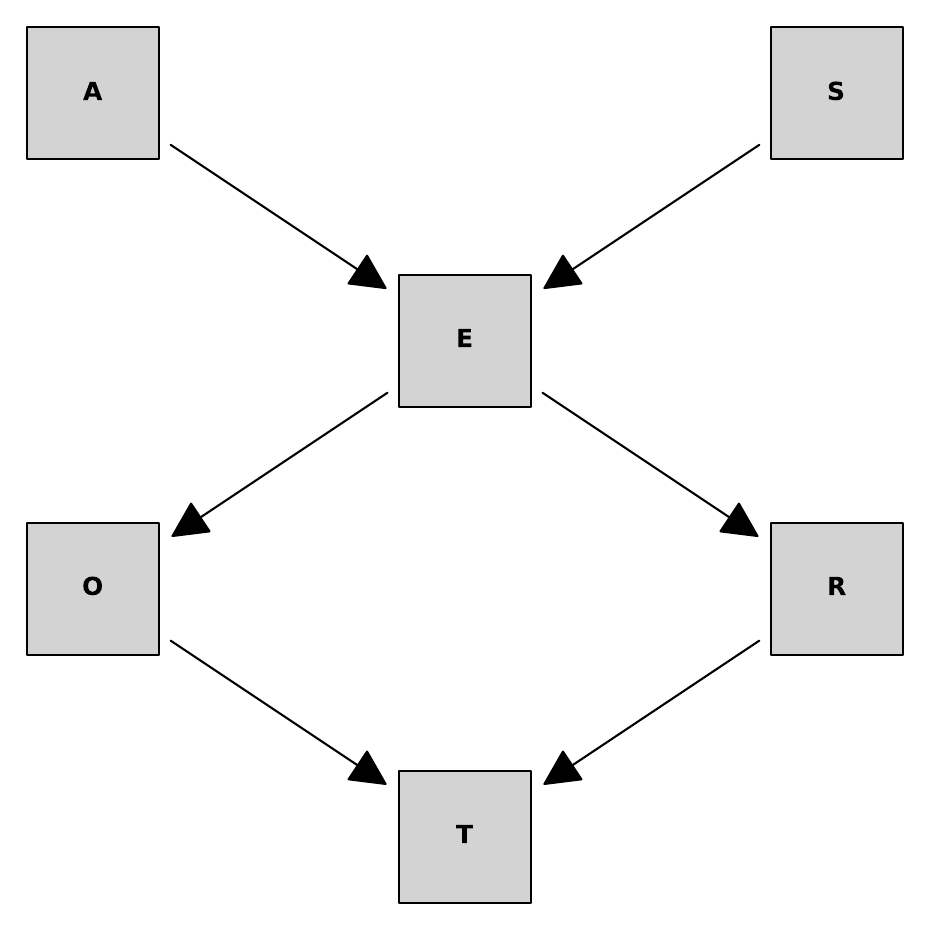}
  }  
  \subfloat[Asia\label{fig:cnn}]
  {
    \includegraphics[width=0.2\linewidth]{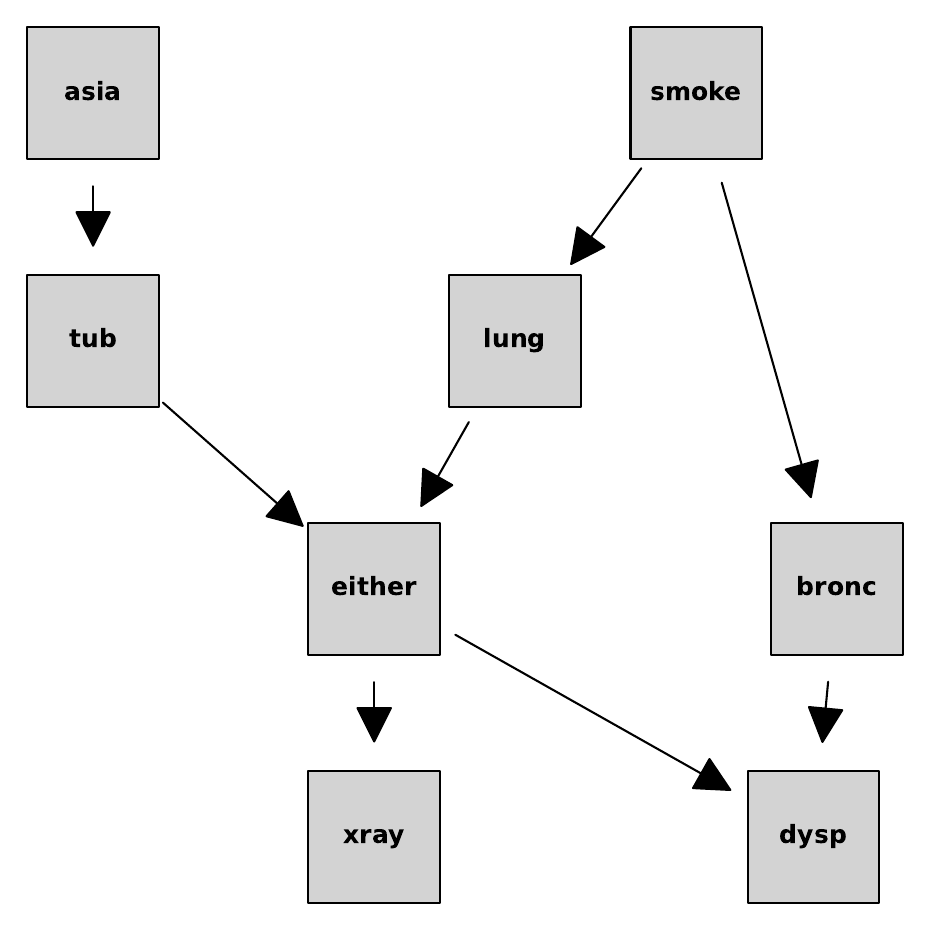}
  }
  \subfloat[Sachs\label{fig:rnn}]{
    \includegraphics[width=0.2\linewidth]{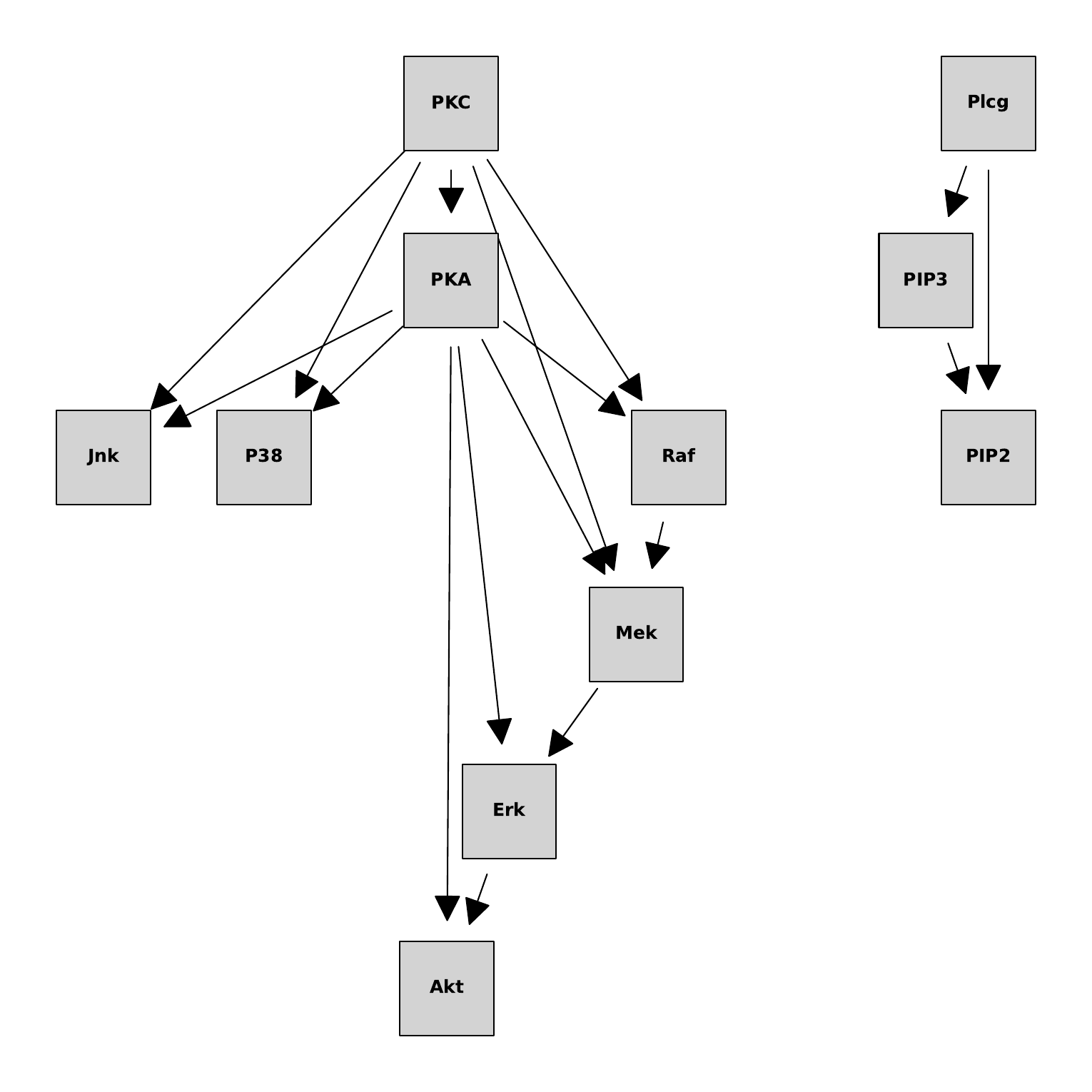}
  }
  
  \subfloat[Child\label{fig:gcn}]
  {
    \includegraphics[width=0.3\linewidth]{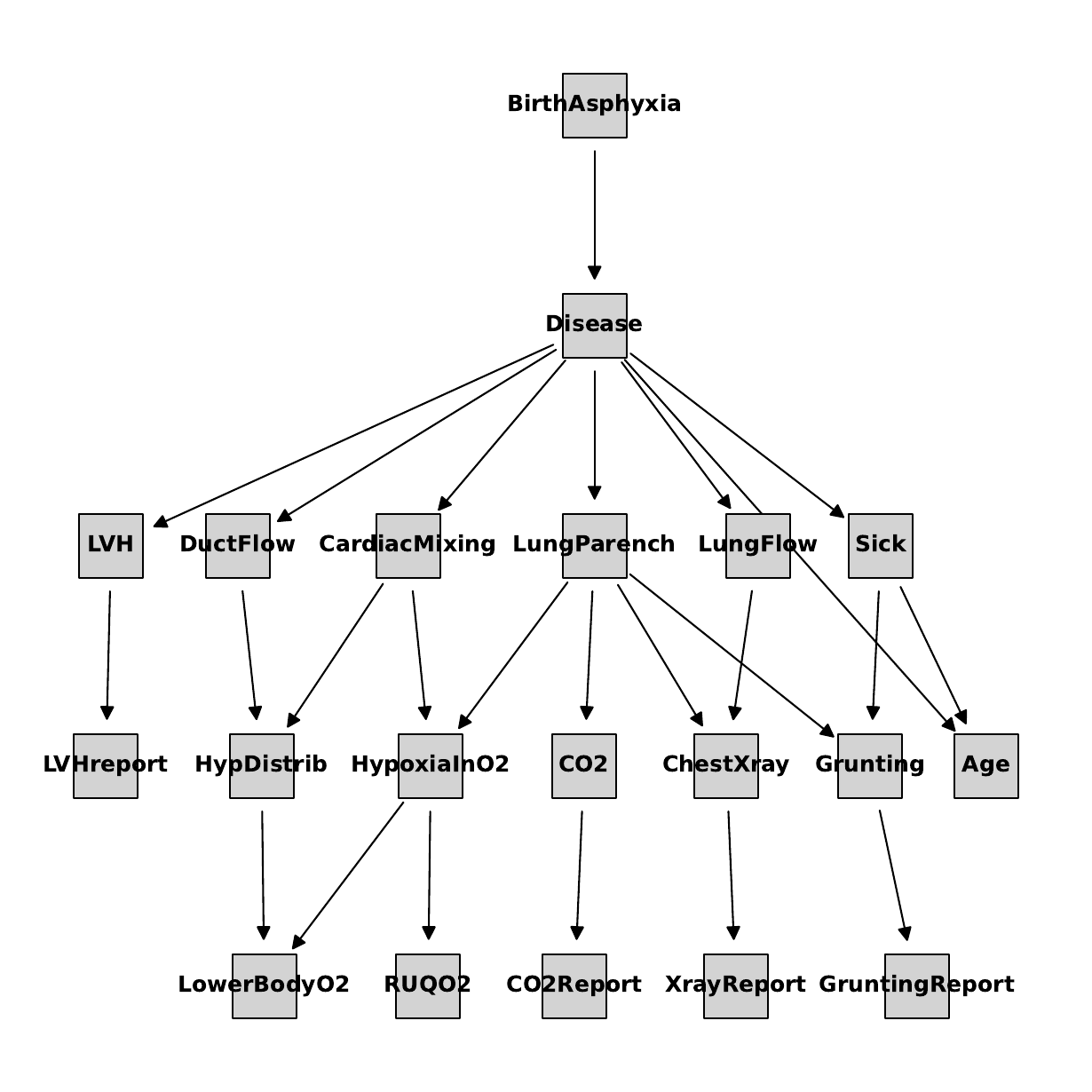}
  }
  \subfloat[Insurance\label{fig:transformer}]
  {
    \includegraphics[width=0.3\linewidth]{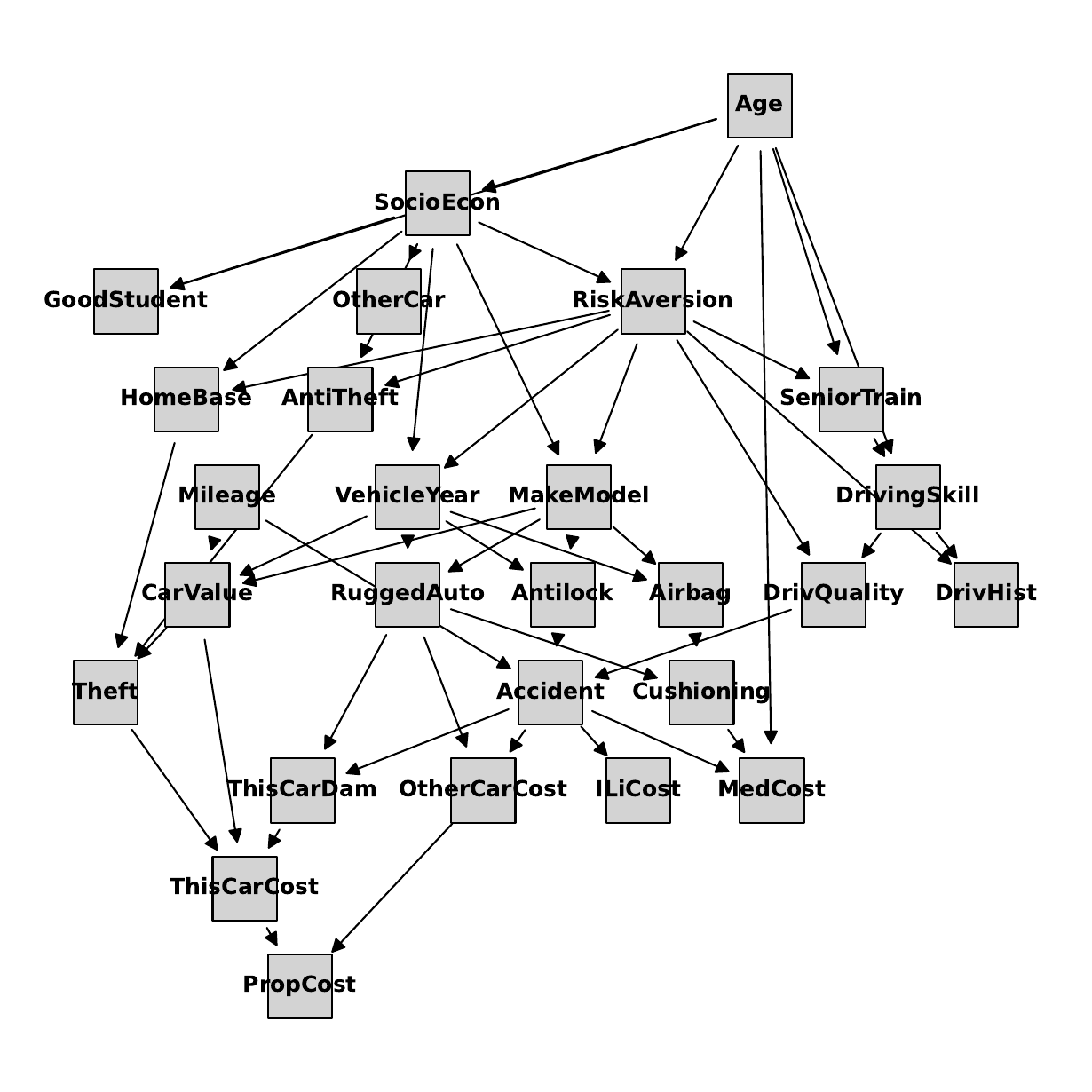}
  }
  \caption{Graph Visualizations.}
  \label{fig:graph_vis}
\end{figure}

\subsection{Baselines}\label{baselines}

The development history of causal discovery from interventional data has been a gradual and incremental process. Here is a detailed introduction to some key works in this field:

\begin{itemize}[leftmargin=*]
    \item \textbf{GIES~\citep{hauser2012characterization}:} \textit{This paper first extends the concept of Markov equivalence of Directed Acyclic Graphs for the first time to the case of interventional distributions arising from multiple interventional experiments.} It further demonstrates that under reasonable assumptions of interventional experiments, the intervened Markov equivalence defines a more refined DAG partition than the observed Markov equivalence, thereby enhancing the identifiability of causal models. Moreover, they generalize the greedy equivalence search algorithm, proposing a greedy interventional equivalence search algorithm for regularized maximum likelihood estimation under such intervened conditions.
    
    \item \textbf{IGSP~\citep{wang2017permutation,yang2018characterizing}:} In this paper, the authors propose two algorithms that utilize both observational and interventional data with consistency guarantees, and prove their consistency under the faithfulness assumption. These algorithms are intervention-adapted versions of the Greedy SP algorithm, and they are non-parametric, which makes them applicable to the analysis of non-Gaussian data as well. Subsequently, \textit{they first extend these identifiability results to general interventions that can modify the dependencies between the target variable and its causes without eliminating them}, and propose the first consistent algorithm for learning a DAG in such an environment.

    \item \textbf{UT-IGSP~\citep{squires2020permutation}:} In this paper, \textit{the authors further extend interventions to unknown scenarios, that is, the problem of estimating causal Directed Acyclic Graph models from a mixture of observational and interventional data when the intervention targets are partially or completely unknown.} They describe the intervened Markov equivalence classes of DAGs that can be identified from interventional data with unknown intervention targets. Additionally, they propose a provably consistent algorithm for learning the intervened Markov equivalence class from such data. The algorithm greedily searches the permutation space to minimize a novel scoring function. This algorithm is also non-parametric.

    \item \textbf{DCDI~\citep{brouillard2020differentiable}:} In this paper, \textit{the authors first introduce a theoretically grounded method for learning causal structures based on neural networks that can utilize interventional data.} They present two instances of this approach: one that relies on normalizing flows as a universal density approximator. They also demonstrate that the precise maximization of the proposed score will identify the $\mathcal{I}$-Markov equivalence classes of causal graphs for both known and unknown target settings.

    \item \textbf{ENCO~\citep{lippe2022efficient}:} In this paper, the authors explore score-based methods for causal discovery from observational and interventional data. They argue that such methods often require constrained optimization to enforce acyclicity or lack convergence guarantees. Consequently, they formulate graph search as an optimization of the likelihood of independent edges, where the direction of edges is modeled as separate parameters. Thus, they provide convergence guarantees when interventions on all variables are available, without the need to constrain the acyclicity of the scoring function.
    
    \item \textbf{BaCaDI~\citep{hagele2023bacadi}:} In this paper, the authors aim to discuss the issue of causal discovery in the realistic scenario where interventional data is scarce. To address this shortcoming, they provide a principled Bayesian approach that operates within the continuous space of causal Bayesian networks (CBNs) and the latent probability representations of interventions. This enables them to approximate complex joint posteriors through efficient, gradient-based particle variational inference techniques, making it applicable to causal systems with many variables.
\end{itemize}

Recently, Supervised Causal Discovery, which aims to compress data and map between causal relations using pre-synthesized causal graph datasets, has shown impressive performance in test data. We introduce some key works on supervised causal discovery from intervention data:

\begin{itemize}[leftmargin=*]
    \item \textbf{CSIvA~\citep{ke2023learning}:} In this paper, the authors believe that meta-learning enables models to generalize well to data from natural causal Bayesian networks, even with relatively few assumptions made during training on synthetic data. \textit{Therefore, they introduce for the first time a supervised approach to tackle the problem of causal structure induction.} This method maps datasets composed of both observational and interventional samples to a structure. By introducing a novel transformer architecture, they aim to discover relations between variables across samples.

    \item \textbf{SDI~\citep{ke2019learning,ke2023neural}:} In this paper, the authors introduce a novel neural network-based method for causal discovery from interventional data, capable of handling unknown interventions. By utilizing two sets of distinct parameters to model the causal mechanisms and the structure of the causal graph, experimental evidence indicates that this method can generalize to unseen interventions and can effectively perform partial graph discovery.
    
    \item \textbf{AVICI~\citep{lorch2022amortized}:} In this work, the authors posit that designing appropriate scores or tests that capture prior knowledge is challenging. Therefore, they propose amortizing the learning of causal structures, which involves training a variational inference model to directly predict causal structures from observational or interventional data. This approach leverages permutation invariance to exhibit robust generalization capabilities, particularly in the challenging field of genomics.
\end{itemize}

Due to the limitations inherent in the type of methods and the constraints of the assumptions made, they are generally unable to effectively process interventional data across various experimental scenarios. Here, we further summarize, based on the experiments of their paper, the basic setup of their causal discovery from interventional data, as depicted in Table~\ref{tab:baselinesetup}:

\input{tables/baselinesetup}

Most existing intervention dataset learning methods have typically required a known set of intervention targets, which is often a strong assumption. Therefore, when intervention targets are unknown, this naturally poses a challenge: how to identify intervention nodes individually becomes a key issue in structural learning. This task is of practical significance. For instance, in the realm of gene editing, techniques can cleave off-target genomic sites. Evaluating and detecting off-target effects accurately and devising corresponding strategies constitute significant research directions in current gene editing studies~\citep{manghwar2020crispr}. Limited attention has been given to exploring methods for identifying intervention targets. We provide an introduction as follows:

\begin{itemize}[leftmargin=*]
    \item \textbf{UT-IGSP~\citep{squires2020permutation}:} As mentioned before, the UT-IGSP algorithm learns the intervention targets while learning the causal structure. Although the intervention refines the search space, greedy search of the sparsest permutations is very slow in the case of high-dimensional data, especially when using non-Gaussian conditional independence test.
    \item \textbf{CITE~\citep{varici2021scalable}:} In this paper, the authors primarily address the problem of estimating unknown intervention targets in causal directed acyclic graphs based on observational and interventional data. The focus lies on soft interventions within the framework of linear structural equation models. They propose a scalable and efficient algorithm capable of consistently identifying all intervention targets. The key idea is to estimate the intervention sites based on the difference between precision matrices associated with the observed data set and the intervened data set.
    \item \textbf{PreDITEr~\citep{varici2022intervention}:} After that, the authors of this paper further extend the previous method by eliminating the need to learn the entire causal model, focusing solely on learning intervention targets. The key point is to leverage the sparse changes imposed on the precision matrix of the linear model by intervention measures, composed of a series of precision difference estimation steps. Additionally, they infer the knowledge required to refine observational Markov equivalence classes into interventional MECs.
    \item \textbf{LIT~\citep{yang2024learning}:} In this paper, the authors tackle for the first time the problem of identifying unknown intervention targets in a multi-environment setting. They further consider cases within the intervention target set that allow for potential confounding factors. They propose a two-stage algorithm to recover exogenous noise and match it with the corresponding endogenous variables. Under the assumption of causal sufficiency, intervention targets can be uniquely identified. In cases where potential confounding factors exist, a candidate intervention target set is provided, offering more information compared to previous works.
\end{itemize}

As shown in Table~\ref{tab:target_compare}, we provide a comparison to briefly illustrate the types of unknown intervention experiments supported by the above intervention target detection algorithms.

\input{tables/target_compare}

To ensure fair experimentation, algorithms and hyperparameter tuning are involved in each of our experiments. For all baseline algorithms, we explore critical hyperparameters that govern their interactions, and report the best results. Moreover, we use open source code for all algorithms for evaluation, including code from the authors as well as various popular toolkits. The public code for the baseline is also included at the URL below.

Firstly, we introduce a method specifically designed for causal discovery from observational and interventional data:

\begin{itemize}[leftmargin=*,itemsep=-0.1em]
    \item \textbf{GIES:} - Score Function: Gaussian BIC, \url{https://github.com/juangamella/gies}
    \item \textbf{IGSP:} - Significance $\alpha \in \{\text{1e−3}, \text{1e−4}\}$, CI test $\in \{\text{Gaussian}, \text{kci}, \text{hsic}\}$, \url{https://github.com/uhlerlab/graphical_model_learning}
    \item \textbf{UT-IGSP:} - Significance $\alpha \in \{\text{1e−3}, \text{1e−4}\}$, CI test $\in \{\text{Gaussian}, \text{kci}, \text{hsic}\}$, \url{https://uhlerlab.github.io/causaldag/utigsp.html}
    \item \textbf{ENCO:} - Sparsity Regularizer $\lambda_{sparse} \in \{0.002, 0.02\}$, Epochs $\in \{50, 100\}$, \url{https://github.com/phlippe/ENCO}
    \item \textbf{AVICI:} - Model $\in \{\text{scm-v0}, \text{linear}, \text{rff}, \text{grn}\}$, \url{https://github.com/larslorch/avici}
\end{itemize}

As mentioned in the main text, joint causal inference offers a robust theoretical framework that effectively combines interventional and observational data, simplifying causal discovery algorithms to operate solely on observational setting. We select four distinct algorithmic combinations, with parameter settings as follows:

 \begin{itemize}[leftmargin=*,itemsep=-0.1em]
    \item \textbf{PC} - variant $\in$ $\{$original, stable$\}$, Significance $\alpha \in$ $\{$0.05, 0.01$\}$, CI test $\in \{\text{fisherz}, \text{g2}, \text{chi2}\}$, \url{https://github.com/huawei-noah/trustworthyAI/tree/master/gcastle}
    \item \textbf{HC} - Score Function: Bdeu Score, \url{https:// github.com/pgmpy/pgmpy}
    \item \textbf{BLIP} - Time $\in$ $\{$60, 300, \#Variable$\}$, \url{https://cran.r-project.org/web/packages/r.blip/}
    \item \textbf{GOLEM} - $\lambda_{1} \in \{\text{2e-2}, \text{2e-3}\}$, $\omega \in \{0.2, 0.3\}$, \url{https://github.com/ignavierng/golem}
\end{itemize}

Finally, for the specific task of intervention target detection, the configuration parameters for the comparative methods are as follows:

\begin{itemize}[leftmargin=*,itemsep=-0.1em]
    \item \textbf{CITE} - $\lambda_{l1} \in \{\text{1e-1}, \text{5e-1}\}$, $\text{Parent}_{l1} \in \{\text{5e-3}, \text{1e-2}, \text{2e-2}, \text{3e-2}, \text{4e-2}, \text{5e-2}, \text{6e-2}, \text{8e-2}, \text{9e-2}, \text{1e-1}\}$, \url{https://github.com/bvarici/intervention-estimation}
    \item \textbf{PreDITEr} - $\lambda_{l1} \in \{\text{1e-1}, \text{3e-1}\}$, $\lambda_{pasp} \in \{\text{1e-1}, \text{2e-1}\}$,
    
    \url{https://github.com/bvarici/uai2022-intervention-estimation-latents}
\end{itemize}

In addition to our selected baselines, we also enumerate other classic baselines relevant to causal discovery from intervention and explain why they were not chosen for our study:

\begin{itemize}[leftmargin=*,itemsep=-0.1em]
    \item \textbf{DCDI:} This method is a well-known and widely used approach for causal discovery using neural networks, relying on normalizing flows as a generic density estimator. Therefore, it is primarily applicable to continuous data types. We found in experimental testing that it almost never converges and operates at an unbearably slow pace in the case of discrete data. Hence, we disregard its results.
    \item \textbf{SDI:} This method is the first to use neural networks and adapt to discrete data types for intervention causal discovery. However, due to its high complexity, it is often challenging to scale to large-scale node graphs. Additionally, according to the original repository code, its hybrid C-language requirement for compiler adaptation leads to abnormal errors. Hence, we disregard its results.
    \item \textbf{CSIvA:} The lack of accessible code and complex parameter settings make it impossible to replicate the findings of the paper.
    \item \textbf{BaCaDI:} Similar reasons to DCDI.
    \item \textbf{LIT:} Similar reasons to CSIvA.
\end{itemize}

\subsection{Metrics}\label{metrics}

We design different reasonable metrics for two different tasks, \ie, $\mathcal{I}$-CPDAG discovery and intervention target detection. The detailed introduction is as follows:

\textbf{$\mathcal{I}$-CPDAG discovery.} We first calculate Structural Hamming Distance (SHD) at $\mathcal{I}$-CPDAG level. Specifically, SHD is computed between the learned $\mathcal{I}$-CPDAG$(\hat{\mathcal{G}})$ and ground truth $\mathcal{I}$-CPDAG$(\mathcal{G})$, \ie, the smallest number of edge additions, deletions, direction reversals and type changes (directed vs. undirected) to convert the output $\mathcal{I}$-CPDAG to ground truth $\mathcal{I}$-CPDAG. As is shown in Table~\ref{table:metrics}, SHD is equal to the sum of the number of \rxmark s in the table.

\input{tables/metrics}

Different graphs may lead to different causal inference statements and different intervention distributions. To quantify this discrepancy, we use the (pre-) distance between $\mathcal{I}$-CPDAG$(\hat{\mathcal{G}})$s, known as Structural Intervention Distance (SID). The SID is solely based on graphical criteria and quantifies the closeness between two causal graphs according to their corresponding causal inference statements. Thus, it is highly suitable for assessing graphs used for the computation of interventions. Formally:

$$\text{SID}:(\hat{\mathcal{G}},\mathcal{G})~\mapsto~\#\{(i,j),i\neq j|~the~intervention~distribution~from~i~to~j$$ 
$$~~~~~~~~~~~~~~~~~~~~~~~~~~~~~~~~~~~~~~~~~~~~~~~~~~~~~~~~~~~~~is~falsely~estimated~by~\hat{\mathcal{G}}~with~respect~to~\mathcal{G}\}$$

For more detailed calculation algorithms, we recommend readers to refer to~\citep{peters2015structural} carefully.

F1-score is then calculated based on the identifiable edges of $\mathcal{I}$-CPDAG$(\hat{G})$ and $\mathcal{I}$-CPDAG$(G)$, where the accuracy (precision) is equal to True Positive Rate (TPR) and the recall (recall) is equal to 1 - False Discovery Rate (FDR). Details about the specific calculation can also refer to Table~\ref{table:metrics}:
$$\text{Precision} = \text{TPR}=\frac{\text{\ding{192}}}{\text{\ding{192}}+\text{\ding{193}}+\text{\ding{194}}+\text{\ding{195}}},$$
$$\text{Recall} = 1-\text{FDR}=\frac{\text{\ding{192}}}{\text{\ding{192}}+\text{\ding{193}}+\text{\ding{196}}+\text{\ding{199}}},$$
$${\text{F1-Score} = \frac{2 \times \text{Precision} \times \text{Recall}}{\text{Precision} + \text{Recall}} },$$

\textbf{Intervention Target Detection.} We then use the F1-Score metric to measure the detection of intervention targets. Let the set of all intervention family edges be denoted as $\hat{\mathcal{I}}$, and all predicted intervention target edges be denoted as $\mathcal{I}$. Then we can formally define:

$$\text{Precision} = \frac{\#(\mathcal{I} \cap \hat{\mathcal{I}})}{\#\mathcal{I}},$$
$$\text{Recall} = \frac{\#(\mathcal{I} \cap \hat{\mathcal{I}})}{\#\hat{\mathcal{I}}},$$
$${\text{F1-Score} = \frac{2 \times \text{Precision} \times \text{Recall}}{\text{Precision} + \text{Recall}} }.$$

\subsection{Experiments Setting}\label{expsetting}

\textbf{Computational Resources.} For conducting experiments in this work, we employed the IS-MCMC algorithm to pre-generate training data for each test dataset. Subsequently, we trained two types of models, namely the skeleton model and the orientation  model, based on the data. The skeleton model comprises multiple cascaded models, with a default maximum of 4 order. Our benchmark environment consists of a Linux server equipped with a $1 \times$ AMD EPYC 7763 128-Core Processor CPU (512GB memory) and $4 \times$ NVIDIA RTX A6000 (48GB memory) GPUs. To carry out benchmark testing experiments, all baselines are set to run for a duration of 12 hours by default, with specific timings contingent upon the method. It is noteworthy that our method does not necessitate GPU computation, although optimization may further expedite processes. Other methods, depending on their implementation choices, may opt for GPU acceleration.

\input{tables/our_parm_setting}

\textbf{Basic Configurations.} As shown in Table~\ref{tab:our_parm_setting}, we provide the basic parameter configurations for our \model, along with the fundamental details of the synthetic data and test data used in various types of experiments in this paper. Furthermore, we provide some basic conceptual explanations. The training data refers to the pre-sampled synthetic data, which includes pairs of instances used for training. These pairs consist of re-parameterized synthetic graphs and sample data tables obtained through forward sampling based on conditional probability tables. As for the test data, we directly retrieve the causal graph structure and parameterized conditional probability tables from bnlearn. Then, the data obtained directly through forward sampling is referred to as observed samples. Otherwise, we can conduct various types of intervention experiments, including single-node intervention or multi-node intervention (\ie, intervening on one or multiple nodes at a time), soft intervention or hard intervention (\ie, whether to eliminate dependencies of parent nodes; for soft intervention, we replace probability distributions sampled from the Dirichlet distribution with parameters $\alpha \sim U [0.2, 1.0])$, and known intervention or unknown intervention targets (\ie, whether we have prior knowledge of intervention target nodes). For the starting point of IS-MCMC sampling, we default to selecting JCI+BLIP as the proxy algorithm to obtain the initial graph seed. We use the \texttt{xgb.XGBClassifier()} API provided by \texttt{Scikit-learn}\footnote{https://scikit-learn.org/stable/} as the classifier for both skeleton and orientation models. Different threshold parameters are set for skeleton identification and orientation identification, while all other hyper-parameters are set to default values. 

%% file: tables/datasets.tex
\begin{table*}[ht]
\centering
\renewcommand\arraystretch{1.2}
\caption{Statistics and description of bayesian networks we used.}
\begin{sc}
\begin{tabular}{ccccccc}
\toprule
Network & \#Nodes & \#Edges & \#Parameters & Max in-degree & Avg. degree \\
\midrule
Earthquake & 5 & 4 & 10 & 2 & 1.60 \\
Survey & 6 & 6 & 21 & 2 & 2.00 \\
Asia & 8 & 8 & 18 & 2 & 2.00 \\
Sachs & 11 & 17 & 178 & 3 & 3.09 \\
Child & 20 & 25 & 230 & 2 & 2.50 \\
Insurance & 27 & 52 & 1,008 & 3 & 3.85 \\
Water & 32 & 66 & 10,083 & 5 & 4.12 \\
Mildew & 35 & 46 & 540,150 & 3 & 2.63 \\
Alarm & 37 & 46 & 509 & 4 & 2.49 \\
Barley & 48 & 84 & 114,005 & 4 & 3.50 \\
Hailfinder & 56 & 66 & 2,656 & 4 & 2.36 \\
Hepar2 & 70 & 123 & 1,453 & 6 & 3.51 \\
Win95pts & 76 & 112 & 574 & 7 & 2.95 \\
Pathfinder & 109 & 195 & 72,079 & 5 & 3.58 \\
\bottomrule
\end{tabular}
\end{sc}
\label{tab:dataset}
\end{table*}

%% file: tables/baselinesetup.tex
\begin{table*}[ht]
\centering
\caption{Summary of basic settings for various methods of causal discovery from intervention data}
\renewcommand\arraystretch{1.5}
\setlength{\tabcolsep}{1.2mm}{}
\resizebox{1.0\linewidth}{!}{
\begin{sc}
\begin{tabular}{ccccccccc}
\toprule

\multicolumn{1}{c}{\multirow{2}{*}{\makecell{Methods}}} &
\multicolumn{2}{c}{Groun-Truth DAG} &
\multicolumn{3}{c}{Testing Data} &
\multicolumn{3}{c}{Intervention Experiment}
\\
\cmidrule(r){2-3} \cmidrule(r){4-6} \cmidrule(r){7-9}
& Type  &  \#Variable & Type & \#Obs.  & \#Int. / per exp.  &  Type & \#Int. Family  & \#Int. Target\\
\midrule

GIES & \makecell{Random Graph~\citeyearpar{kalisch2007estimating}\\(Linear Gaussian SEM)} & 10 $\sim$ 50 & Continue & 50 $\sim$ 10k & 50 $\sim$ 10k & \makecell{Single / Multiple \\ Know \\ Hard} & \#Var. * \{.0 / .2 / .4 / .6 / .8 / 1.0\} & 1 $\sim$ 4 \\

\hline

IGSP & \makecell{ER\\(Linear Gaussian SEM)} & 10 $\sim$ 20 & Continue & 1k $\sim$ 100k & 1k $\sim$ 100k & \makecell{Single / Multiple \\ Know \\ Hard / Soft} & 1 $\sim$ 2 & \#Var. * \{0.4 / 0.5\} \\

\hline

UT-IGSP & \makecell{ER\\(Linear Gaussian SEM)} & 20 & Continue & 1k $\sim$ 5k & 1 $\sim$ 5k & \makecell{Single / Multiple \\ Know / Unknow \\ Hard / Soft} & \makecell{Know: (5) \\+ Unknow: (5)} & \makecell{Know: (1) \\+ Unknow: (0 $\sim$ 3)}\\

\hline

DCDI & \makecell{ER\\(LGM / ANM / NN)} & 10-20 & Continue & 10k & 10k & \makecell{Single / Multiple \\ Know / Unknow \\ Hard / Soft} & \#Variable & \#Var. * 0.1 \\

\hline

SDI & \makecell{Random Graph (NN) \\ bnlearn} & 3 $\sim$ 13 & Discrete & \makecell{2560\\(from code)} & \makecell{1111\\(from code)} & \makecell{Single \\ Know \\ Soft} & \#Variable & 1 \\

\hline

CSIvA & \makecell{ER\\(LGM / ANM / NN)\\(MLP / Dirichlet)} & 30 $\sim$ 80 & Continue & 100 $\sim$ 1.5k & 100 $\sim$ 1.5k & \makecell{Single \\ Know / Unknow \\ Hard / Soft} & \#Variable & 1 \\

\hline

ENCO & \makecell{Random Graph (NN) \\ bnlearn} & 25 & Discrete & 1k $\sim$ 100k & 20 $\sim$ 200 & \makecell{Single \\ Know \\ Soft} & \#Variable & 1 \\

\hline

AVICI & \makecell{ER / SF / WS / SBM / GRG\\(Linear / Random Fourier)} & 30 & Continue & \makecell{200\\(from code)} & \makecell{20\\(from code)} & \makecell{Single \\ Know \\ Hard / Soft} & \#Var. * .5 & 1 \\

\hline

BaCaDI & \makecell{ER / SF / SERGIO~\citeyearpar{dibaeinia2020sergio}\\(Linear / Nolinear Gaussian)} & 20 & Continue & 100 & 10 & \makecell{Single \\ Know / Unknow \\ Hard / Soft} & \#Variable & 1 \\

\hline

\textbf{\model} & bnlearn & 5-109 & Discrete & 2k $\sim$ 10k & 1k $\sim$ 10k & \makecell{Single / Multiple \\ Know / Unknow \\ Hard / Soft} & \#Var. * \{.0 / .2 / .4 / .8 \} & 1$\sim$ 3 \\
\bottomrule
\end{tabular}
\end{sc}
}
\label{tab:baselinesetup}
\end{table*}

%% file: tables/target_compare.tex
\begin{table*}[ht]
\centering
\renewcommand\arraystretch{1.5}
\caption{Comparison of situations (more suitable) supported by intervention target detection methods.}
\scalebox{0.8}{
\begin{sc}
\begin{tabular}{cccccccccc}
\toprule
\multicolumn{1}{c}{\multirow{2}{*}{\makecell{\textbf{Methods}}}} & \multicolumn{1}{c}{\multirow{2}{*}{\makecell{\textbf{Causal}\\ \textbf{Insufficiency}}}} & \multicolumn{2}{c}{\textbf{Interv. Frequency}} & \multicolumn{2}{c}{\textbf{Interv. Mechanism}} & \multicolumn{2}{c}{\textbf{Data Type}} & \multicolumn{2}{c}{\textbf{Domain}}\\
\cline{3-10}
& & Single & Multiple & Hard & Soft & Continue & Discrete & Single & Multiple \\

\midrule

UT-IGSP & \rxmark & \gcmark & \gcmark & \gcmark & \gcmark & \gcmark & $\color{hiddendraw}{\bm{\sim}}$ & \gcmark & \gcmark \\

CITE & \rxmark & \gcmark & \gcmark & $\color{hiddendraw}{\bm{\sim}}$ & \gcmark & \gcmark & $\color{hiddendraw}{\bm{\sim}}$ & \gcmark & \rxmark \\

PreDITEr & \rxmark & \gcmark & \gcmark  & $\color{hiddendraw}{\bm{\sim}}$ & \gcmark& \gcmark & $\color{hiddendraw}{\bm{\sim}}$ & \gcmark & \rxmark \\

LIT & \gcmark & \gcmark & \gcmark & $\color{hiddendraw}{\bm{\sim}}$ & \gcmark & \gcmark & $\color{hiddendraw}{\bm{\sim}}$ & \gcmark & \gcmark \\


\bottomrule
\end{tabular}
\end{sc}
}
\label{tab:target_compare}
\end{table*}

%% file: tables/metrics.tex
\begin{table}[htbp]
\renewcommand\arraystretch{1.5}
\caption{SHD calculation details.}
\label{table:metrics}
\begin{center}
\resizebox{0.9\linewidth}{!}{
\begin{sc}
\begin{tabular}{c|c|c|c|c}
  \toprule
\multirow{2}{*}{\makecell{in Predict→\\in Ground-Truth↓}} & \multicolumn{2}{c|}{identifiable (directed)} & \multirow{2}{*}{\makecell{unidentifiable (undirected)}} & \multirow{2}{*}{\makecell{missing in skeleton}} \\ \cline{2-3}
    & right & wrong  &  & \\ \midrule
identifiable & \ding{192}~\gcmark  & \ding{193}~\rxmark & \ding{194}~\rxmark  & \ding{195}~\rxmark \\ \hline
unidentifiable & \multicolumn{2}{c|}{\ding{196}~\rxmark} & \ding{197}~\gcmark & \ding{198}~\rxmark \\ \hline
nonexist  & \multicolumn{2}{c|}{\ding{199}~\rxmark} & \ding{200}~\rxmark & \ding{201}~\gcmark \\ \bottomrule
\end{tabular}
\end{sc}
}
\end{center}
\end{table}

%% file: tables/our_parm_setting.tex
\begin{table*}[htbp!]
\renewcommand\arraystretch{1.0}
\caption{Basic configuration of our experiments and methods}
\centering
\label{tab:our_parm_setting}
\resizebox{0.9\linewidth}{!}{
\begin{sc}
\begin{tabular}{ccc}
     \toprule
     Category & Detail & Hyperparameters or Settings\\
     \midrule
     
     \multicolumn{1}{c}{\multirow{4}{*}{\makecell{Training Data}}} & \multicolumn{1}{c}{\multirow{2}{*}{\makecell{\#Training Instances}}} & 400 (default)\\
     & & 100 $\sim$ 1600 (others, see~\ref{data_study})\\
     
     \cline{2-3}
     
     & \multicolumn{1}{c}{\multirow{2}{*}{\makecell{\#Simples / per Instance}}} & 10k (default)\\
     & & 1k $\sim$ 20k (others, see~\ref{data_study})\\
     \midrule
     
     \multicolumn{1}{c}{\multirow{4}{*}{\makecell{Testing Data}}} & \multicolumn{1}{c}{\multirow{2}{*}{\makecell{\#Observation Sample}}} & 10k (default)\\
     & & 2k $\sim$ 10k (others, see~\ref{limit_obs_sample_size_exp})\\

     \cline{2-3}
     
     & \multicolumn{1}{c}{\multirow{2}{*}{\makecell{\#Intervention Sample}}} & 10k (default)\\
     & & 1k $\sim$ 10k (others, see~\ref{limit_int_sample_size_exp})\\
     \midrule
     
     \multicolumn{1}{c}{\multirow{5}{*}{\makecell{\makecell{Intervention\\\\Experiment}}}} & \multicolumn{1}{c}{\multirow{3}{*}{\makecell{Intervention Type}}} & Single + Soft + Unknow (default)\\
     & & Single + Hard + Unknow (others, see~\ref{hard_exp})\\
     & & Multiple + Soft + Unknow (others, see~\ref{multiple_exp})\\

     \cline{2-3}
     
     & \multicolumn{1}{c}{\multirow{2}{*}{\makecell{\#Intervention Exp.}}} & \#Var. $\times$ 0.2 (default)\\
     & & \#Var. $\times$ \{0.0, 0.4, 0.8\} (others, see~\ref{int_size_exp})\\
     \midrule

     \multicolumn{1}{c}{\multirow{2}{*}{\makecell{Initial Graph Seed}}} & \multicolumn{1}{c}{\multirow{2}{*}{\makecell{Proxy Algorithm}}} & JCI-BLIP (default)\\
     & & Random Graph (others, see~\ref{data_study})\\
     \midrule

     \multicolumn{1}{c}{\multirow{2}{*}{\makecell{Model threshold}}} & Skeleton Threshold & 0.5 $\sim$ 0.75 \\
     & Orientation Threshold & 0.1\\

     \bottomrule
\end{tabular}
\end{sc}
}
\end{table*}

%% file: appendix/6_expresult.tex
\section{Additional Experiments (RQ4)}\label{add_exp}

\subsection{Effect of Intervention Type}\label{int_exp_type}

\subsubsection{Prefect Interventions}\label{hard_exp}

We follow the default parameter settings, replacing soft interventions with hard interventions, \ie, perfect interventions, to eliminate the dependency of the parent node. The results of $\mathcal{I}$-CPDAG structure discovery are shown in Table~\ref{table:ap_1_1_cpdag}, and the results of intervention target identification are shown in Table~\ref{table:ap_1_1_target}. The results indicate that our method consistently outperforms on almost all datasets and metrics.

\input{tables/ap_1_1_cpdag}

\input{tables/ap_1_1_target}

\subsubsection{Multiple Interventions}\label{multiple_exp}

Next, we consider conducting multi-node interventions, where for each intervention experiment, we randomly select 1 to 3 nodes for intervention, with the selection of nodes being without replacement, while the rest follow the default parameter settings. The results of $\mathcal{I}$-CPDAG structure discovery are shown in Table~\ref{table:ap_1_2_cpdag}, and the results of intervention target identification are shown in Table~\ref{table:ap_1_2_target}. The results indicate that our method still maintains a consistent lead in almost all datasets and metrics. Additionally, it is worth noting that in the case of interventions on multiple nodes simultaneously, we observed that the performance does not necessarily improve compared to single-node interventions, which may be due to the complexity introduced by intervening on multiple nodes at the same time.

\input{tables/ap_1_2_cpdag}

\input{tables/ap_1_2_target}

\subsection{Effect of Sample Size}

\subsubsection{Limited interventional data sample sizes}\label{limit_int_sample_size_exp}

Intervention experiments in the real world are often unrealistic or costly, such as gene knockout experiments, where only a small amount of experimental data may be available. Therefore, we further consider the case of limited intervention data sample size. Specifically, we reduce the sample size of each intervention experiment from 10,000 to 1,000 to simulate this scenario, while keeping other parameters at their default settings. The results of $\mathcal{I}$-CPDAG structure discovery are shown in Table~\ref{table:ap_2_1_cpdag}, and the results of intervention target identification are shown in Table~\ref{table:ap_2_2_target}. The results indicate that most methods show a certain degree of performance decline. Nevertheless, our method still maintains a significant lead on almost all datasets and metrics, demonstrating the robustness and superiority of our \model approach.

\input{tables/ap_2_1_cpdag}

\input{tables/ap_2_1_target}

\subsubsection{Limited observational data sample sizes}\label{limit_obs_sample_size_exp}

Similarly, we also consider the case of limited sample size of observational data. Specifically, we reduce the sample size of the observational data from 10,000 to 2,000 to simulate this scenario, while keeping the rest of the parameters at their default settings. The results of $\mathcal{I}$-CPDAG structure discovery are shown in Table~\ref{table:ap_2_2_cpdag}, and the results of intervention target identification are shown in Table~\ref{table:ap_2_2_target}. The results are similar, indicating the importance of observational data for causal structure identification. We also see that our method is still in a good leading position.

\input{tables/ap_2_2_cpdag}

\input{tables/ap_2_2_target}

\subsection{Effect of Intervention Experiment Size}\label{int_size_exp}

Finally, we investigated the impact of different intervention trial frequencies on causal structure discovery. Specifically, we set intervention frequencies at 0\%, 20\%, 40\%, and 80\% of the target graph nodes, where 0\% means inferring causal structure solely from observational data. We conducted experiments on the Child dataset (20 nodes) using seven methods, including ours, and the results of I-CPDAG structure discovery at different intervention frequencies are shown in Figure~\ref{fig.effect_intervention_exp_size}. The experiments align with intuition, showing that with an increasing number of experiments, all methods exhibit improvements in various metrics, with our method standing out more prominently.

\begin{figure*}[htbp!]
\begin{center}
\includegraphics[width=0.98\linewidth]{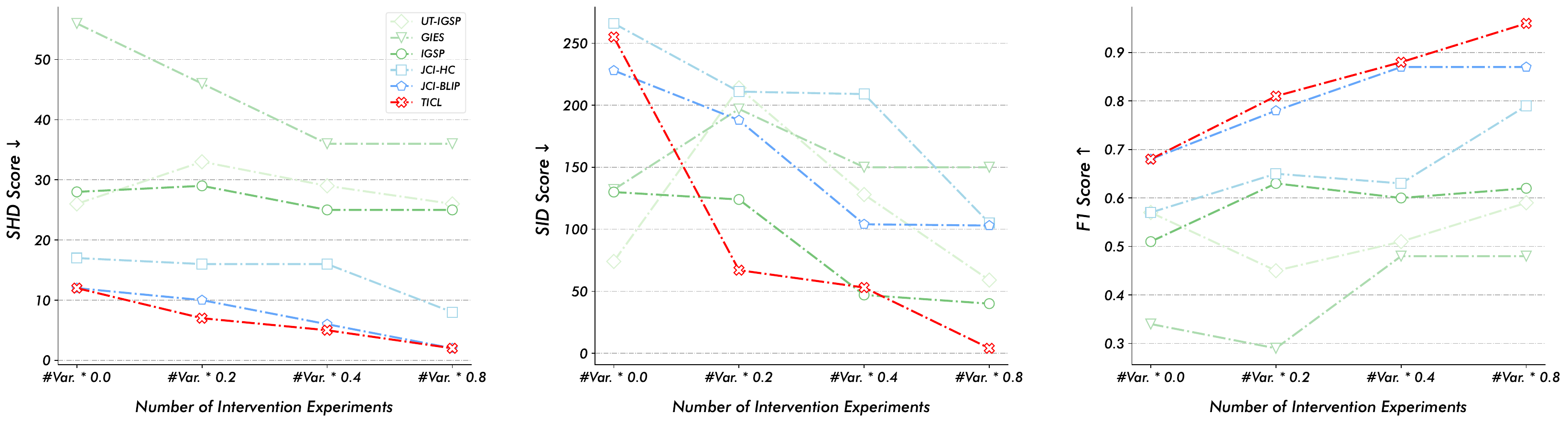}
\caption{Exploration of the impact of different size of intervention experiments on performance} 
\label{fig.effect_intervention_exp_size}
\end{center}
\end{figure*}

%% file: tables/ap_1_1_cpdag.tex
\begin{table*}[t!]
  \scriptsize
   \setlength{\tabcolsep}{4.0pt}
   \renewcommand{\arraystretch}{1.2}
  \caption{Performance comparison of $\mathcal{I}\textit{-CPDAG}$ under \textcolor{orange}{perfect intervention}~~(F1 Score $\uparrow$ / SHD $\downarrow$ / SID $\downarrow$).}
  \label{table:ap_1_1_cpdag}
\begin{center}
\resizebox{0.92\linewidth}{!}{
\begin{sc}
\begin{tabular}{c c | c c c c c c c c c c}
    \toprule
    \makecell{Datasets\\ \#nodes / \#edges} & Metric & \model  &  JCI-GOLEM & JCI-PC & JCI-BLIP  & JCI-HC  &  AVICI$^{*}$  & ENCO$^{*}$  & IGSP$^{*}$  &  GIES  & UT-IGSP \\
    \midrule

\multicolumn{1}{c}{\multirow{3}{*}{{\makecell[c]{Earthquake \\ 5 / 4}}}} & F1 Score & \cellcolor{cyan!8}{{1.00}} & 0.00 & \cellcolor{cyan!8}{{1.00}} & \cellcolor{cyan!8}{{1.00}} & 0.00 & 0.29 & {\cellcolor{green!5}{0.67}} & \cellcolor{cyan!8}{{1.00}} & \cellcolor{cyan!8}{{1.00}} & \cellcolor{cyan!8}{{1.00}} \\
& SHD & \cellcolor{cyan!8}{{0}} & 6 & \cellcolor{cyan!8}{{0}} & \cellcolor{cyan!8}{{0}} & 4 & 4 & {\cellcolor{green!5}{2}} & \cellcolor{cyan!8}{{0}} & \cellcolor{cyan!8}{{0}} & \cellcolor{cyan!8}{{0}} \\
& SID & \cellcolor{cyan!8}{{0}} & 13 & \cellcolor{cyan!8}{{0}} & \cellcolor{cyan!8}{{0}} & 20 & 10 & {\cellcolor{green!5}{2}} & \cellcolor{cyan!8}{{0}} & \cellcolor{cyan!8}{{0}} & \cellcolor{cyan!8}{{0}} \\
\cmidrule(r){1-12}

\multicolumn{1}{c}{\multirow{3}{*}{{\makecell[c]{Survey \\ 6 / 6}}}} & F1 Score & \cellcolor{cyan!8}{{1.00}} & 0.40 & 0.40 & 0.73 & 0.73 & 0.00 & 0.44 & {\cellcolor{green!5}{0.91}} & {\cellcolor{green!5}{0.91}} & {\cellcolor{green!5}{0.91}} \\
& SHD & \cellcolor{cyan!8}{{0}} & 7 & 4 & 2 & 2 & 6 & 4 & {\cellcolor{green!5}{1}} & {\cellcolor{green!5}{1}} & {\cellcolor{green!5}{1}} \\
& SID & \cellcolor{cyan!8}{{0}} & 23 & 18 & 9 & 9 & 17 & 16 & {\cellcolor{green!5}{4}} & {\cellcolor{green!5}{4}} & {\cellcolor{green!5}{4}} \\
\cmidrule(r){1-12}

\multicolumn{1}{c}{\multirow{3}{*}{{\makecell[c]{Asia \\ 8 / 8}}}} & F1 Score & \cellcolor{cyan!8}{{0.86}} & 0.00 & 0.40 & \cellcolor{cyan!8}{{0.86}} & 0.12 & 0.00 & 0.22 & {\cellcolor{green!5}{0.77}} & 0.71 & 0.43 \\
& SHD & \cellcolor{cyan!8}{{2}} & 14 & 8 & \cellcolor{cyan!8}{{2}} & 13 & 8 & 10 & {\cellcolor{green!5}{3}} & 4 & 6 \\
& SID & {\cellcolor{green!5}{12}} & 36 & 34 & \cellcolor{cyan!8}{{12}} & 47 & 34 & 28 & \cellcolor{cyan!8}{{10}} & 18 & 24 \\
\cmidrule(r){1-12}

\multicolumn{1}{c}{\multirow{3}{*}{{\makecell[c]{Sachs \\ 11 / 17}}}} & F1 Score & \cellcolor{cyan!8}{{0.79}} & 0.00 & 0.44 & \cellcolor{cyan!8}{{0.79}} & 0.71 & \Crashes & 0.25 & {\cellcolor{green!5}{0.56}} & 0.51 & {\cellcolor{green!5}{0.56}} \\
& SHD & \cellcolor{cyan!8}{{6}} & 20 & 17 & \cellcolor{cyan!8}{{6}} & 8 & \Crashes & 36 & {\cellcolor{green!5}{13}} & 18 & {\cellcolor{green!5}{13}} \\
& SID & \cellcolor{cyan!8}{{27}} & 61 & 30 & \cellcolor{cyan!8}{{27}} & 29 & \Crashes & 35 & {\cellcolor{green!5}{28}} & 34 & 32 \\
\cmidrule(r){1-12}

\multicolumn{1}{c}{\multirow{3}{*}{{\makecell[c]{Child \\ 20 / 25}}}} & F1 Score & \cellcolor{cyan!8}{{0.76}} & 0.24 & 0.25 & {\cellcolor{green!5}{0.75}} & 0.65 & \Crashes & 0.07 & 0.49 & 0.38 & 0.53 \\
& SHD & \cellcolor{cyan!8}{{10}} & 34 & 45 & {\cellcolor{green!5}{11}} & 16 & \Crashes & 105 & 29 & 53 & 29 \\
& SID & 160 & 301 & 233 & 188 & 226 & \Crashes & 210 & {\cellcolor{green!5}{125}} & 140 & \cellcolor{cyan!8}{{112}} \\
\cmidrule(r){1-12}

\multicolumn{1}{c}{\multirow{3}{*}{{\makecell[c]{Insurance \\ 27 / 52}}}} & F1 Score & \cellcolor{cyan!8}{{0.72}} & 0.15 & 0.47 & {\cellcolor{green!5}{0.71}} & 0.47 & \Crashes & 0.11 & 0.37 & 0.39  & 0.37 \\
& SHD & \cellcolor{cyan!8}{{22}} & 73 & 64 & {\cellcolor{green!5}{24}} & 43 & \Crashes & 171 & 77 & 96 & 78 \\
& SID & {\cellcolor{green!5}{325}} & 671 & 414 & 390 & 421 & \Crashes & 536 & 442 & \cellcolor{cyan!8}{{316}} & 452 \\
\cmidrule(r){1-12}

\multicolumn{1}{c}{\multirow{3}{*}{{\makecell[c]{Water \\ 32 / 66}}}} & F1 Score & \cellcolor{cyan!8}{{0.58}} & 0.14 & \Timeout & {\cellcolor{green!5}{0.50}} & 0.30 & \Crashes & 0.24 & \Crashes & \Crashes & \Crashes \\
& SHD & \cellcolor{cyan!8}{{42}} & 77& \Timeout & {\cellcolor{green!5}{44}} & 71 & \Crashes & 80 & \Crashes & \Crashes & \Crashes \\
& SID & \cellcolor{cyan!8}{{419}} & 566 & \Timeout & {\cellcolor{green!5}{457}} & 642 & \Crashes & 508 & \Crashes & \Crashes & \Crashes \\
\bottomrule
\end{tabular} 
\end{sc}
}
\end{center}
\end{table*}

%% file: tables/ap_1_1_target.tex
\begin{table*}[t!]
\begin{center}
\caption{Performance comparison of \textit{Intervention Targets Detection} under \textcolor{orange}{perfect intervention}~~(F1 Score $\uparrow$).}
\label{table:ap_1_1_target}
\centering
\begin{sc}
\resizebox{0.92\linewidth}{!}{
\setlength{\tabcolsep}{10.0pt}
\renewcommand{\arraystretch}{1.2}
\begin{tabular}{cccccccc}
\toprule

\multicolumn{1}{c}{\multirow{2}{*}{{Datasets}}} & {Earthquake} & {Survey} & {Asia} & {Sachs} & {Child} & {Insurance} & {Water} \\

\cmidrule{2-8}

& 1 & 1 & 2 & 2 & 4 & 5 & 6 \\

\cmidrule{1-8}

UT-IGSP & 0.40 & 0.50 & 0.44 & 0.40 & 0.36 & 0.22 & \Crashes \\
CITE & \cellcolor{cyan!8}{{1.00}} & \cellcolor{cyan!8}{{1.00}} & \cellcolor{cyan!8}{{1.00}} & 0.57 & 0.40 & {\cellcolor{green!5}{0.55}} & \Crashes \\
PreDITEr & 0.67 & \Crashes & \cellcolor{cyan!8}{{1.00}} & 0.67 & \Timeout & \Timeout & \Crashes \\

\cmidrule{1-8}

JCI-GOLEM & 0.67 & \cellcolor{cyan!8}{{1.00}} & 0.00 & 0.18 & 0.22 & 0.28 & 0.21 \\

JCI-HC & 0.50 & \cellcolor{cyan!8}{{1.00}} & 0.36 & 0.67 & 0.47 & 0.34 & 0.32 \\

JCI-BLIP & \cellcolor{cyan!8}{{1.00}} & \cellcolor{cyan!8}{{1.00}} & {\cellcolor{green!5}{0.80}} & {\cellcolor{green!5}{0.80}} & {\cellcolor{green!5}{0.53}} & 0.44 & {\cellcolor{green!5}{0.42}} \\

JCI-PC & \cellcolor{cyan!8}{{1.00}} & {\cellcolor{green!5}{0.67}} & 0.67 & 0.47 & 0.47 & 0.45 & \Timeout \\

\cmidrule{1-8}

\model & \cellcolor{cyan!8}{{1.00}} & \cellcolor{cyan!8}{{1.00}} & \cellcolor{cyan!8}{{1.00}} & \cellcolor{cyan!8}{{1.00}} & \cellcolor{cyan!8}{{1.00}} & \cellcolor{cyan!8}{{1.00}} & \cellcolor{cyan!8}{{1.00}} \\

\bottomrule

\end{tabular}
}
\end{sc}
\end{center}
\end{table*}

%% file: tables/ap_1_2_cpdag.tex
\begin{table*}[t!]
  \scriptsize
   \setlength{\tabcolsep}{6.0pt}
   \renewcommand{\arraystretch}{1.2}
  \caption{Performance comparison of $\mathcal{I}\textit{-CPDAG}$ under \textcolor{orange}{multiple interventions}~~(F1 Score $\uparrow$ / SHD $\downarrow$ / SID $\downarrow$).}
  \label{table:ap_1_2_cpdag}
\begin{center}
\begin{sc}
\resizebox{0.92\linewidth}{!}{
\begin{tabular}{c c | c c c c c c c c c}
    \toprule
    \makecell{Datasets\\ \#nodes / \#edges} & Metric & \model  &  JCI-GOLEM & JCI-PC & JCI-BLIP  & JCI-HC  &  IGSP$^{*}$  &  GIES  & UT-IGSP \\
    \midrule

\multicolumn{1}{c}{\multirow{3}{*}{{\makecell[c]{Earthquake \\ 5 / 4}}}} & F1 Score & {\cellcolor{green!5}{0.60}} & 0.00 & 0.55 & \cellcolor{cyan!8}{{1.00}} & 0.25 & \cellcolor{cyan!8}{{1.00}} & 0.40 & \cellcolor{cyan!8}{{1.00}} \\
& SHD & {\cellcolor{green!5}{4}} & 5 & {\cellcolor{green!5}{4}} & \cellcolor{cyan!8}{{0}} & 6 & \cellcolor{cyan!8}{{0}} & 7 & \cellcolor{cyan!8}{{0}} \\
& SID & 9 & 18 & {\cellcolor{green!5}{3}} & \cellcolor{cyan!8}{{0}} & 15 & \cellcolor{cyan!8}{{0}} & 6 & \cellcolor{cyan!8}{{0}} \\

\hline

\multicolumn{1}{c}{\multirow{3}{*}{{\makecell[c]{Survey \\ 6 / 6}}}} & F1 Score & {\cellcolor{green!5}{0.83}} & 0.00 & 0.40 & \cellcolor{cyan!8}{{0.91}} & {\cellcolor{green!5}{0.83}} & \cellcolor{cyan!8}{{0.91}} & 0.00 & \cellcolor{cyan!8}{{0.91}} \\
& SHD & {\cellcolor{green!5}{2}} & 9 & 4 & \cellcolor{cyan!8}{{1}} & \cellcolor{cyan!8}{{1}} & \cellcolor{cyan!8}{{1}} & 6 & \cellcolor{cyan!8}{{1}} \\
& SID & 10 & 30 & 18 & \cellcolor{cyan!8}{{4}} & {\cellcolor{green!5}{6}} & \cellcolor{cyan!8}{{4}} & 30 & \cellcolor{cyan!8}{{4}} \\
\hline

\multicolumn{1}{c}{\multirow{3}{*}{{\makecell[c]{Asia \\ 8 / 8}}}} & F1 Score & {\cellcolor{green!5}{0.86}} & 0.08 & 0.56 & \cellcolor{cyan!8}{{0.95}} & 0.56 & 0.46 & 0.44 & 0.77 \\
& SHD & {\cellcolor{green!5}{2}} & 9 & 7 & \cellcolor{cyan!8}{{1}} & 7 & 5 & 7 & 3 \\
& SID & {\cellcolor{green!5}{8}} & 33 & 17 & \cellcolor{cyan!8}{{5}} & 21 & 20 & 25 & 10 \\
\hline

\multicolumn{1}{c}{\multirow{3}{*}{{\makecell[c]{Sachs \\ 11 / 17}}}} & F1 Score & \cellcolor{cyan!8}{{0.94}} & 0.16 & 0.56 & {\cellcolor{green!5}{0.90}} & 0.54 & 0.51 & 0.32 & 0.72 \\
& SHD & \cellcolor{cyan!8}{{2}} & 18 & 12 & {\cellcolor{green!5}{3}} & 12 & 14 & 19 & 10 \\
& SID & \cellcolor{cyan!8}{{8}} & 61 & 38 & 21 & 48 & 35 & 43 & {\cellcolor{green!5}{15}} \\
\hline

\multicolumn{1}{c}{\multirow{3}{*}{{\makecell[c]{Child \\ 20 / 25}}}} & F1 Score & \cellcolor{cyan!8}{{0.61}} & 0.18 & 0.26 & 0.60 & 0.44 & {\cellcolor{green!5}{0.60}} & 0.33 & 0.51 \\
& SHD & \cellcolor{cyan!8}{{14}} & 44 & 45 & {\cellcolor{green!5}{15}} & 24 & 25 & 47 & 29 \\
& SID & 243 & 308 & 229 & 251 & 304 & \cellcolor{cyan!8}{{72}} & 170 & {\cellcolor{green!5}{117}} \\
\hline

\multicolumn{1}{c}{\multirow{3}{*}{{\makecell[c]{Insurance \\ 27 / 52}}}} & F1 Score & \cellcolor{cyan!8}{{0.73}} & 0.16 & 0.36 & {\cellcolor{green!5}{0.69}} & 0.52 & 0.34 & 0.45 & 0.33 \\
& SHD & \cellcolor{cyan!8}{{22}} & 63 & 72 & {\cellcolor{green!5}{27}} & 38 & 81 & 85 & 81 \\
& SID & {\cellcolor{green!5}{369}} & 624 & 472 & 398 & 425 & 470 & \cellcolor{cyan!8}{{276}} & 453 \\
\hline

\multicolumn{1}{c}{\multirow{3}{*}{{\makecell[c]{Water \\ 32 / 66}}}} & F1 Score & \cellcolor{cyan!8}{{0.62}} & 0.20 & \Timeout & {\cellcolor{green!5}{0.60}} & 0.24 & \Crashes & \Crashes & \Crashes \\
& SHD & \cellcolor{cyan!8}{{39}} & 81 & \Timeout & {\cellcolor{green!5}{42}} & 71 & \Crashes & \Crashes & \Crashes \\
& SID & \cellcolor{cyan!8}{{436}} & 605 & \Timeout & {\cellcolor{green!5}{478}} & 655 & \Crashes & \Crashes & \Crashes \\

\bottomrule

\end{tabular} 
}
\end{sc}
\end{center}
\vspace{-1em}
\end{table*}

%% file: tables/ap_1_2_target.tex
\begin{table*}[htbp!]
\begin{center}
\caption{Performance comparison of \textit{Intervention Targets Detection} under \textcolor{orange}{multiple interventions}~~(F1 Score $\uparrow$).}

\label{table:ap_1_2_target}
\centering
\begin{sc}
\resizebox{0.92\linewidth}{!}{
\setlength{\tabcolsep}{10.0pt}
\renewcommand{\arraystretch}{1.2}
\begin{tabular}{cccccccc}

\toprule

\multicolumn{1}{c}{\multirow{2}{*}{{Datasets}}} & {Earthquake} & {Survey} & {Asia} & {Sachs} & {Child} & {Insurance} & {Water} \\

\cmidrule{2-8}

& 3 & 3 & 6 & 6 & 4 & 5 & 12 \\

\cmidrule{1-8}

UT-IGSP & {\cellcolor{green!5}{0.86}} & {\cellcolor{green!5}{0.86}} & 0.67 & 0.57 & {\cellcolor{green!5}{0.80}} & 0.24 & \Crashes \\
CITE & 0.80 & \cellcolor{cyan!8}{{1.00}} & 0.67 & 0.43 & 0.33 & {\cellcolor{green!5}{0.71}} & \Crashes \\
PreDITEr & 0.80 & 0.50 & \Crashes & 0.29 & \Timeout & \Timeout & \Crashes \\

\cmidrule{1-8}

JCI-GOLEM & 0.33 & 0.80 & 0.62 & 0.47 & 0.21 & 0.14 & 0.23 \\

JCI-HC & 0.57 & \cellcolor{cyan!8}{{1.00}} & 0.67 & 0.67 & 0.50 & 0.33 & 0.38 \\

JCI-BLIP & \cellcolor{cyan!8}{{1.00}} & \cellcolor{cyan!8}{{1.00}} & 0.86 & \cellcolor{cyan!8}{{0.86}} & 0.44 & 0.42 & {\cellcolor{green!5}{0.57}} \\

JCI-PC & 0.80 & \cellcolor{cyan!8}{{1.00}} & {\cellcolor{green!5}{0.92}} & {\cellcolor{green!5}{0.75}} & 0.50 & 0.33 & \Timeout \\

\cmidrule{1-8}

\model & 0.80 & \cellcolor{cyan!8}{{1.00}} & \cellcolor{cyan!8}{{1.00}} & \cellcolor{cyan!8}{{0.86}} & \cellcolor{cyan!8}{{1.00}} & \cellcolor{cyan!8}{{0.86}} & \cellcolor{cyan!8}{{1.00}} \\

\bottomrule

\end{tabular}
}
\end{sc}
\end{center}
\end{table*}

%% file: tables/ap_2_1_cpdag.tex
\begin{table*}[t!]
  \scriptsize
   \setlength{\tabcolsep}{4.0pt}
   \renewcommand{\arraystretch}{1.2}
  \caption{Performance comparison of $\mathcal{I}\textit{-CPDAG}$ under \textcolor{orange}{limited interventional data sample sizes}~~(F1 $\uparrow$ / SHD $\downarrow$ / SID $\downarrow$).}
  \label{table:ap_2_1_cpdag}
\begin{center}
\begin{sc}
\resizebox{0.92\linewidth}{!}{
\begin{tabular}{c c | c c c c c c c c c c}
    \toprule
    \makecell{Datasets\\ \#nodes / \#edges} & Metric & \model  &  JCI-GOLEM & JCI-PC & JCI-BLIP  & JCI-HC  &  AVICI$^{*}$  & ENCO$^{*}$  & IGSP$^{*}$  &  GIES  & UT-IGSP \\
    \midrule

\multicolumn{1}{c}{\multirow{3}{*}{{\makecell[c]{Earthquake \\ 5 / 4}}}} 
& F1 Score & \cellcolor{cyan!8}{{1.00}} & 0.00 & {\cellcolor{green!5}{0.89}} & \cellcolor{cyan!8}{{1.00}} & \cellcolor{cyan!8}{{1.00}} & 0.00 & 0.67 & \cellcolor{cyan!8}{{1.00}} & \cellcolor{cyan!8}{{1.00}} & \cellcolor{cyan!8}{{1.00}} \\
& SHD & \cellcolor{cyan!8}{{0}} & 7 & {\cellcolor{green!5}{1}} & 0 & {\cellcolor{green!5}{1}} & 4 & 2 & \cellcolor{cyan!8}{{0}} & \cellcolor{cyan!8}{{0}} & \cellcolor{cyan!8}{{0}} \\
& SID & \cellcolor{cyan!8}{{0}} & 12 & \cellcolor{cyan!8}{{0}} & \cellcolor{cyan!8}{{0}} & \cellcolor{cyan!8}{{0}} & 10 & {\cellcolor{green!5}{2}} & \cellcolor{cyan!8}{{0}} & \cellcolor{cyan!8}{{0}} & \cellcolor{cyan!8}{{0}} \\
\cmidrule(r){1-12}

\multicolumn{1}{c}{\multirow{3}{*}{{\makecell[c]{Survey \\ 6 / 6}}}} 
& F1 Score & {\cellcolor{green!5}{0.73}} & 0.00 & 0.33 & {\cellcolor{green!5}{0.73}} & 0.00 & 0.00 & 0.60 & \cellcolor{cyan!8}{{0.91}} & \cellcolor{cyan!8}{{0.91}} & \cellcolor{cyan!8}{{0.91}} \\
& SHD & {\cellcolor{green!5}{2}} & 11 & 6 & {\cellcolor{green!5}{2}} & 7 & 5 & 3 & \cellcolor{cyan!8}{{1}} & \cellcolor{cyan!8}{{1}} & \cellcolor{cyan!8}{{1}} \\
& SID & {\cellcolor{green!5}{9}} & 30 & 22 & {\cellcolor{green!5}{9}} & 30 & 17 & 14 & \cellcolor{cyan!8}{{4}} & \cellcolor{cyan!8}{{4}} & \cellcolor{cyan!8}{{4}} \\
\cmidrule(r){1-12}

\multicolumn{1}{c}{\multirow{3}{*}{{\makecell[c]{Asia \\ 8 / 8}}}} 
& F1 Score & \cellcolor{cyan!8}{{0.80}} & 0.20 & 0.33 & \cellcolor{cyan!8}{{0.80}} & 0.33 & 0.00 & 0.70 & {\cellcolor{green!5}{0.77}} & 0.71 & {\cellcolor{green!5}{0.77}} \\
& SHD & \cellcolor{cyan!8}{{3}} & 11 & 7 & \cellcolor{cyan!8}{{3}} & 11 & 8 & 6 & \cellcolor{cyan!8}{{3}} & {\cellcolor{green!5}{4}} & \cellcolor{cyan!8}{{3}} \\
& SID & \cellcolor{cyan!8}{{8}} & 38 & 38 & 19 & 34 & 37 & {\cellcolor{green!5}{9}} & 10 & 18 & 10 \\
\cmidrule(r){1-12}

\multicolumn{1}{c}{\multirow{3}{*}{{\makecell[c]{Sachs \\ 11 / 17}}}} 
& F1 Score & \cellcolor{cyan!8}{{0.74}} & 0.00 & 0.44 & 0.21 & 0.38 & 0.08 & 0.17 & {\cellcolor{green!5}{0.67}} & 0.32 & 0.62 \\
& SHD & \cellcolor{cyan!8}{{7}} & 23 & 17 & 15 & 13 & 17 & 41 & {\cellcolor{green!5}{11}} & 21 & 12 \\
& SID & 31 & 62 & 42 & 55 & 48 & 60 & 47 & \cellcolor{cyan!8}{{24}} & 40 & {\cellcolor{green!5}{30}} \\
\cmidrule(r){1-12}

\multicolumn{1}{c}{\multirow{3}{*}{{\makecell[c]{Child \\ 20 / 25}}}} 
& F1 Score & \cellcolor{cyan!8}{{0.80}} & 0.16 & 0.25 & {\cellcolor{green!5}{0.75}} & 0.54 & 0.56 & 0.14 & 0.51 & 0.47 & 0.49 \\
& SHD & \cellcolor{cyan!8}{{10}} & 41 & 39 & \cellcolor{cyan!8}{{10}} & 20 & {\cellcolor{green!5}{16}} & 96 & 28 & 36 & 28 \\
& SID & 168 & 315 & 277 & 185 & 249 & 188 & 192 & {\cellcolor{green!5}{128}} & \cellcolor{cyan!8}{{125}} & 135 \\
\cmidrule(r){1-12}

\multicolumn{1}{c}{\multirow{3}{*}{{\makecell[c]{Insurance \\ 27 / 52}}}} 
& F1 Score & \cellcolor{cyan!8}{{0.64}} & 0.06 & 0.37 & {\cellcolor{green!5}{0.60}} & 0.57 & 0.17 & 0.12 & 0.30 & 0.53 & 0.38 \\
& SHD & \cellcolor{cyan!8}{{25}} & 65 & 66 & \cellcolor{cyan!8}{{25}} & {\cellcolor{green!5}{37}} & 51 & 162 & 82 & 58 & 75 \\
& SID & {\cellcolor{green!5}{395}} & 677 & 544 & 404 & 423 & 625 & 539 & 451 & \cellcolor{cyan!8}{{246}} & 438 \\
\cmidrule(r){1-12}

\multicolumn{1}{c}{\multirow{3}{*}{{\makecell[c]{Water \\ 32 / 66}}}} 
& F1 Score & \cellcolor{cyan!8}{{0.47}} & 0.19 & \Timeout & 0.40 & {\cellcolor{green!5}{0.45}} & \Crashes & \cellcolor{cyan!8}{{0.47}} & \Crashes & \Crashes & \Crashes \\
& SHD & \cellcolor{cyan!8}{{47}} & 71 & \Timeout & {\cellcolor{green!5}{51}} & 52 &  \Crashes & 53 &  \Crashes &  \Crashes &  \Crashes \\
& SID & 496 & 545 & \Timeout & 540 & \cellcolor{cyan!8}{{398}} &  \Crashes & {\cellcolor{green!5}{473}} &  \Crashes &  \Crashes &  \Crashes \\
\bottomrule

\end{tabular} 
}
\end{sc}
\end{center}
\end{table*}

%% file: tables/ap_2_1_target.tex
\begin{table*}[htbp!]
\begin{center}
\caption{Performance comparison of \textit{Intervention Targets Detection} under \textcolor{orange}{limited interventional data sample sizes}~~(F1 $\uparrow$).}

\label{table:ap_2_1_target}
\centering
\begin{sc}
\resizebox{0.92\linewidth}{!}{
\setlength{\tabcolsep}{10.0pt}
\renewcommand{\arraystretch}{1.6}
\begin{tabular}{cccccccc}

\toprule

\multicolumn{1}{c}{\multirow{2}{*}{{Datasets}}} & {Earthquake} & {Survey} & {Asia} & {Sachs} & {Child} & {Insurance} & {Water} \\

\cmidrule{2-8}

& 1 & 1 & 2 & 2 & 4 & 5 & 6 \\

\cmidrule{1-8}

UT-IGSP & 0.33 & \cellcolor{cyan!8}{{1.00}} & 0.57 & \cellcolor{cyan!8}{{1.00}} & 0.36 & 0.27 & \Crashes \\
CITE & \cellcolor{cyan!8}{{1.00}} & {\cellcolor{green!5}{0.67}} & {\cellcolor{green!5}{0.67}} & \cellcolor{cyan!8}{{1.00}} & 0.40 & 0.22 & \Crashes \\
PreDITEr & {\cellcolor{green!5}{0.67}} & \cellcolor{cyan!8}{{1.00}} & \Crashes & 0.40 & 0.33 & \Timeout & \Crashes \\

\cmidrule{1-8}

JCI-GOLEM & 0.00 & \cellcolor{cyan!8}{{1.00}} & 0.00 & 0.27 & 0.15 & 0.11 & 0.00 \\

JCI-HC & \cellcolor{cyan!8}{{1.00}} & {\cellcolor{green!5}{0.67}} & 0.50 & {\cellcolor{green!5}{0.80}} & 0.40 & 0.37 & 0.36 \\

JCI-BLIP & \cellcolor{cyan!8}{{1.00}} & \cellcolor{cyan!8}{{1.00}} & {\cellcolor{green!5}{0.67}} & 0.67 & 0.40 & {\cellcolor{green!5}{0.44}} & {\cellcolor{green!5}{0.40}} \\

JCI-PC & \cellcolor{cyan!8}{{1.00}} & \cellcolor{cyan!8}{{1.00}} & {\cellcolor{green!5}{0.67}} & {\cellcolor{green!5}{0.80}} & {\cellcolor{green!5}{0.44}} & 0.32 & \Timeout \\

\cmidrule{1-8}

\model & \cellcolor{cyan!8}{{1.00}} & \cellcolor{cyan!8}{{1.00}} & \cellcolor{cyan!8}{{1.00}} & \cellcolor{cyan!8}{{1.00}} & \cellcolor{cyan!8}{{1.00}} & \cellcolor{cyan!8}{{0.91}} & \cellcolor{cyan!8}{{1.00}} \\

\bottomrule

\end{tabular}
}
\end{sc}

\end{center}
\end{table*}

%% file: tables/ap_2_2_cpdag.tex
\begin{table*}[t!]
  \scriptsize
   \setlength{\tabcolsep}{4.0pt}
   \renewcommand{\arraystretch}{1.2}
  \caption{Performance comparison of $\mathcal{I}\textit{-CPDAG}$ under \textcolor{orange}{limited observational data sample sizes}~~(F1 $\uparrow$ / SHD $\downarrow$ / SID $\downarrow$).}
  \label{table:ap_2_2_cpdag}
\begin{center}
\begin{sc}
\resizebox{0.92\linewidth}{!}{
\begin{tabular}{c c | c c c c c c c c c c}
    \toprule
    \makecell{Datasets\\ \#nodes / \#edges} & Metric & \model  &  JCI-GOLEM & JCI-PC & JCI-BLIP  & JCI-HC  &  AVICI$^{*}$  & ENCO$^{*}$  & IGSP$^{*}$  &  GIES  & UT-IGSP \\
    \midrule

\multicolumn{1}{c}{\multirow{3}{*}{{\makecell[c]{Earthquake \\ 5 / 4}}}} & F1 Score & {\cellcolor{green!5}{0.86}} & 0.00 & {\cellcolor{green!5}{0.86}} & \cellcolor{cyan!8}{{1.00}} & \cellcolor{cyan!8}{{1.00}} & 0.17 & 0.57 & \cellcolor{cyan!8}{{1.00}} & 0.00 & \cellcolor{cyan!8}{{1.00}} \\
& SHD & {\cellcolor{green!5}{1}} & 6 & {\cellcolor{green!5}{1}} & \cellcolor{cyan!8}{{0}} & \cellcolor{cyan!8}{{0}} & 8 & 3 & \cellcolor{cyan!8}{{0}} & 4 & \cellcolor{cyan!8}{{0}} \\
& SID & {\cellcolor{green!5}{1}} & 16 & {\cellcolor{green!5}{1}} & \cellcolor{cyan!8}{{0}} & \cellcolor{cyan!8}{{0}} & 13 & 2 & \cellcolor{cyan!8}{{0}} & 16 & \cellcolor{cyan!8}{{0}} \\
\cmidrule(r){1-12}

\multicolumn{1}{c}{\multirow{3}{*}{{\makecell[c]{Survey \\ 6 / 6}}}} & F1 Score & \cellcolor{cyan!8}{{0.73}} & 0.00 & 0.40 & \cellcolor{cyan!8}{{0.73}} & 0.29 & 0.00 & 0.25 & 0.50 & 0.36 & 0.50 \\
& SHD & \cellcolor{cyan!8}{{2}} & 12 & {\cellcolor{green!5}{4}} & \cellcolor{cyan!8}{{2}} & 6 & 7 & 5 & {\cellcolor{green!5}{4}} & {\cellcolor{green!5}{4}} & {\cellcolor{green!5}{4}} \\
& SID & \cellcolor{cyan!8}{{11}} & 30 & 18 & {\cellcolor{green!5}{9}} & 29 & 22 & 18 & \cellcolor{cyan!8}{{11}} & 20 & \cellcolor{cyan!8}{{11}} \\
\cmidrule(r){1-12}

\multicolumn{1}{c}{\multirow{3}{*}{{\makecell[c]{Asia \\ 8 / 8}}}} & F1 Score & \cellcolor{cyan!8}{{0.77}} & 0.00 & 0.29 & 0.67 & 0.53 & 0.38 & {\cellcolor{green!5}{0.71}} & \cellcolor{cyan!8}{{0.77}} & 0.40 & \cellcolor{cyan!8}{{0.77}} \\
& SHD & {\cellcolor{green!5}{4}} & 10 & 7 & {\cellcolor{green!5}{4}} & 5 & 8 & {\cellcolor{green!5}{4}} & \cellcolor{cyan!8}{{3}} & 6 & \cellcolor{cyan!8}{{3}} \\
& SID & 18 & 46 & 39 & 18 & 31 & 34 & {\cellcolor{green!5}{14}} & \cellcolor{cyan!8}{{10}} & 42 & \cellcolor{cyan!8}{{10}} \\
\cmidrule(r){1-12}

\multicolumn{1}{c}{\multirow{3}{*}{{\makecell[c]{Sachs \\ 11 / 17}}}} & F1 Score & 0.45 & 0.09 & {\cellcolor{green!5}{0.63}} & 0.21 & 0.36 & 0.27 & 0.25 & \cellcolor{cyan!8}{{0.65}} & 0.35 & 0.47 \\
& SHD & {\cellcolor{green!5}{12}} & 17 & 13 & 15 & 14 & 14 & 29 & \cellcolor{cyan!8}{{9}} & 22 & {\cellcolor{green!5}{12}} \\
& SID & 49 & 54 & {\cellcolor{green!5}{24}} & 55 & 53 & 57 & 45 & \cellcolor{cyan!8}{{18}} & 37 & 39 \\
\cmidrule(r){1-12}

\multicolumn{1}{c}{\multirow{3}{*}{{\makecell[c]{Child \\ 20 / 25}}}} & F1 Score & \cellcolor{cyan!8}{{0.75}} & 0.20 & 0.17 & 0.62 & \cellcolor{cyan!8}{{0.75}} & \Crashes & 0.11 & {\cellcolor{green!5}{0.69}} & 0.38 & 0.65 \\
& SHD & \cellcolor{cyan!8}{{10}} & 44 & 50 & 17 & {\cellcolor{green!5}{11}} & \Crashes & 110 & 16 & 53 & 17 \\
& SID & 181 & 290 & 258 & 266 & 186 & \Crashes & 220 & \cellcolor{cyan!8}{{62}} & 139 & {\cellcolor{green!5}{99}} \\
\cmidrule(r){1-12}

\multicolumn{1}{c}{\multirow{3}{*}{{\makecell[c]{Insurance \\ 27 / 52}}}} & F1 Score & {\cellcolor{green!5}{0.61}} & 0.20 & 0.38 & \cellcolor{cyan!8}{{0.62}} & 0.52 & \Crashes & 0.15 & 0.54 & 0.41 & 0.56 \\
& SHD & \cellcolor{cyan!8}{{29}} & 69 & 64 & {\cellcolor{green!5}{30}} & 38 & \Crashes & 127 & 40 & 89 & 39 \\
& SID & {\cellcolor{green!5}{424}} & 630 & 538 & 451 & 453 & \Crashes & 540 & 364 & \cellcolor{cyan!8}{{347}} & 428 \\
\cmidrule(r){1-12}

\multicolumn{1}{c}{\multirow{3}{*}{{\makecell[c]{Water \\ 32 / 66}}}} & F1 Score & \cellcolor{cyan!8}{{0.50}} & 0.21 & \Timeout & {\cellcolor{green!5}{0.49}} & 0.35 & \Crashes & 0.40 & \Crashes & \Crashes & \Crashes \\
& SHD & \cellcolor{cyan!8}{{44}} & 81 & \Timeout & {\cellcolor{green!5}{46}} & 61 & \Crashes & 57 & \Crashes & \Crashes & \Crashes \\
& SID & \cellcolor{cyan!8}{{507}} & 620 & \Timeout & {\cellcolor{green!5}{510}} & 532 & \Crashes & 527 & \Crashes & \Crashes & \Crashes \\

\bottomrule

\end{tabular} 
}
\end{sc}
\end{center}
\vspace{-1em}
\end{table*}

%% file: tables/ap_2_2_target.tex
\begin{table*}[htbp!]
\begin{center}
\caption{Performance comparison of \textit{Intervention Targets Detection} under \textcolor{orange}{limited observational data sample sizes}~~(F1 $\uparrow$).}

\label{table:ap_2_2_target}
\centering
\begin{sc}
\resizebox{0.92\linewidth}{!}{
\setlength{\tabcolsep}{10.0pt}
\renewcommand{\arraystretch}{1.6}
\begin{tabular}{c|c|c|c|c|c|c|c}

\toprule

\multicolumn{1}{c|}{\multirow{2}{*}{{Datasets}}} & {Earthquake} & {Survey} & {Asia} & {Sachs} & {Child} & {Insurance} & {Water} \\

\cmidrule{2-8}

& 1 & 1 & 2 & 2 & 4 & 5 & 6 \\

\cmidrule{1-8}

UT-IGSP & {\cellcolor{green!5}{0.33}} & {\cellcolor{green!5}{0.67}} & 0.40 & {\cellcolor{green!5}{0.67}} & {\cellcolor{green!5}{0.81}} & 0.24 & \Crashes \\
CITE & \Crashes & \cellcolor{cyan!8}{{1.00}} & \Crashes & \cellcolor{cyan!8}{{1.00}} & 0.33 & 0.38 & \Crashes \\
PreDITEr & \Crashes & \cellcolor{cyan!8}{{1.00}} & \Crashes & \Crashes & 0.40 & \Timeout & \Crashes \\

\cmidrule{1-8}

JCI-GOLEM & 0.00 & 0.00 & 0.00 & 0.27 & 0.15 & 0.27 & 0.20 \\

JCI-HC & \cellcolor{cyan!8}{{1.00}} & {\cellcolor{green!5}{0.67}} & 0.44 & 0.50 & 0.57 & 0.37 & 0.30 \\

JCI-BLIP & \cellcolor{cyan!8}{{1.00}} & \cellcolor{cyan!8}{{1.00}} & {\cellcolor{green!5}{0.67}} & {\cellcolor{green!5}{0.67}} & 0.44 & {\cellcolor{green!5}{0.44}} & {\cellcolor{green!5}{0.37}} \\

JCI-PC & \cellcolor{cyan!8}{{1.00}} & {\cellcolor{green!5}{0.67}} & 0.40 & 0.57 & 0.29 & 0.30 & \Timeout \\

\cmidrule{1-8}

\model & \cellcolor{cyan!8}{{1.00}} & {\cellcolor{green!5}{0.67}} & \cellcolor{cyan!8}{{1.00}} & \cellcolor{cyan!8}{{1.00}} & \cellcolor{cyan!8}{{0.86}} & \cellcolor{cyan!8}{{0.59}} & \cellcolor{cyan!8}{{1.00}} \\

\bottomrule

\end{tabular}
}
\end{sc}
\end{center}
\end{table*}

%% file: appendix/7_relatedwork.tex
\section{More Related Work}

\textbf{Causal Discovery From Observational Data.} Traditional causal discovery algorithms infer causal structures from static observational data and fall into four types: constraint-based, score-based, gradient-based, and function causal model methods. Constraint-based algorithms, such as PC~\citep{spirtes1991algorithm}, FCI~\citep{spirtes2000causation}, and PC-Stable~\citep{colombo2014order}, utilize conditional independence relations inferred from the faithfulness assumption to establish graphical separations. Coupled with reliable conditional independence testing, these methods handle diverse data distributions and causal relations but are limited to identifying partial DAG. Score-based algorithms search for the optimal DAG under combination constraints using predefined scoring functions. Examples include GES~\citep{chickering2002optimal}, hill climbing~\citep{koller2009probabilistic}, and integer programming~\citep{cussens2012bayesian}. Gradient-based methods extend score-based approaches by transforming discrete search into continuous equality constraints. For example, NOTEARS~\citep{zheng2018dags} utilizes algebraic features of DAGs for a smooth global search, but assumes linear relations. GAE~\citep{ng2019graph}, GraN-DAG~\citep{yu2019dag} and SG-MCMC~\citep{annadani2023bayesdag} extend this to non-linear functions. Other models relax constraints~\citep{ng2020role} and use reinforcement learning~\citep{wang2021ordering} as a complement. Function causal model methods like LiNGAM~\citep{comon1994independent} and ANM~\citep{hoyer2008nonlinear} describe causal relations in specific functional forms, differentiating between DAGs within the same equivalence class with additional assumptions on data distribution or function classes.
\textit{Nevertheless, the identifiability of structures obtained solely from observational data without specific assumptions is theoretically limited.}

\input{tables/method_compare_causaldiscovery}

\textbf{Causal Discovery From Interventional Data.} Despite the potential identifiability of Directed Acyclic Graphs (DAGs) under sufficient~\iid~interventions and certain assumptions, limited literature explores this problems due to the diversity in types and strategies of intervention data. GIES~\citep{hauser2012characterization} extends the GES algorithm to intervention settings by leveraging the similarity between observed causal graphs and their corresponding intervened graphs. It follows the same two-step approach as the original process, utilizing forward and backward stage traversal through the search space until achieving local maximum scores. Focusing on perfect interventions, IGSP~\citep{wang2017permutation,yang2018characterizing} introduces a greedy sparse permutation method, offering an extension to general interventions. This involves optimizing the scoring function through a greedy approach and guiding permutation-based strategies for traversing the $\mathcal{I}$-MEC space. Subsequently, UT-IGSP~\citep{squires2020permutation} partially addresses unknown target intervention scenarios. Recently, ENCO~\citep{lippe2022efficient}, based on the concept of invariance, proposes a continuous optimization method to learn causal structures from intervention data, modeling the existence of edges and edge direction as separate parameters. 
\textit{Nonetheless, all of these methods typically focus on a limited set of intervention scenarios and fail to provide a unified approach for effectively addressing the problem of causal discovery from interventions.}

\input{tables/method_compare_interventionaldetect}

\textbf{Supervised Causal Discovery.} Supervised causal discovery aims to pre-access synthetic datasets of causal relations and learn causal directions in a supervised manner. Early research focused on pairwise relations, including RCC~\citep{lopez2015randomized} and MRCL~\citep{hill2019causal}. For multivariate causal learning, DAG-EQ~\citep{li2020supervised}, based on permutation-equivariant edge models, investigated supervised causal discovery using synthetic data from linear causal models, showing promising results. Subsequently, SLdisco~\citep{petersen2022causal} address shortcomings in sample size and graph density settings. It trained on synthetic linear Gaussian data to learn equivalence classes of causal graphs from observational datasets. Recently, CSIvA~\citep{ke2023learning} devised a transformer architecture with permutation invariance, extending the supervised learning paradigm to incorporate intervention data for increased flexibility. Concurrently, AVICI~\citep{lorch2022amortized}, leveraging amortized variational inference optimization, circumvented the challenges of structural search to predict causal structures directly from the provided dataset. Considering that the aforementioned methods only evaluate a limited set of real causal model data and may not effectively capture fundamental causal information, such as persistent dependencies~\citep{yu2016review} and directional asymmetry. ML4S~\citep{ma2022ml4s} and ML4C~\citep{dai2023ml4c}, operating under the supervised paradigm for skeleton and direction learning, achieved progressively significant identifiability. \textit{However, causal discovery for various types of intervention data scenarios remains inadequately addressed in the supervised paradigm. Our approach represents the first unified solution for supervised intervention causal discovery.}

In addition, we have also added two comparison tables,~\ref{tab:cd_comparison} and~\ref{tab:int_comparison}, to illustrate the conceptual and formal differences between some existing methods and our \model method and IS-MCMC strategy with examples.

%% file: tables/method_compare_causaldiscovery.tex
\begin{table*}[htbp!]
\caption{Comparison of different methods on causal discovery.}
\centering
\renewcommand\arraystretch{1.2}
\begin{sc}
\resizebox{1.0\linewidth}{!}{
\begin{tabular}{lcccccccccc}
\toprule
\multicolumn{1}{c}{\multirow{2}{*}{\makecell{\textbf{Methods}}}} & \multicolumn{1}{c}{\multirow{2}{*}{\makecell{\textbf{Learning Paradigm}}}} & \multicolumn{1}{c}{\multirow{2}{*}{\makecell{\textbf{Problem Space}}}} & \multicolumn{6}{c}{\textbf{Intervention Targets}} & \multicolumn{1}{c}{\multirow{2}{*}{\makecell{\textbf{Test-time Training}}}} \\

& & & Unknown & Known  & Hard & Soft & Single & Multiple & \\

\midrule
DAG-GFlowNet~\citep{deleu2022bayesian} & Unsupervised Generative Learning & Interventional (Passive) & \rxmark & \gcmark & \gcmark  & \rxmark & \gcmark & \gcmark & \rxmark \\
GFlowCausal~\citep{li2022gflowcausal} & Unsupervised Generative Learning & Observational & \rxmark & \rxmark & \rxmark & \rxmark & \rxmark & \rxmark & \rxmark \\
CORE~\citep{Sauter2024core} & Reinforcement Learning & Interventional (Active)  & \rxmark & \gcmark & \gcmark & \rxmark & \gcmark & \gcmark & \rxmark \\
meta-CGNN~\citep{ton2021meta} & Meta-Learning & Observational (Bivariate) & \rxmark & \rxmark & \rxmark & \rxmark & \rxmark & \rxmark & \rxmark \\
DAG-GNN~\citep{yu2019dag} & Unsupervised Learning & Observational & \rxmark & \rxmark & \rxmark & \rxmark & \rxmark & \rxmark & \rxmark  \\
\midrule
\texttt{TICL} (Our) & Supervised Learning & Interventional (Passive) & \gcmark & \gcmark & \gcmark & \gcmark & \gcmark & \gcmark & \gcmark \\
\bottomrule
\end{tabular}
}
\end{sc}
\label{tab:cd_comparison}
\end{table*}

%% file: tables/method_compare_interventionaldetect.tex
\begin{table*}[htbp!]
\centering
\caption{Comparison of IS-MCMC and Other MCMC Methods}
\renewcommand\arraystretch{1.2}
\begin{sc}
\resizebox{1.0\linewidth}{!}{
\begin{tabular}{lcc}
\toprule
\textbf{Feature} & \textbf{Other MCMC sampling methods}~\citep{ellis2008learning,choi2020bayesian} & IS-MCMC (Our) \\
\midrule
Sampling Space & Standard graph space (for posterior estimation) & Intervention-augmented graph space (JCI framework) \\
Intervention Handling & Cannot handle unknown intervention targets & Handles unknown intervention targets \\
Usage & Posterior inference & Generates diverse $\langle G_i, D_i \rangle$ training pairs for TTT \\
Constraints & Standard graph constraints & Enforces intervention constraints \\
Objective & Posterior estimation & Data generation for adaptation \\
\bottomrule
\end{tabular}
}
\end{sc}
\label{tab:int_comparison}
\end{table*}

%% file: appendix/8_morediscussion.tex
\section{More Discussion /  Limitation / Future Work}\label{morediscussion}

\subsection{Supervised Causal Discovery}

We advocate for causal discovery from intervention data in supervised paradigm. Identifiable causal relationships between causal graphs can be treated as ground truth labels, and corresponding data can be obtained through forward sampling. Importantly, this sampling strategy is cost-free, allowing us to obtain an infinite number of samples for training a model to compress the mapping between data and causal relationships. As the main study on data quantity, typically, the more data, the better performance. Furthermore, although we can freely generate parameterized causal graphs, for specific systems of interest, such as the aforementioned study on data quality, we consider training data derived using posterior distributions to be useful for causal discovery of the data of interest.

\subsection{Self-augmentation \versus Pre-training}

We noticed that all the SCL methods in the literature, during the training phase, integrate or generate datasets containing various causal mechanisms and their related data samples through simulators (if available), however, such generation mechanisms are entirely dependent on manual specification. Ideally, it is preferable to predefine rich synthetic graph structures (possible examples include ER / SF / Low Rank / Random graphs, etc.) and synthetic function types (including linear, quadratic, nonlinear, etc.) as much as possible. We refer to the process of synthesizing data during the training phase and then training them together as "pre-training", similar to today's language model pre-training, using massive training data to achieve generalization in different domains. However, causal structure learning is a high-risk field, where people often pay more attention to performance. The potential danger of this pre-training method is that if the causal mechanisms of the system of interest have never been seen before, there may be out-of-distribution generalization issues. Therefore, in this paper, we advocate acquiring training data more relevant to the test data after accessing the test data, providing high-quality training data by observing the posterior estimates of causal graphs to avoid the "domain shift" problem. We call this self-augmentation method, similar to in-context learning, adjusting on specific testing instance of interest in order to achieve high performance that is usable.

\subsection{Discrete \versus Continuous Data}

In this paper, we mainly focus on causal structure identification from discrete data. The \textit{Markov completeness} theorem states that for discrete or linear Gaussian data, we can only identify causal graphs up to their CPDAG. For continuous data satisfying linear non-Gaussian mechanisms or additive noise assumptions, we can identify more causal directions. However, our goal is to properly handle intervention data by using a supervised learning paradigm to learn identifiable causal structures. Therefore, more considerations may be needed to design corresponding learning tasks for continuous data. Nevertheless, in practice, our method can be naturally extended to continuous data by utilizing standard algorithms from existing literature, such as the Hilbert-Schmidt Independence Criterion~\citep{gretton2007kernel} to compute conditional dependencies of numerical variables. Despite this, more attention may need to be paid, which we consider as future work.

%% file: appendix/9_impact.tex
\section{Broader Impacts}

We propose \model, a new method for causal discovery from data with unknown intervention targets. Specifically, we introduce the concept of self-augmentation, using test data to obtain high-quality usable synthetic training data and identifying causal structures under the paradigm of supervised learning. Additionally, we advocate for joint causal inference as a fundamental framework for handling different intervention settings. Our numerical experiments demonstrate that our method outperforms existing intervention causal discovery methods on a wide range of datasets and metrics. However, despite the existence of supervisory identifiability, due to our adoption of traditional manual feature engineering, certain features may incur negative returns in specific situations for the final causal identifiability judgment. Therefore, future work will explore extending it to fully end-to-end neural network models, which may be more suitable by automatically capturing features instead of manual operations. Finally, as a potential benefit of the design of joint causal reasoning, we can naturally identify unknown intervention targets while discovering causality, enabling scientists to apply these methods to automated experiment design and scientific discovery.

%% file: ref.bib
@String(AAAI = {AAAI})

@String(JMLR = {JMLR})

@article{peters2015structural,
  title={Structural intervention distance for evaluating causal graphs},
  author={Peters, Jonas and B{\"u}hlmann, Peter},
  journal={Neural computation},
  volume={27},
  number={3},
  pages={771--799},
  year={2015},
  publisher={MIT Press}
}

@article{scutari2009learning,
  title={Learning Bayesian networks with the bnlearn R package},
  author={Scutari, Marco},
  journal={Journal of Statistical Software},
  year={2010}
}

@inproceedings{squires2020permutation,
  title={Permutation-based causal structure learning with unknown intervention targets},
  author={Squires, Chandler and Wang, Yuhao and Uhler, Caroline},
  booktitle={Conference on Uncertainty in Artificial Intelligence},
  pages={1039--1048},
  year={2020},
  organization={PMLR}
}

@article{brouillard2020differentiable,
  title={Differentiable causal discovery from interventional data},
  author={Brouillard, Philippe and Lachapelle, S{\'e}bastien and Lacoste, Alexandre and Lacoste-Julien, Simon and Drouin, Alexandre},
  journal={Advances in Neural Information Processing Systems},
  volume={33},
  pages={21865--21877},
  year={2020}
}

@article{lippe2022efficient,
  title={Efficient neural causal discovery without acyclicity constraints},
  author={Lippe, Phillip and Cohen, Taco and Gavves, Efstratios},
  journal={The Tenth International Conference on Learning Representations},
  year={2022}
}

@article{lorch2022amortized,
  title={Amortized Inference for Causal Structure Learning},
  author={Lorch, Lars and Sussex, Scott and Rothfuss, Jonas and Krause, Andreas and Sch{\"o}lkopf, Bernhard},
  journal={Advances in Neural Information Processing Systems},
  volume={35},
  year={2022}
}

@article{spirtes1991algorithm,
  title={An algorithm for fast recovery of sparse causal graphs},
  author={Spirtes, Peter and Glymour, Clark},
  journal={Social science computer review},
  volume={9},
  number={1},
  pages={62--72},
  year={1991},
  publisher={Sage Publications Sage CA: Thousand Oaks, CA}
}

@article{ke2023learning,
  title={Learning to induce causal structure},
  author={Ke, Nan Rosemary and Chiappa, Silvia and Wang, Jane and Goyal, Anirudh and Bornschein, Jorg and Rey, Melanie and Weber, Theophane and Botvinic, Matthew and Mozer, Michael and Rezende, Danilo Jimenez},
  journal={The Eleventh International Conference on Learning Representations},
  year={2023}
}

@article{mooij2020joint,
  title={Joint causal inference from multiple contexts},
  author={Mooij, Joris M and Magliacane, Sara and Claassen, Tom},
  journal={The Journal of Machine Learning Research},
  volume={21},
  number={1},
  pages={3919--4026},
  year={2020},
  publisher={JMLRORG}
}

@article{tsamardinos2006max,
  title={The max-min hill-climbing Bayesian network structure learning algorithm},
  author={Tsamardinos, Ioannis and Brown, Laura E and Aliferis, Constantin F},
  journal={Machine learning},
  volume={65},
  pages={31--78},
  year={2006},
  publisher={Springer}
}

@article{chickering2002learning,
  title={Learning equivalence classes of Bayesian-network structures},
  author={Chickering, David Maxwell},
  journal={The Journal of Machine Learning Research},
  volume={2},
  pages={445--498},
  year={2002},
  publisher={JMLR. org}
}

@article{scanagatta2015learning,
  title={Learning Bayesian networks with thousands of variables},
  author={Scanagatta, Mauro and de Campos, Cassio P and Corani, Giorgio and Zaffalon, Marco},
  journal={Advances in neural information processing systems},
  volume={28},
  year={2015}
}

@inproceedings{chen2016xgboost,
  title={Xgboost: A scalable tree boosting system},
  author={Chen, Tianqi and Guestrin, Carlos},
  booktitle={Proceedings of the 22nd acm sigkdd international conference on knowledge discovery and data mining},
  pages={785--794},
  year={2016}
}

@inproceedings{verma1990equivalence,
  title={Equivalence and synthesis of causal models},
  author={Verma, Thomas and Pearl, Judea},
  booktitle={Proceedings of the Sixth Annual Conference on Uncertainty in Artificial Intelligence},
  pages={255--270},
  year={1990}
}

@book{pearl2009causality,
  title={Causality},
  author={Pearl, Judea},
  year={2009},
  publisher={Cambridge university press}
}

@inproceedings{eaton2007exact,
  title={Exact Bayesian structure learning from uncertain interventions},
  author={Eaton, Daniel and Murphy, Kevin},
  booktitle={Artificial intelligence and statistics},
  pages={107--114},
  year={2007},
  organization={PMLR}
}

@inproceedings{tian2001causal,
  title={Causal discovery from changes},
  author={Tian, Jin and Pearl, Judea},
  booktitle={Proceedings of the Seventeenth conference on Uncertainty in artificial intelligence},
  pages={512--521},
  year={2001}
}

@article{myung2003tutorial,
  title={Tutorial on maximum likelihood estimation},
  author={Myung, In Jae},
  journal={Journal of mathematical Psychology},
  volume={47},
  number={1},
  pages={90--100},
  year={2003},
  publisher={Elsevier}
}

@inproceedings{meek1995causal,
  title={Causal inference and causal explanation with background knowledge},
  author={Meek, Christopher},
  booktitle={Proceedings of the Eleventh conference on Uncertainty in artificial intelligence},
  pages={403-–410},
  year={1995}
}

@article{cheng2001learning,
  title={Learning Bayesian Networks from Data: An Information-Theory Based Approach Department of Computing Sciences},
  author={Cheng, J and Greiner, R and Kelly, J and Bell, D and Liu, W},
  journal={Artificial intelligence},
  volume={137},
  pages={43--90},
  year={2002}
}

@article{madigan1995bayesian,
  title={Bayesian graphical models for discrete data},
  author={Madigan, David and York, Jeremy and Allard, Denis},
  journal={International Statistical Review/Revue Internationale de Statistique},
  pages={215--232},
  year={1995},
  publisher={JSTOR}
}

@article{vijaymeena2016survey,
  title={A survey on similarity measures in text mining},
  author={Vijaymeena, MK and Kavitha, K},
  journal={Machine Learning and Applications: An International Journal},
  volume={3},
  number={2},
  pages={19--28},
  year={2016}
}

@inproceedings{ding2020reliable,
  title={Reliable and efficient anytime skeleton learning},
  author={Ding, Rui and Liu, Yanzhi and Tian, Jingjing and Fu, Zhouyu and Han, Shi and Zhang, Dongmei},
  booktitle={Proceedings of the AAAI Conference on Artificial Intelligence},
  volume={34},
  number={06},
  pages={10101--10109},
  year={2020}
}

@article{zanga2022survey,
  title={A survey on causal discovery: Theory and practice},
  author={Zanga, Alessio and Ozkirimli, Elif and Stella, Fabio},
  journal={International Journal of Approximate Reasoning},
  volume={151},
  pages={101--129},
  year={2022},
  publisher={Elsevier}
}

@article{gretton2007kernel,
  title={A kernel statistical test of independence},
  author={Gretton, Arthur and Fukumizu, Kenji and Teo, Choon and Song, Le and Sch{\"o}lkopf, Bernhard and Smola, Alex},
  journal={Advances in neural information processing systems},
  volume={20},
  year={2007}
}

@inproceedings{smola2007hilbert,
  title={A Hilbert space embedding for distributions},
  author={Smola, Alex and Gretton, Arthur and Song, Le and Sch{\"o}lkopf, Bernhard},
  booktitle={International conference on algorithmic learning theory},
  pages={13--31},
  year={2007},
  organization={Springer}
}

@article{eaton2007belief,
  title={Belief net structure learning from uncertain interventions},
  author={Eaton, Daniel and Murphy, Kevin},
  journal={J Mach Learn Res},
  volume={1},
  pages={1--48},
  year={2007}
}

@book{agresti2012categorical,
  title={Categorical data analysis},
  author={Agresti, Alan},
  volume={792},
  year={2012},
  publisher={John Wiley \& Sons}
}

@book{cover1999elements,
  title={Elements of information theory},
  author={Cover, Thomas M},
  year={1999},
  publisher={John Wiley \& Sons}
}

@article{xiang2013lasso,
  title={A* Lasso for learning a sparse Bayesian network structure for continuous variables},
  author={Xiang, Jing and Kim, Seyoung},
  journal={Advances in neural information processing systems},
  volume={26},
  year={2013}
}

@article{gunther2022conditional,
  title={Conditional Independence Testing with Heteroskedastic Data and Applications to Causal Discovery},
  author={G{\"u}nther, Wiebke and Ninad, Urmi and Wahl, Jonas and Runge, Jakob},
  journal={Advances in Neural Information Processing Systems},
  volume={35},
  pages={16191--16202},
  year={2022}
}

@article{kuipers2017partition,
  title={Partition MCMC for inference on acyclic digraphs},
  author={Kuipers, Jack and Moffa, Giusi},
  journal={Journal of the American Statistical Association},
  volume={112},
  number={517},
  pages={282--299},
  year={2017},
  publisher={Taylor \& Francis}
}

@book{spirtes2000causation,
  title={Causation, prediction, and search},
  author={Spirtes, Peter and Glymour, Clark N and Scheines, Richard},
  year={2000},
  publisher={MIT press}
}

@article{colombo2014order,
  title={Order-independent constraint-based causal structure learning.},
  author={Colombo, Diego and Maathuis, Marloes H and others},
  journal={J. Mach. Learn. Res.},
  volume={15},
  number={1},
  pages={3741--3782},
  year={2014}
}

@article{chickering2002optimal,
  title={Optimal structure identification with greedy search},
  author={Chickering, David Maxwell},
  journal={Journal of machine learning research},
  volume={3},
  number={Nov},
  pages={507--554},
  year={2002}
}

@book{koller2009probabilistic,
  title={Probabilistic graphical models: principles and techniques},
  author={Koller, Daphne and Friedman, Nir},
  year={2009},
  publisher={MIT press}
}

@article{su2016improving,
  title={Improving structure mcmc for bayesian networks through markov blanket resampling},
  author={Su, Chengwei and Borsuk, Mark E},
  journal={The Journal of Machine Learning Research},
  volume={17},
  number={1},
  pages={4042--4061},
  year={2016},
  publisher={JMLR. org}
}

@article{zhang2024identifiability,
  title={Identifiability guarantees for causal disentanglement from soft interventions},
  author={Zhang, Jiaqi and Greenewald, Kristjan and Squires, Chandler and Srivastava, Akash and Shanmugam, Karthikeyan and Uhler, Caroline},
  journal={Advances in Neural Information Processing Systems},
  volume={36},
  year={2024}
}

@article{von2024nonparametric,
  title={Nonparametric identifiability of causal representations from unknown interventions},
  author={von K{\"u}gelgen, Julius and Besserve, Michel and Wendong, Liang and Gresele, Luigi and Keki{\'c}, Armin and Bareinboim, Elias and Blei, David and Sch{\"o}lkopf, Bernhard},
  journal={Advances in Neural Information Processing Systems},
  volume={36},
  year={2024}
}

@article{cussens2012bayesian,
  title={Bayesian network learning with cutting planes},
  author={Cussens, James},
  journal={arXiv preprint arXiv:1202.3713},
  year={2012}
}

@article{ng2020role,
  title={On the role of sparsity and dag constraints for learning linear dags},
  author={Ng, Ignavier and Ghassami, AmirEmad and Zhang, Kun},
  journal={Advances in Neural Information Processing Systems},
  volume={33},
  pages={17943--17954},
  year={2020}
}

@article{wang2021ordering,
  title={Ordering-based causal discovery with reinforcement learning},
  author={Wang, Xiaoqiang and Du, Yali and Zhu, Shengyu and Ke, Liangjun and Chen, Zhitang and Hao, Jianye and Wang, Jun},
  journal={arXiv preprint arXiv:2105.06631},
  year={2021}
}

@article{zheng2018dags,
  title={Dags with no tears: Continuous optimization for structure learning},
  author={Zheng, Xun and Aragam, Bryon and Ravikumar, Pradeep K and Xing, Eric P},
  journal={Advances in neural information processing systems},
  volume={31},
  year={2018}
}

@article{ng2019graph,
  title={A graph autoencoder approach to causal structure learning},
  author={Ng, Ignavier and Zhu, Shengyu and Chen, Zhitang and Fang, Zhuangyan},
  journal={arXiv preprint arXiv:1911.07420},
  year={2019}
}

@inproceedings{yu2019dag,
  title={DAG-GNN: DAG structure learning with graph neural networks},
  author={Yu, Yue and Chen, Jie and Gao, Tian and Yu, Mo},
  booktitle={International Conference on Machine Learning},
  pages={7154--7163},
  year={2019},
  organization={PMLR}
}

@article{comon1994independent,
  title={Independent component analysis, a new concept?},
  author={Comon, Pierre},
  journal={Signal processing},
  volume={36},
  number={3},
  pages={287--314},
  year={1994},
  publisher={Elsevier}
}

@article{hoyer2008nonlinear,
  title={Nonlinear causal discovery with additive noise models},
  author={Hoyer, Patrik and Janzing, Dominik and Mooij, Joris M and Peters, Jonas and Sch{\"o}lkopf, Bernhard},
  journal={Advances in neural information processing systems},
  volume={21},
  year={2008}
}

@article{hauser2012characterization,
  title={Characterization and greedy learning of interventional Markov equivalence classes of directed acyclic graphs},
  author={Hauser, Alain and B{\"u}hlmann, Peter},
  journal={The Journal of Machine Learning Research},
  volume={13},
  number={1},
  pages={2409--2464},
  year={2012},
  publisher={JMLR. org}
}

@inproceedings{li2023causal,
  title={Causal discovery from observational and interventional data across multiple environments},
  author={Li, Adam and Jaber, Amin and Bareinboim, Elias},
  booktitle={Thirty-seventh Conference on Neural Information Processing Systems},
  year={2023}
}

@article{lopez2015randomized,
  title={The Randomized Causation Coefficient.},
  author={Lopez-Paz, David and Muandet, Krikamol and Recht, Benjamin},
  journal={J. Mach. Learn. Res.},
  volume={16},
  pages={2901--2907},
  year={2015}
}

@article{hill2019causal,
  title={Causal learning via manifold regularization},
  author={Hill, Steven M and Oates, Chris J and Blythe, Duncan A and Mukherjee, Sach},
  year={2019},
  publisher={JMLR}
}

@article{li2020supervised,
  title={Supervised whole dag causal discovery},
  author={Li, Hebi and Xiao, Qi and Tian, Jin},
  journal={arXiv preprint arXiv:2006.04697},
  year={2020}
}

@article{petersen2022causal,
  title={Causal Discovery for Observational Sciences Using Supervised Machine Learning},
  author={Petersen, Anne Helby and Ramsey, Joseph and Ekstr{\o}m, Claus Thorn and Spirtes, Peter},
  journal={Journal of Data Science},
  volume={21},
  number={2},
  pages={255--280},
  year={2023},
  publisher={School of Statistics, Renmin University of China}
}

@article{yu2016review,
  title={A review on algorithms for constraint-based causal discovery},
  author={Yu, Kui and Li, Jiuyong and Liu, Lin},
  journal={arXiv preprint arXiv:1611.03977},
  year={2016}
}

@inproceedings{ma2022ml4s,
  title={Ml4s: Learning causal skeleton from vicinal graphs},
  author={Ma, Pingchuan and Ding, Rui and Dai, Haoyue and Jiang, Yuanyuan and Wang, Shuai and Han, Shi and Zhang, Dongmei},
  booktitle={Proceedings of the 28th ACM SIGKDD Conference on Knowledge Discovery and Data Mining},
  pages={1213--1223},
  year={2022}
}

@inproceedings{dai2023ml4c,
  title={Ml4c: Seeing causality through latent vicinity},
  author={Dai, Haoyue and Ding, Rui and Jiang, Yuanyuan and Han, Shi and Zhang, Dongmei},
  booktitle={Proceedings of the 2023 SIAM International Conference on Data Mining (SDM)},
  pages={226--234},
  year={2023},
  organization={SIAM}
}

@article{hill1952clinical,
  title={The clinical trial},
  author={Hill, A Bradford},
  journal={New England Journal of Medicine},
  volume={247},
  number={4},
  pages={113--119},
  year={1952},
  publisher={Mass Medical Soc}
}

@article{wang2017permutation,
  title={Permutation-based causal inference algorithms with interventions},
  author={Wang, Yuhao and Solus, Liam and Yang, Karren and Uhler, Caroline},
  journal={Advances in Neural Information Processing Systems},
  volume={30},
  year={2017}
}

@article{kalisch2007estimating,
  title={Estimating high-dimensional directed acyclic graphs with the PC-algorithm.},
  author={Kalisch, Markus and B{\"u}hlman, Peter},
  journal={Journal of Machine Learning Research},
  volume={8},
  number={3},
  year={2007}
}

@article{dibaeinia2020sergio,
  title={SERGIO: a single-cell expression simulator guided by gene regulatory networks},
  author={Dibaeinia, Payam and Sinha, Saurabh},
  journal={Cell systems},
  volume={11},
  number={3},
  pages={252--271},
  year={2020},
  publisher={Elsevier}
}

@article{varici2021scalable,
  title={Scalable intervention target estimation in linear models},
  author={Varici, Burak and Shanmugam, Karthikeyan and Sattigeri, Prasanna and Tajer, Ali},
  journal={Advances in Neural Information Processing Systems},
  volume={34},
  pages={1494--1505},
  year={2021}
}

@article{manghwar2020crispr,
  title={CRISPR/Cas systems in genome editing: methodologies and tools for sgRNA design, off-target evaluation, and strategies to mitigate off-target effects},
  author={Manghwar, Hakim and Li, Bo and Ding, Xiao and Hussain, Amjad and Lindsey, Keith and Zhang, Xianlong and Jin, Shuangxia},
  journal={Advanced science},
  volume={7},
  number={6},
  pages={1902312},
  year={2020},
  publisher={Wiley Online Library}
}

@article{ypma1995historical,
  title={Historical development of the Newton--Raphson method},
  author={Ypma, Tjalling J},
  journal={SIAM review},
  volume={37},
  number={4},
  pages={531--551},
  year={1995},
  publisher={SIAM}
}

@inproceedings{yang2024learning,
  title={Learning Unknown Intervention Targets in Structural Causal Models from Heterogeneous Data},
  author={Yang, Yuqin and Salehkaleybar, Saber and Kiyavash, Negar},
  booktitle={International Conference on Artificial Intelligence and Statistics},
  pages={3187--3195},
  year={2024},
  organization={PMLR}
}

@misc{minka2000estimating,
  title={Estimating a Dirichlet distribution},
  author={Minka, Thomas},
  year={2000},
  publisher={Technical report, MIT}
}

@inproceedings{varici2022intervention,
  title={Intervention target estimation in the presence of latent variables},
  author={Varici, Burak and Shanmugam, Karthikeyan and Sattigeri, Prasanna and Tajer, Ali},
  booktitle={Uncertainty in Artificial Intelligence},
  pages={2013--2023},
  year={2022},
  organization={PMLR}
}

@inproceedings{hagele2023bacadi,
  title={Bacadi: Bayesian causal discovery with unknown interventions},
  author={H{\"a}gele, Alexander and Rothfuss, Jonas and Lorch, Lars and Somnath, Vignesh Ram and Sch{\"o}lkopf, Bernhard and Krause, Andreas},
  booktitle={International Conference on Artificial Intelligence and Statistics},
  pages={1411--1436},
  year={2023},
  organization={PMLR}
}

@inproceedings{yang2018characterizing,
  title={Characterizing and learning equivalence classes of causal dags under interventions},
  author={Yang, Karren and Katcoff, Abigail and Uhler, Caroline},
  booktitle={International Conference on Machine Learning},
  pages={5541--5550},
  year={2018},
  organization={PMLR}
}

@article{mascaro2023bayesian,
  title={Bayesian causal discovery from unknown general interventions},
  author={Mascaro, Alessandro and Castelletti, Federico},
  journal={arXiv preprint arXiv:2312.00509},
  year={2023}
}

@article{annadani2023bayesdag,
  title={BayesDAG: Gradient-based posterior inference for causal discovery},
  author={Annadani, Yashas and Pawlowski, Nick and Jennings, Joel and Bauer, Stefan and Zhang, Cheng and Gong, Wenbo},
  journal={Advances in Neural Information Processing Systems},
  volume={36},
  pages={1738--1763},
  year={2023}
}

@article{ke2019learning,
  title={Learning neural causal models from unknown interventions},
  author={Ke, Nan Rosemary and Bilaniuk, Olexa and Goyal, Anirudh and Bauer, Stefan and Larochelle, Hugo and Sch{\"o}lkopf, Bernhard and Mozer, Michael C and Pal, Chris and Bengio, Yoshua},
  journal={arXiv preprint arXiv:1910.01075},
  year={2019}
}

@article{ke2023neural,
  title={Neural Causal Structure Discovery from Interventions},
  author={Ke, Nan Rosemary and Bilaniuk, Olexa and Goyal, Anirudh and Bauer, Stefan and Larochelle, Hugo and Sch{\"o}lkopf, Bernhard and Mozer, Michael Curtis and Pal, Christopher and Bengio, Yoshua},
  journal={Transactions on Machine Learning Research},
  year={2023}
}

@article{wu2024sample,
  title={Sample, estimate, aggregate: A recipe for causal discovery foundation models},
  author={Wu, Menghua and Bao, Yujia and Barzilay, Regina and Jaakkola, Tommi},
  journal={Transactions on Machine Learning Research},
  year={2025}
}

@inproceedings{ttt,
  title={Test-time training with self-supervision for generalization under distribution shifts},
  author={Sun, Yu and Wang, Xiaolong and Liu, Zhuang and Miller, John and Efros, Alexei and Hardt, Moritz},
  booktitle={International conference on machine learning},
  pages={9229--9248},
  year={2020},
  organization={PMLR}
}

@article{wang2020tent,
  title={Tent: Fully test-time adaptation by entropy minimization},
  author={Wang, Dequan and Shelhamer, Evan and Liu, Shaoteng and Olshausen, Bruno and Darrell, Trevor},
  journal={arXiv preprint arXiv:2006.10726},
  year={2020}
}

@inproceedings{wang2022continual,
  title={Continual test-time domain adaptation},
  author={Wang, Qin and Fink, Olga and Van Gool, Luc and Dai, Dengxin},
  booktitle={Proceedings of the IEEE/CVF Conference on Computer Vision and Pattern Recognition},
  pages={7201--7211},
  year={2022}
}

@article{liu2021ttt++,
  title={Ttt++: When does self-supervised test-time training fail or thrive?},
  author={Liu, Yuejiang and Kothari, Parth and Van Delft, Bastien and Bellot-Gurlet, Baptiste and Mordan, Taylor and Alahi, Alexandre},
  journal={Advances in Neural Information Processing Systems},
  volume={34},
  pages={21808--21820},
  year={2021}
}

@article{ramsey2012adjacency,
  title={Adjacency-faithfulness and conservative causal inference},
  author={Ramsey, Joseph and Zhang, Jiji and Spirtes, Peter L},
  journal={arXiv preprint arXiv:1206.6843},
  year={2012}
}

@article{reichenbach1991direction,
    title = {The Direction of Time},	
    author = {Reichenbach, Hans},
	journal = {Philosophy},
	number = {128},
	pages = {65--66},
	volume = {34},
	year = {1956}
}

@phdthesis{sun2023test,
  title={Test-Time Training},
  author={Sun, Yu},
  year={2023},
  school={University of California, Berkeley}
}

@inproceedings{cooper1999causal,
  title={Causal discovery from a mixture of experimental and observational data},
  author={Cooper, Gregory F and Yoo, Changwon},
  booktitle={Proceedings of the Fifteenth conference on Uncertainty in artificial intelligence},
  pages={116--125},
  year={1999}
}

@inproceedings{deleu2022bayesian,
  title={Bayesian structure learning with generative flow networks},
  author={Deleu, Tristan and G{\'o}is, Ant{\'o}nio and Emezue, Chris and Rankawat, Mansi and Lacoste-Julien, Simon and Bauer, Stefan and Bengio, Yoshua},
  booktitle={Uncertainty in Artificial Intelligence},
  pages={518--528},
  year={2022},
  organization={PMLR}
}

@article{li2022gflowcausal,
  title={GFlowCausal: Generative flow networks for causal discovery},
  author={Li, Wenqian and Li, Yinchuan and Zhu, Shengyu and Shao, Yunfeng and Hao, Jianye and Pang, Yan},
  journal={arXiv preprint arXiv:2210.08185},
  year={2022}
}

@inproceedings{Sauter2024core,
    title = {CORE: Towards Scalable and Efficient Causal Discovery with Reinforcement Learning},
    author = {Sauter, Andreas and Botteghi, Nicol\`{o} and Acar, Erman and Plaat, Aske},
    booktitle = {Proceedings of the 23rd International Conference on Autonomous Agents and Multiagent Systems},
    pages = {1664–1672},
    year = {2024},
    organization = {International Foundation for Autonomous Agents and Multiagent Systems}
}

@inproceedings{ton2021meta,
  title={Meta learning for causal direction},
  author={Ton, Jean-Fran{\c{c}}ois and Sejdinovic, Dino and Fukumizu, Kenji},
  booktitle={Proceedings of the AAAI conference on artificial intelligence},
  volume={35},
  number={11},
  pages={9897--9905},
  year={2021}
}

@article{ellis2008learning,
  title={Learning causal Bayesian network structures from experimental data},
  author={Ellis, Byron and Wong, Wing Hung},
  journal={Journal of the American Statistical Association},
  volume={103},
  number={482},
  pages={778--789},
  year={2008},
  publisher={Taylor \& Francis}
}

@article{choi2020bayesian,
  title={Bayesian causal structural learning with zero-inflated Poisson Bayesian networks},
  author={Choi, Junsouk and Chapkin, Robert and Ni, Yang},
  journal={Advances in neural information processing systems},
  volume={33},
  pages={5887--5897},
  year={2020}
}

@article{tandon2025end,
  title={End-to-End Test-Time Training for Long Context},
  author={Tandon, Arnuv and Dalal, Karan and Li, Xinhao and Koceja, Daniel and R{\o}d, Marcel and Buchanan, Sam and Wang, Xiaolong and Leskovec, Jure and Koyejo, Sanmi and Hashimoto, Tatsunori and others},
  journal={arXiv preprint arXiv:2512.23675},
  year={2025}
}

@inproceedings{dalal2025one,
  title={One-minute video generation with test-time training},
  author={Dalal, Karan and Koceja, Daniel and Xu, Jiarui and Zhao, Yue and Han, Shihao and Cheung, Ka Chun and Kautz, Jan and Choi, Yejin and Sun, Yu and Wang, Xiaolong},
  booktitle={Proceedings of the Computer Vision and Pattern Recognition Conference},
  pages={17702--17711},
  year={2025}
}

@inproceedings{sun2025learning,
  title={Learning to (Learn at Test Time): RNNs with Expressive Hidden States},
  author={Sun, Yu and Li, Xinhao and Dalal, Karan and Xu, Jiarui and Vikram, Arjun and Zhang, Genghan and Dubois, Yann and Chen, Xinlei and Wang, Xiaolong and Koyejo, Sanmi and others},
  booktitle={Forty-second International Conference on Machine Learning},
  year={2025}
}

@inproceedings{hardt2024test,
  title={Test-Time Training on Nearest Neighbors for Large Language Models},
  author={Hardt, Moritz and Sun, Yu},
  booktitle={The Twelfth International Conference on Learning Representations},
  year={2024}
}

@inproceedings{chen2025stttc,
  title={Learning with Calibration: Exploring Test-Time Computing of Spatio-Temporal Forecasting},
  author={Wei Chen and Yuxuan Liang},
  booktitle={The Thirty-ninth Annual Conference on Neural Information Processing Systems},
  year={2025}
}
